\newtheorem{theorem}{Theorem}
\newtheorem{assumption}{Assumption}
\newtheorem{proposition}{Proposition}
\newtheorem{corollary}{Corollary}[section]
\newtheorem{remark}{Remark}[section]
\newtheorem{lemma}{Lemma}[section]
\newtheorem{example}{Example}[section]
\newtheorem{definition}{Definition}
\newcommand{\indep}{\perp \!\!\! \perp}
\newcommand{\abs}[1]{\left\lvert#1\right\rvert}
\newcommand{\rbr}[1]{\left(#1\right)}
\newcommand{\sbr}[1]{\left[#1\right]}
\newcommand{\cbr}[1]{\left\{#1\right\}}
\newcommand{\N}{\mathbb{N}}
\newcommand{\R}{\mathbb{R}}
\newcommand{\E}{\mathbb{E}}
\newcommand{\pr}{\mathbb{P}}
\newcommand{\diag}{\mathrm{diag}}
\def\bzero{\bm{0}}
\def\ba{\bm{a}}
\def\bb{\bm{b}}
\def\be{\bm{e}}
\def\bu{\bm{u}}
\def\bw{\bm{w}}
\def\bx{\bm{x}}
\def\by{\bm{y}}
\def\bz{\bm{z}}
\def\bA{\bm{A}}
\def\bB{\bm{B}}
\def\bC{\bm{C}}
\def\bD{\bm{D}}
\def\bE{\bm{E}}
\def\bF{\bm{F}}
\def\bQ{\bm{Q}}
\def\bR{\bm{R}}
\def\bI{\bm{I}}
\def\bP{\bm{P}}
\def\bU{\bm{U}}
\def\bV{\bm{V}}
\def\bW{\bm{W}}
\def\bX{\bm{X}}
\def\bY{\bm{Y}}
\def\bZ{\bm{Z}}
\def\bbeta{\bm{\beta}}
\def\bxi{\bm{\xi}}
\def\bepsilon{\bm{\epsilon}}
\def\bvarepsilon{\bm{\varepsilon}}
\def\bSigma{\bm{\Sigma}}
\def\bLambda{\bm{\Lambda}}
\def\cG{\mathcal{G}}
\def\cL{\mathcal{L}}
\def\cN{\mathcal{N}}
\def\cW{\mathcal{W}}
\def\one{\mathbbm{1}}
\newcommand{\pconv}{\overset{\mathrm{p}}{\to}}
\newcommand{\dconv}{\overset{\mathrm{d}}{\to}}
\newcommand{\sign}{\mathsf{sign}}
\def\rank{\mathsf{rank}}
\def\FDR{\mathsf{FDR}}
\def\FDP{\mathsf{FDP}}
\def\Col{\mathsf{Col}}
\def\Var{\mathrm{Var}}
\def\Cov{\mathrm{Cov}}
\def\c{\mathsf{c}}
\newcommand{\deq}{\overset{\mathrm{d}}{=}}
\title{Provable FDR Control for Deep Feature Selection:\\ Deep MLPs and Beyond}
\author{Kazuma Sawaya}
\address{The University of Tokyo, Bunkyo, Tokyo, Japan}
\thanks{Contact: \url{sawaya@g.ecc.u-tokyo.ac.jp}\\
KS is supported by JST ACT-X (JPMJAX24CC) and Grant-in-Aid for JSPS Fellows (24KJ0841).}
\begin{document}

\begin{abstract}
We develop a flexible feature selection framework based on deep neural networks that approximately controls the false discovery rate (FDR), a measure of Type-I error. The method applies to architectures whose first layer is fully connected. From the second layer onward, it accommodates multilayer perceptrons (MLPs) of arbitrary width and depth, convolutional and recurrent networks, attention mechanisms, residual connections, and dropout. The procedure also accommodates stochastic gradient descent with data-independent initializations and learning rates. To the best of our knowledge, this is the first work to provide a theoretical guarantee of FDR control for feature selection within such a general deep learning setting.

Our analysis is built upon a multi-index data-generating model and an asymptotic regime in which the feature dimension \(n\) diverges faster than the latent dimension \(q^{*}\), while the sample size, the number of training iterations, the network depth, and hidden layer widths are left unrestricted. Under this setting, we show that each coordinate of the gradient-based feature-importance vector admits a marginal normal approximation, thereby supporting the validity of asymptotic FDR control.
As a theoretical limitation, we assume $\bm{B}$-right orthogonal invariance of the design matrix, and we discuss broader generalizations.
We also present numerical experiments that underscore the theoretical findings.
\end{abstract}

\maketitle

\section{Introduction}
Feature selection is the task of identifying features that are truly relevant to the response $\by$. 
It plays a dual role in modern machine learning: it underpins scientific discovery, such as identifying genes associated with Alzheimer’s disease, and it enhances the interpretability of predictive models. 
Two largely separate research threads have pursued this goal, namely \emph{high-dimensional statistics} and \emph{explainable AI (XAI)}.

On the statistics side, a central objective has been to provide theoretical guarantees on false discovery rate (FDR) control \citep{abramovich2006adapting,meinshausen2006high,barber2015controlling,candes2018panning,xing2023controlling,dai2023false,dai2023scale,du2023false}, often by debiasing sparse estimators like the LASSO \citep{tibshirani1996regression}. 
These guarantees, albeit rigorous, typically come at the expense of strong modeling assumptions (e.g., the true model follows a generalized linear model, or the feature distribution is known), which can diverge from complex real‑world phenomena and thereby cast doubt on the reliability of the selected features.

On the XAI side, a wealth of attribution methods, e.g., LIME \citep{ribeiro2016should}, SHAP \citep{lundberg2017unified}, random forest feature importance \citep{breiman2001random}, and saliency maps \citep{simonyan2013deep}, quantify the contribution of individual features in black‑box models. 
Thresholding such scores yields a practical selection heuristic, yet without guarantees on Type‑I error; thus, error control has remained elusive.

This paper bridges these threads. 
We propose a feature‑selection procedure for deep neural networks that approximately controls the FDR while retaining the modeling flexibility of modern architectures. 
Our approach is built upon \emph{input sensitivity} $\bxi^{(t)}\in\R^n$ defined by the gradient of the trained neural network’s output with respect to input, and a simple data‑splitting aggregation scheme. 
The analysis sheds light on when input sensitivity admits a normal approximation and how this leads to valid error control, irrespective of architecture details.
Our \textbf{main contributions} are: 
\begin{itemize}
    \item \emph{Flexible scope across architectures.}
    We develop a feature-selection method applicable to multilayer perceptrons (MLPs) with arbitrary width and depth, as well as convolutional and recurrent networks, attention mechanisms, residual connections, and dropout.

    \item \emph{Agnostic to the training protocol.}
    Our guarantees accommodate stochastic gradient descent with arbitrary, data-independent initialization schemes and learning rates.

    \item \emph{Normal approximation of input sensitivity.}
    We show that input sensitivity $\bxi^{(t)}$ is asymptotically normal when the feature dimension $n$ is sufficiently larger than the latent dimension $q^*$. 
    This holds regardless of the sample size $m$ and all the network parameters (e.g., width and depth).
    Also, this result holds at each training iteration $t$, enabling early stopping.

    \item \emph{Asymptotic FDR control via sample splitting.}
    By aggregating input sensitivity across splits, our procedure achieves asymptotic FDR control with a simple and implementable pipeline.

    \item \textit{General data‑generating process.} 
    The theory is established under a multi‑index model as the data-generating process with unknown nonlinearity.
    This is a flexible framework that captures rich latent structures beyond generalized linear models.

    \item \emph{New proof technique.} 
    At the crux of the analysis is a technique relying on the recursive inheritance of orthogonal invariance of the input sensitivity, which may be of independent interest.

    \item \emph{Empirical support.}
    Numerical experiments underscore that the normal approximation and FDR control hold under the stated conditions, aligning with the theory.
\end{itemize}

On the other hand, a major limitation of our framework lies in the assumption that the design matrix $\bX\in\R^{m\times n}$ is $\bB$-right orthogonally invariant (See Assumption \ref{asmp:DGP} (ii) for the definition). 
While this assumption accommodates time dependence, heavy-tailed distributions, and low-rank structures, it does not allow us to account for specific forms of feature correlations (See Appendix \ref{sec:ROI} for details).
To address this limitation, we discuss potential extensions to the general correlation structures in {Appendix} \ref{sec:relax}.

It should be noted that FDR control represents only the minimal requirement of avoiding excessive inclusion of irrelevant variables in feature selection. 
With respect to the complementary criterion of Type-II error (the ability to correctly identify truly relevant features), our analysis, like much of the existing literature, provides no theoretical guarantees, although numerical evaluations are reported. 
For instance, a procedure that selects nothing trivially attains an FDR of zero, yet suffers a Type-II error of one. 
A comprehensive assessment of feature selection methods therefore requires attention to both criteria.

\subsection{Related works}

\subsubsection{FDR control in feature selection.}
There are two main research streams of FDR control in feature selection: estimator-based FDR control and knockoff filters. 
In the former, e.g., for linear regression, whether coefficients are zero or nonzero determines which features should be selected, and FDR control can be achieved by invoking asymptotic normality of the estimators. 
Representative examples include the Gaussian mirror \citep{xing2023controlling} and data splitting \citep{dai2023false}. 
A key advantage of these approaches is scale-free property, since they avoid estimating the asymptotic variance, and they have been extended to generalized linear models \citep{dai2023scale}. 
Similar techniques have also been applied to sliced inverse regression under multi-index models \citep{zhao2022testing}, although the theoretical assumptions on the feature dimension and the sparsity level are rather stringent.

As the second line of research, the model-X knockoff \citep{candes2018panning} is groundbreaking in that it can control the FDR without assuming the structure of $\by \mid \bX$ where $\by\in\R^m$ is a response vector. 
However, it requires knowledge of the joint distribution of $\bX$, which is restrictive. 
To address this limitation, methods that estimate the distribution of $\bX$ using generative networks have been proposed \citep{jordon2019knockoffgan,romano2020deep}.
In addition, a sequential knockoff sampler for a given feature distribution has been proposed \citep{bates2021metropolized}.

\subsubsection{FDR control via neural networks.}
Several studies have proposed FDR control methods that employ neural networks, although without theoretical guarantees and typically within restricted classes of architectures. 
The Neural Gaussian Mirror \citep{xin2020ngm} defines a kernel-based conditional dependence measure and performs feature selection with MLPs. 
DeepPINK \citep{lu2018deeppink} is a knockoff framework with a specially designed network architecture in which the features are assumed to be jointly Gaussian, and DeepLINK \citep{zhu2021deeplink} relaxes this distributional restriction. 
These approaches have been shown empirically to achieve FDR control. 
Nevertheless, because theoretical guarantees are not provided, it remains unclear under what conditions FDR control is achievable, and the range of supported network architectures is limited.


\subsection{Notations}
Vectors and matrices are typeset in boldface (e.g., $\bm{x}, \bm{B}$).
For $n\in\N$, $[n]=\{1,\ldots,n\}$.
For $S\subset[n]$, $S^\c=[n]\setminus S$.
For $\ba \in \R^n$ and $S \subset [n]$, we denote by $\ba_S$ the subvector of $\ba$ consisting of the entries indexed by $S$.
For a matrix $\bA$, let $\bA^+$ be the Moore-Penrose pseudo-inverse of $\bA$.
$\Phi:\R\to\R$ and $\phi:\R\to\R$ are the cumulative distribution function and the density function of the standard Gaussian distribution, respectively.

\section{Problem formulation}

Suppose that we observe a response vector $\by=(y_1,\ldots,y_m)^\top\in\R^{m}$ together with a design matrix $\bX=(\bx_1,\ldots,\bx_m)^\top\in\R^{m\times n}$, where $m$ is the sample size and $n$ is the number of features.
Our goal is to select a relevant subset of feature indices from $[n]$ that are associated with $\by$.
A desirable feature selection procedure controls the false discovery rate (FDR) \citep{benjamini1995controlling}, defined by
\begin{align}
    \textsf{FDR}=\E[\textsf{FDP}],
    \quad \textsf{FDP}=\frac{\#\{j\notin S:j\in\hat{S}\}}{\#\{j:j\in\hat{S}\}\vee1},
\end{align}
where $S$ denotes the set of indices corresponding to the relevant features, and $\hat{S}$ is the set of selected indices.

We consider the situation where $(\by,\bX)$ follows the multi-index model defined below.
\begin{definition}[Multi-index model]
\label{def:MIM}
We say that a pair of the response vector $\by\in\R^m$ and the design matrix $\bX\in\R^{m\times n}$ follows a \emph{multi-index model} if there exists a weight matrix $\bB\in\R^{n\times q^*}$, a deterministic function $g:\R^{q^*}\times\R\to\R$, and noise variables $\bvarepsilon=(\varepsilon_1,\ldots,\varepsilon_m)^\top$ independent of $\bX$ such that, for each $i\in[m]$,
\begin{align}
\label{eq:MIM}
    y_i=g(\bB^\top\bx_i,\varepsilon_i).
\end{align}
\end{definition} 
Let $\bB=(\bb_1,\ldots,\bb_n)^\top$.
This formulation \eqref{eq:MIM} encompasses linear regression, logistic regression, and certain neural network models with $q^*$ hidden units in the first layer.
The column space of $\bB$ is often referred to as the {central subspace} of $\bX$.

We now consider fitting a neural network to the observations.
Let $f_{\cW}:\R^{n}\to\R$ denote a neural network parameterized by the set $\cW$, e.g., including weight matrices $(\bW_1,\ldots,\bW_L)$, where $L$ is the number of layers.
Given a loss function $\cL:\R\times\R\to\R$, the empirical risk minimization problem is
\begin{align}
\label{eq:loss}
    \min_{\cW}\sum_{i=1}^m \cL(y_i,f_{\cW}(\bx_i)).
\end{align}
For example, we may use the quadratic loss $\cL(u,v)=(u-v)^2$ for regression and the cross-entropy loss $\cL(u,v)=\log(1+\exp(v))-uv$ for binary classification.
The optimization is performed by (stochastic) gradient descent, starting from initial parameters $\cW^{(0)}$ and yielding updated parameters $\cW^{(t)}$ after $t$ iterations. 

After training, we evaluate feature importance by the partial derivative of the fitted network with respect to each input feature.
Specifically, for $t\in\N$ and $j\in[n]$, define
\begin{align}
\label{eq:feature-importance}
    \xi_j^{(t)} \equiv \sum_{i=1}^m \frac{\partial}{\partial x_{ij}} f_{\cW^{(t)}}(\bx_i).
\end{align}
If the fitted network is differentiable almost everywhere, the input sensitivity $\xi_j^{(t)}$ can serve as a measure of the contribution of the $j$-th feature to the response $\by$.
After computing $\bxi^{(t)}=(\xi_1^{(t)},\ldots,\xi_n^{(t)})^\top$, we then determine an appropriate cutoff to control the FDR.

\section{Theoretical background}
\label{sec:theory}
To control the FDR, we need marginal distributional characterizations of the input sensitivity $\bxi^{(t)}$ under the null.
In this section, we characterize the distribution of suitably transformed $\bxi^{(t)}$ for each $n,m,q^*,t\in\N$.
Based on this, we establish the marginal asymptotic normality of the feature importance uniformly under the null as $n\to\infty$ with $q^*=o(n)$, for arbitrary $m$ and $t$.

In what follows, we formally define the index set $S^\c$ of null features.

\begin{definition}
    We say that $x_j$ for $j\in[n]$ is a null feature if,
    \begin{align}
    \label{eq:CI}
        y \indep x_j \mid \bx_{-j}.
    \end{align}
    We then define $S^\c$ as the index set of all null features.
\end{definition}

\begin{assumption}[Multi-index model and $\bB$-ROI design]
\label{asmp:DGP}
(i) The observation $(\by,\bX)\in\R^{m}\times\R^{m\times n}$ follows the multi-index model in Definition~\ref{def:MIM} with a full rank $\bB\in\R^{n\times q^*}$. \\
(ii) The design matrix $\bX\in\R^{m\times n}$ satisfies $\bX\overset{\rm d}{=}\bX\bU$ for any orthogonal matrix $\bU\in\R^{n\times n}$ such that $\bU\bB=\bB$.
We shall call this property $\bm B$-ROI in the sequel.
\end{assumption}

The design assumption (ii) permits row-wise dependence, heavy-tailed marginals, and low-rank structures. 
Still, it rules out certain forms of column dependence, discrete-valued entries, and multi-modal distributions.
See Appendix~\ref{sec:ROI} for further discussion.
We also discuss the robustness to elliptical designs in Section~\ref{sec:relax}.
Additional mild regularity conditions under which \eqref{eq:CI} holds if and only if $\bb_j=\bzero_{q^*}$ are provided in Appendix~\ref{sec:iff-cond-indep}.

\begin{assumption}[Loss function]
\label{asmp:loss}
    Suppose that the loss function $\cL:\R\times\R\to\R$ has a finite partial derivative with respect to the second argument ${\partial_2}\cL(u,v)$ for almost every $v\in\R$.
\end{assumption}
This covers most losses encountered in practice.
\begin{assumption}[Architecture of the neural network]
\label{asmp:arch}
The first hidden layer of the network exists and is taken to be dense and fully connected, and we denote its weight matrix by \(\bW_{1}\in\R^{n\times q}\). The dependence of the entire network $f_{\cW}(\bx)$ on any input \(\bx\in\R^n\) arises solely through the transformed representation \(\bW_{1}^\top \bx\).
\end{assumption}
This assumption still encompasses multilayer perceptrons with arbitrary width, depth, and activation functions.
From the second layer onward, we allow any structures, including residual connections and dropout, which is a pragmatic modeling choice. 

Intuitively, the linear representation $\bW_1^\top\bx$ serves as a surrogate for $\bB^\top\bx$ in the multi-index model, while the network’s subsequent nonlinearity approximates $g(\cdot)$. 
The matrix sizes of $\bW_1$ and $\bB$ need not match.

\begin{assumption}[SGD options]
\label{asmp:init}
(i) Every element of initial parameters $\cW^{(0)}$ is independent of $(\by,\bX)$, 
$\bW_1^{(0)}$ and $\cW_{\setminus1}^{(0)}$ are independent,
and $\bW_1^{(0)}$ satisfies $\tilde\bU\bW_1^{(0)}\overset{\rm d}{=}\bW_1^{(0)}$ for any orthogonal matrix $\tilde\bU\in\R^{n\times n}$.\\
(ii) Let the mini-batch indices $I_t\subseteq[m]$ and the learning rate $\eta_t>0$ be independent of $(\by,\bX,\cW^{(0)})$ for all $t\in\N$.
\end{assumption}
For instance, the entrywise i.i.d. Gaussian $\bW_1^{(0)}$ with any common variance, including He- and Xavier-initializations,  satisfies the assumption.

Under these assumptions, we obtain the following.

\begin{proposition}
\label{prop:unif}
    Under Assumptions \ref{asmp:DGP}--\ref{asmp:init}, for each $m,n,q^*\in\N$ and iteration $t\in\N$ of the SGD with/without replacement,
    conditioning on the learning-rate and mini-batch schedule,
    \begin{align}
        \frac{\bP_{\bB}^\perp\bxi^{(t)}}{\|\bP_{\bB}^\perp\bxi^{(t)}\|}
    \end{align}
    is uniformly distributed on the unit sphere lying in $\mathrm{Col}(\bB)^\perp$.
    Here, $\mathrm{Col}(\bB)^\perp$ is the orthogonal complement of the column space of $\bB$,
    and $\bP_{\bB}^\perp=\bI_n-\bB(\bB^\top\bB)^{+}\bB^\top$ is the orthogonal projection matrix onto $\mathrm{Col}(\bB)^\perp$.
\end{proposition}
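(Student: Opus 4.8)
The plan is to exploit a symmetry argument: show that the distribution of $\bP_\bB^\perp\bxi^{(t)}$ is invariant under the action of every orthogonal $\bU$ fixing $\bB$ (equivalently, fixing $\mathrm{Col}(\bB)$ pointwise), and then invoke the standard fact that the only distribution on a sphere invariant under the orthogonal group of that sphere is the uniform one. The crux, then, is the equivariance claim: for any orthogonal $\bU\in\R^{n\times n}$ with $\bU\bB=\bB$, we should establish that
\begin{align}
\bxi^{(t)}(\bX\bU,\by,\cW^{(0)}) \eqd \bU^\top\bxi^{(t)}(\bX,\by,\cW^{(0)})
\end{align}
jointly with the conditioning data, from which $\bP_\bB^\perp\bxi^{(t)}\eqd\bU^\top\bP_\bB^\perp\bxi^{(t)}$ follows because $\bU^\top$ commutes with $\bP_\bB^\perp$ (as $\bU$ fixes $\mathrm{Col}(\bB)$, hence also its orthogonal complement).

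First I would reduce to showing the \emph{deterministic} identity that running SGD on the transformed data $(\bX\bU,\by)$ with transformed initialization $\bU^\top\bW_1^{(0)}$ (and identical $\cW_{\setminus1}^{(0)}$, mini-batches $I_t$, learning rates $\eta_t$) produces first-layer weights $\bU^\top\bW_1^{(t)}$ and identical downstream parameters $\cW_{\setminus1}^{(t)}$, and consequently $\bxi^{(t)}$ transforms as $\bU^\top\bxi^{(t)}$. This is where Assumption~\ref{asmp:arch} is essential: since $f_{\cW}(\bx)$ depends on $\bx$ only through $\bW_1^\top\bx$, feeding input $\bU\bx$ with weights $\bU^\top\bW_1$ gives $(\bU^\top\bW_1)^\top(\bU\bx)=\bW_1^\top\bx$, so the network output — and hence the loss — is unchanged. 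I would then check by induction on $t$ that the gradient with respect to $\bW_1$ transforms as $\nabla_{\bW_1}\mapsto\bU^\top\nabla_{\bW_1}$ (chain rule through the linear map $\bx\mapsto\bU\bx$) while gradients with respect to $\cW_{\setminus1}$ are literally unchanged, so the SGD updates preserve the relation $\bW_1^{(t)}\mapsto\bU^\top\bW_1^{(t)}$, $\cW_{\setminus1}^{(t)}\mapsto\cW_{\setminus1}^{(t)}$ at every step. The feature-importance vector \eqref{eq:feature-importance} is itself a sum of input gradients $\partial f_{\cW^{(t)}}/\partial\bx_i$, which by the same chain-rule computation picks up a left multiplication by $\bU^\top$; note the key point that this holds \emph{regardless} of whatever happens from the second layer onward, since all that downstream structure is untouched by $\bU$.

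Next I would assemble the distributional statement. By Assumption~\ref{asmp:DGP}(ii), $\bX\bU\eqd\bX$; since $\bU\bB=\bB$ we have $\bB^\top(\bX\bU)^\top=\bB^\top\bX^\top$ componentwise, so $\by=g(\bB^\top\bx_i,\varepsilon_i)$ is \emph{exactly} the same function of the transformed design, i.e.\ the joint law of $(\by,\bX\bU)$ equals that of $(\by,\bX)$. By Assumption~\ref{asmp:init}(i), $\bU^\top\bW_1^{(0)}\eqd\bW_1^{(0)}$, independently of $\cW_{\setminus1}^{(0)}$ and of $(\by,\bX)$; and by Assumption~\ref{asmp:init}(ii) the schedule $(I_t,\eta_t)$ is independent of everything and fixed by the conditioning. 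Combining, the pair $\bigl(\bxi^{(t)}(\bX,\by,\cW^{(0)}),\,\text{schedule}\bigr)$ and $\bigl(\bU^\top\bxi^{(t)}(\bX,\by,\cW^{(0)}),\,\text{schedule}\bigr)$ have the same conditional law given the schedule, which is precisely the required equivariance. Projecting and using $\bP_\bB^\perp\bU^\top=\bU^\top\bP_\bB^\perp$ gives $\bP_\bB^\perp\bxi^{(t)}\eqd\bU^\top\bP_\bB^\perp\bxi^{(t)}$ for all such $\bU$; since these $\bU^\top$ act on $\mathrm{Col}(\bB)^\perp$ as the full orthogonal group $O(\mathrm{Col}(\bB)^\perp)$, the normalized vector is uniform on the unit sphere in $\mathrm{Col}(\bB)^\perp$ (after conditioning on the event $\bP_\bB^\perp\bxi^{(t)}\neq\bzero$, which is implicit in the normalization).

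\textbf{The main obstacle} I anticipate is making the induction on the SGD trajectory fully rigorous in the presence of dropout and mini-batch stochasticity: one must argue that the \emph{random} dropout masks and mini-batch draws can be coupled identically across the two runs (they concern only downstream units and sample indices, neither of which is touched by $\bU$), so that the equivariance holds pathwise on the coupled probability space and not merely in distribution. A secondary subtlety is non-differentiability of the network (e.g.\ ReLU), where $\partial f/\partial\bx_i$ must be interpreted via a fixed measurable selection of subgradients; one should check this selection is itself equivariant under the reparametrization, which it is because the transformed network computes the identical scalar function of $\bx$ and hence has the identical (sub)differential structure, pushed through the linear map $\bU$.
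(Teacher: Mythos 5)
Your proposal is correct and follows essentially the same route as the paper's own proof: establish the deterministic equivariance of the SGD first-layer trajectory under the reparametrization $(\bX,\bW_1^{(0)})\mapsto(\bX\bU,\bU^\top\bW_1^{(0)})$ using Assumption~\ref{asmp:arch}, combine it with the distributional equalities $(\by,\bX\bU)\eqd(\by,\bX)$ from Assumption~\ref{asmp:DGP} and $\bU^\top\bW_1^{(0)}\eqd\bW_1^{(0)}$ from Assumption~\ref{asmp:init}, argue inductively over $t$, and conclude via the uniqueness of the rotation-invariant measure on the sphere in $\mathrm{Col}(\bB)^\perp$. The subtleties you flag --- coupling the mini-batch/dropout randomness across the two runs and the measurable selection of subgradients --- are handled in the paper implicitly by conditioning on the schedule and by the fact that the two parameterizations compute the identical scalar function of the data, exactly as you describe.
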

Figure~\ref{fig:prop} provides an illustration of Proposition~\ref{prop:unif}.
It shows that 
${\bP_{\bB}^\perp \bxi^{(t)}}/{\|\bP_{\bB}^\perp \bxi^{(t)}\|}$ 
is uniformly distributed around $\bB$ while maintaining a constant angle. 
Since $\bB$ itself reflects the intrinsic importance of each feature, 
this observation supports the consistency of interpreting $\bxi^{(t)}$ as feature importance.

\begin{figure}[tbp]
  \begin{center}
    \includegraphics[clip,width=0.4\columnwidth]{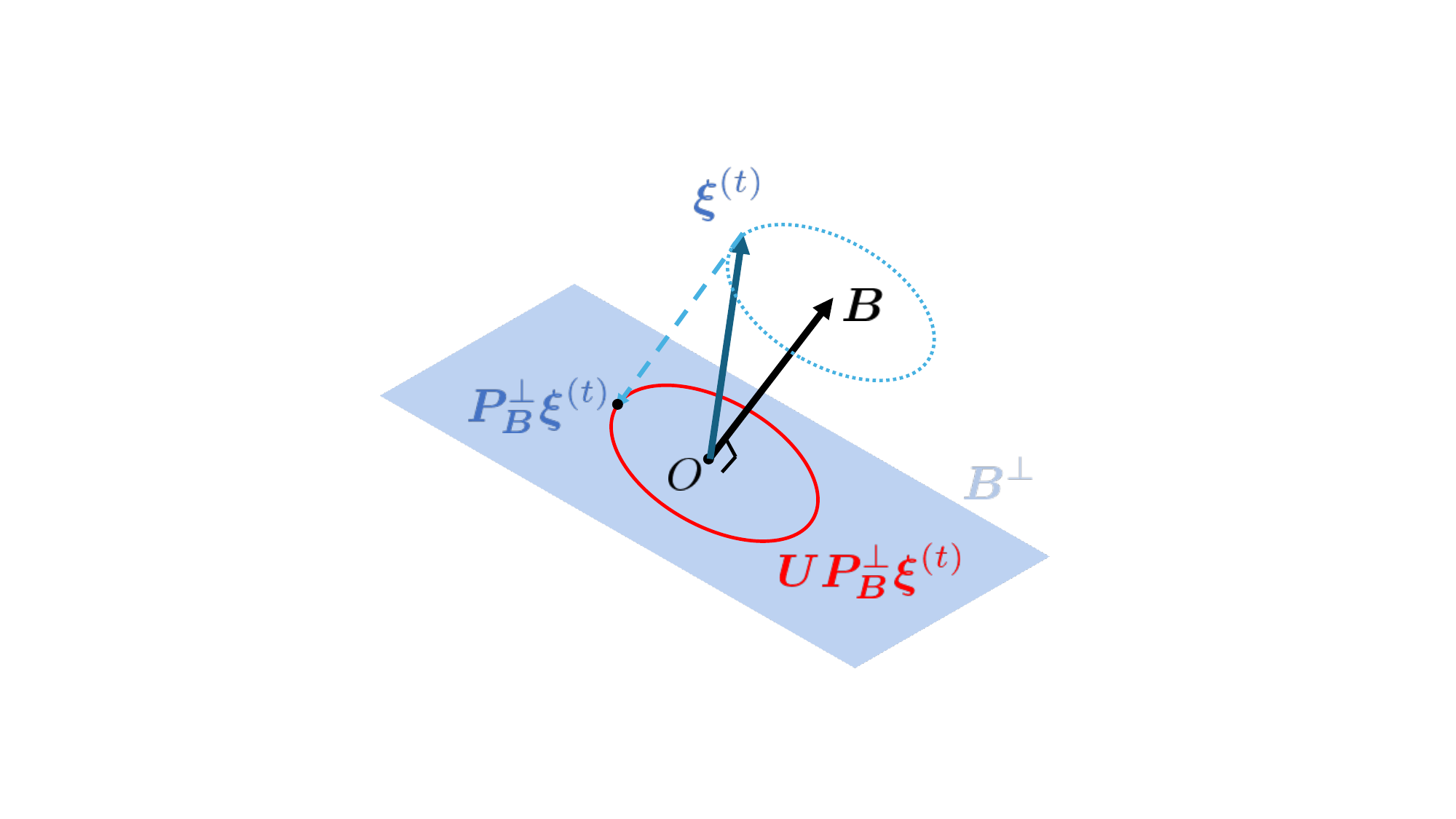}
  \end{center}
  \caption{
  A schematic illustration of $\bxi^{(t)}$ for $q^* = 1$ and $n = 3$.
  $\bU\in\R^{n\times n}$ is any orthogonal matrix such that $\bU\bB=\bB$ (i.e., rotation around $\bB$).
}\label{fig:prop}
\end{figure}

The proof is completed by replacing the orthogonal invariance to be established for
$\bxi^{(t)}$ with respect to $\bB$ by an equivalent invariance of the first-layer
weights $\bW_1^{(t)}$ via the chain rule, and then showing recursively in $t$
that this invariance is preserved by the update.

From spherical uniformity it follows that, letting $\bZ\sim\cN(\mathbf{0},\bI_n)$,
\[
  \frac{\bP_{\bB}^\perp \bxi^{(t)}}
       {\|\bP_{\bB}^\perp \bxi^{(t)}\|}
  \ \overset{\mathrm{d}}{=}\ 
  \frac{\bP_{\bB}^\perp \bZ}
       {\|\bP_{\bB}^\perp \bZ\|}.
\]
Together with the fact that for each null feature index $j\in S^{\mathrm{c}}$
we have $\bb_j=\mathbf{0}_{q^\star}$, this yields the following asymptotic
normality:

\begin{theorem}
\label{thm:asyN}
Under Assumptions \ref{asmp:DGP}--\ref{asmp:init},
for any null feature index $j\in S^\c$, we have
\begin{align}
\label{eq:asyN}
    \frac{\sqrt{n}\xi_j^{(t)}}{\|\bP_{\bB}^\perp\bxi^{(t)}\|}\dconv\cN(0,1),
\end{align}
as $n\to\infty$ while $q^*=o(n)$.
Furthermore, the convergence holds {uniformly in $j\in S^\c$} in the sense that
\begin{align}
\label{eq:asyN-unif}
    \sup_{j\in S^\c}\ \sup_{u\in\R}
  \abs{\,
  \pr\rbr{\frac{\sqrt{n}\,\xi_j^{(t)}}{\|\bP_{\bB}^\perp\bxi^{(t)}\|}\le u}-\Phi(u)
  \,}
  \ \to\ 0.
\end{align}
\end{theorem}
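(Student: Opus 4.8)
The plan is to reduce the theorem to Proposition~\ref{prop:unif} together with an elementary Slutsky argument, and to obtain the uniformity in \eqref{eq:asyN-unif} from the exchangeability of the null coordinates of a spherically uniform vector, which collapses the double supremum to a single Kolmogorov distance. \emph{Step 1: reduction to a Gaussian ratio.} Fix $j\in S^\c$ (if $S^\c=\emptyset$ the claim is vacuous). As recorded just before the statement, $\bb_j=\bzero_{q^*}$, so the standard basis vector $\be_j$ lies in $\mathrm{Col}(\bB)^\perp$; hence $\bP_\bB^\perp\be_j=\be_j$, and since $\bP_\bB^\perp$ is symmetric, $\xi_j^{(t)}=\be_j^\top\bxi^{(t)}=(\bP_\bB^\perp\be_j)^\top\bxi^{(t)}=(\bP_\bB^\perp\bxi^{(t)})_j$. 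Therefore the statistic in \eqref{eq:asyN} equals $\sqrt n\,v_j$, where $\bv:=\bP_\bB^\perp\bxi^{(t)}/\|\bP_\bB^\perp\bxi^{(t)}\|$. The conditional law of $\bv$ in Proposition~\ref{prop:unif} does not depend on the learning-rate/mini-batch schedule, so $\bv$ is unconditionally uniform on the unit sphere of $\mathrm{Col}(\bB)^\perp$, and by the stochastic representation displayed just before the theorem, $\bv\eqd\bP_\bB^\perp\bZ/\|\bP_\bB^\perp\bZ\|$ with $\bZ\sim\cN(\bzero,\bI_n)$. Using $\be_j\in\mathrm{Col}(\bB)^\perp$ once more gives $(\bP_\bB^\perp\bZ)_j=Z_j$, whence
\[
  \frac{\sqrt n\,\xi_j^{(t)}}{\|\bP_\bB^\perp\bxi^{(t)}\|}\ \eqd\ \frac{\sqrt n\,Z_j}{\|\bP_\bB^\perp\bZ\|}.
\]

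\emph{Step 2: Slutsky.} The numerator $Z_j\sim\cN(0,1)$ for every $n$. Since $\bB$ has full rank $q^*$ (Assumption~\ref{asmp:DGP}(i)), $\bP_\bB^\perp$ is an orthogonal projection of rank $n-q^*$, so $\|\bP_\bB^\perp\bZ\|^2\sim\chi^2_{n-q^*}$; consequently $\|\bP_\bB^\perp\bZ\|^2/n$ has mean $(n-q^*)/n\to1$ and variance $2(n-q^*)/n^2\to0$, so that $\|\bP_\bB^\perp\bZ\|/\sqrt n\pconv1$ as $n\to\infty$ with $q^*=o(n)$. The numerator and denominator are dependent, but this is immaterial: applying Slutsky's theorem to the pair $(Z_j,\ \|\bP_\bB^\perp\bZ\|/\sqrt n)$ yields $\sqrt n\,Z_j/\|\bP_\bB^\perp\bZ\|\dconv\cN(0,1)$, which is \eqref{eq:asyN}.

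\emph{Step 3: uniformity.} Decompose $\bP_\bB^\perp\bZ=Z_j\be_j+\bW$, where $\bW$ is the projection of $\bZ$ onto $\mathrm{Col}(\bB)^\perp\cap\be_j^\perp$; then $\|\bP_\bB^\perp\bZ\|^2=Z_j^2+\|\bW\|^2$ with $Z_j\sim\cN(0,1)$ and $\|\bW\|^2\sim\chi^2_{n-q^*-1}$ independent. This joint description is the same for every $j\in S^\c$, so the distribution function $F_n(u):=\pr\big(\sqrt n\,\xi_j^{(t)}/\|\bP_\bB^\perp\bxi^{(t)}\|\le u\big)$ does not depend on $j$, and the left-hand side of \eqref{eq:asyN-unif} equals $\sup_{u\in\R}|F_n(u)-\Phi(u)|$. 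By Step~2, $F_n\to\Phi$ pointwise, and since $\Phi$ is continuous, P\'olya's theorem upgrades this to uniform convergence, giving \eqref{eq:asyN-unif}.

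\emph{Main obstacle.} All the genuine difficulty is upstream, in Proposition~\ref{prop:unif} (the recursive inheritance of $\bB$-orthogonal invariance through the SGD iterates), which the present argument invokes as a black box; granted that, the remaining steps are routine. Within this argument the only points to keep honest are that $\{\bP_\bB^\perp\bxi^{(t)}=\bzero\}$ be a null event so the normalization is well defined (this is implicit in Proposition~\ref{prop:unif}), and the harmless dimension bookkeeping $n-q^*$ versus $n-q^*-1$, which washes out precisely because $q^*=o(n)$.
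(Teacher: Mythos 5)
Your proof is correct, and while it reaches the same conclusion through the same initial reduction (Proposition~\ref{prop:unif} plus $\bb_j=\bzero$ for $j\in S^\c$ to identify the law with $\sqrt n\,Z_j/\|\bP_\bB^\perp\bZ\|$), it diverges from the paper in two places, each in a slightly more elementary direction. For the denominator concentration, the paper applies the Hanson--Wright inequality (Lemma~\ref{lem:hanson-wright}) to obtain sub-exponential tail bounds on $\|\bP_\bB^\perp\bz\|/\sqrt n$, whereas you use the exact fact that $\|\bP_\bB^\perp\bZ\|^2\sim\chi^2_{n-q^*}$ and apply Chebyshev; both yield the needed $\|\bP_\bB^\perp\bZ\|/\sqrt n\pconv 1$ under $q^*=o(n)$, and Hanson--Wright is, strictly speaking, overkill here since the projection is deterministic and the Gaussian gives an exact $\chi^2$. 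For the uniformity in \eqref{eq:asyN-unif}, the paper bounds $\sup_u|\pr(z_j\le u\sigma_n)-\Phi(u)|$ directly via an event $E_n=\{|\sigma_n-1|<\epsilon\}$ and the mean-value theorem applied to $\Phi$, then sends $\epsilon\downarrow 0$; you instead observe that the orthogonal decomposition $\bP_\bB^\perp\bZ=Z_j\be_j+\bW$ with $Z_j\perp\!\!\!\perp\|\bW\|^2\sim\chi^2_{n-q^*-1}$ has an identical joint law for every $j\in S^\c$, so the supremum over $j$ is trivial, and then invoke P\'olya's theorem. Your route is cleaner: it makes the $j$-independence of the distribution function explicit rather than leaving it implicit in a $j$-uniform bound, and avoids the explicit $\epsilon$-manipulation. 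The one small caveat you rightly flag — that Proposition~\ref{prop:unif} is stated conditionally on the schedule and must be integrated out — is handled correctly: the conditional law is schedule-independent, hence equals the unconditional law.
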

From this theorem, the asymptotic null distribution of 
${\xi_j^{(t)}}/{\|\bP_{\bB}^\perp \bxi^{(t)}\|}$ 
is identified. 
However, estimation of $\|\bP_{\bB}^\perp \bxi^{(t)}\|$ is challenging since it depends on the unknown structure $\bB$. 
In the next section, we demonstrate that by employing the data-splitting technique, the multiplicative factor that appears uniformly across all $j \in [n]$ can be ignored (\emph{the scale-free property}), which enables valid FDR control.

Proofs of the assertions in this section are deferred to Appendix \ref{sec:proof}.
Figure \ref{fig:asyN} exhibits that numerical results confirm the asymptotic normality of Theorem \ref{thm:asyN}.

The technical contribution underlying Proposition~\ref{prop:unif} and 
Theorem~\ref{thm:asyN} is to extend the orthogonal-invariance-based theory of 
marginal asymptotic normality---originally developed by \citet{zhao2022asymptotic} 
and later shown to be broadly applicable by \citet{sawaya2024high}---from loss 
minimizers (i.e., M-estimators) to the individual iterates of loss-minimization 
algorithms. As a consequence, our asymptotic normality results do \textit{not} 
rely on 
\begin{enumerate}
    \item[(i)] the existence or uniqueness of a loss minimizer,
    \item[(ii)] convexity of the loss function,
    \item[(iii)] or convergence of the optimization path.
\end{enumerate}

The availability of early stopping is also appealing, as it may help reduce 
type-II error. Points (ii) and (iii) are particularly important in the 
training of deep neural networks, where convergence to a global minimum is 
rarely guaranteed. 

\begin{example}[Linear regression]
Among neural networks satisfying Assumption~\ref{asmp:arch}, the simplest example is the linear model
$f_{\mathcal W}(\bx) = \bw_1^\top \bx$, i.e., single-layer neural networks.
Since the multi-index model in Definition~\ref{def:MIM} includes the linear regression model
$y_i = \bbeta^\top \bx_i + \varepsilon_i$ satisfying $\E[\varepsilon_i\mid\bx_i]=0$,
our result in Theorem~\ref{thm:asyN} can be viewed as extending asymptotic normality results in high dimensions
for arbitrary $M$-estimators satisfying Assumption~\ref{asmp:loss} in linear regression
to every iteration of stochastic gradient descent.
\end{example}

\begin{example}[Logistic regression]
    The multi-index model in Definition~\ref{def:MIM} also encompasses the logistic regression model
$y_i = g(\bbeta^\top \bx_i, \varepsilon_i)$,
where $g(u,v) = \mathbbm{1}\{(1+e^{-u})^{-1} > v\}$ and
$\varepsilon_i \overset{\mathrm{iid}}{\sim} \mathrm{Unif}(0,1)$ independent of $\bx_i$.
By taking $f_{\mathcal W}(\bx) = \bw_1^\top \bx$ and the cross-entropy loss
$\mathcal L(u,v) = \log(1+\exp(v)) - uv$,
our assumptions also cover the gradient descent iterates used to compute the maximum likelihood estimator.
As shown by \citet{candes2020phase}, the maximum likelihood estimator in logistic regression
may fail to exist.
Notably, because our theory applies to each iterate of SGD, it remains valid even in such settings
where the optimization trajectory diverges and convergence is not guaranteed.
\end{example}

\section{Methodology}
In this section, we construct the actual feature selection procedure. 
It consists of two stages: (I) computing an importance statistic $M_j$ for each feature that possesses desirable distributional properties, and
(II) selecting features by applying an appropriate thresholding rule that controls the FDR.
The properties required for the statistic to be scale-free in (I) are as follows:
\begin{itemize}
  \item[(a)] If $j \in S^\c$, then $M_j$ follows (asymptotically) a distribution symmetric around zero.
  \item[(b)] If $j \in S$, then $M_j$ takes large positive values.
\end{itemize}

{According to Theorem~\ref{thm:asyN}, $\xi_j^{(t)}$ satisfies (a) but not (b), 
whereas $|\xi_j^{(t)}|$ satisfies (b) but not (a).}
Therefore, we adopt a data-splitting approach \citep{dai2023false,dai2023scale}. 
Specifically, we randomly divide the data into two equal parts and, using the quantities $\xi_{j1}^{(t)}$ and $\xi_{j2}^{(t)}$ computed from each split, construct the importance statistics, for $j\in[n]$,
\[
M_j = \mathsf{sign}\rbr{\xi_{j1}^{(t)}\xi_{j2}^{(t)}} \psi\left(|\xi_{j1}^{(t)}|,\, |\xi_{j2}^{(t)}|\right),
\]
where $\psi:\R_{\ge0}\times\R_{\ge0}\to\R_{\ge0}$ is a user-specified function assumed to be non-negative, symmetric, positive homogeneous, and monotone in each input, e.g., $\psi(u,v)=uv$, $\min(u,v)$, and $u+v$.

The crucial point here is that, unlike p-value based methods for FDR control, which require knowledge of the entire null distribution, this approach only relies on the \textit{symmetry} under the null. 
As a result, no information about the asymptotic variance or convergence rate of the limiting null distribution of $M_j$ is required.

The intuition behind why symmetry alone suffices is as follows. 
We can determine the cutoff for the nominal level $\alpha\in(0,1)$ as 
\begin{align}
\label{eq:tau}
    \tau_\alpha = \min\left\{u > 0: \widehat{\FDP}(u) \equiv \frac{ \# \{j: M_j < -u\}}{ \# \{j: M_j > u\} \vee 1} \leq \alpha\right\}.
\end{align}
This is expected to control the FDR because, if $M_j$ is symmetric around zero under the null, we have
\begin{align}
\FDP(u)
&=\frac{ \# \{j\in S^\c: M_j > u\}}{ \# \{j: M_j > u\} \vee 1} 
\overset{\rm d}{=}\frac{ \# \{j\in S^\c: M_j < -u\}}{ \# \{j: M_j > u\} \vee 1} \\
&\leq \frac{ \# \{j: M_j < -u\}}{ \# \{j: M_j > u\} \vee 1}=\widehat{\FDP}(u).
\end{align}

Overall procedure is summarized in Algorithm \ref{alg:seq}.

\begin{algorithm}[H]
\begin{algorithmic}[1]
\REQUIRE{Nominal level $\alpha\in(0,1)$,
the observation $(\by,\bX)\in\R^m\times\R^{m\times n}$, the stopping time $T\in\N$}, and $\psi:\R_{\ge0}\times\R_{\ge0}\to\R_{\ge0}$.
\STATE {Split} the data into two equal-sized halves $(\bX^{(1)},\by^{(1)})$ and $(\bX^{(2)},\by^{(2)})$.
\STATE For each part of the data, calculate $\bxi_{\cdot 1}^{(T)}$ and $\bxi_{\cdot 2}^{(T)}$ as in \eqref{eq:feature-importance} after $T$ updates of the SGD.
\STATE Obtain the importance statistics $M_j = \mathsf{sign}\rbr{\xi_{j1}^{(T)}\xi_{j2}^{(T)}} \, \psi\left(|\xi_{j1}^{(T)}|,\, |\xi_{j2}^{(T)}|\right)$ for each $j\in[n]$.
\STATE Select features above the cutoff $\tau_\alpha=\min\{u>0:\widehat{\FDP}(u)\le \alpha\}$.
\end{algorithmic}
\caption{pseudocode for the selection procedure}
\label{alg:seq}
\end{algorithm}

This selection procedure asymptotically controls the FDR at a predetermined level under additional assumptions.

\begin{assumption}
\label{assump:homogeneity}
$\psi(\cdot,\cdot)$ is non-negative, symmetric about two inputs, and monotone in each input.
Additionally, there exists $r>0$ such that for all $a\ge 0$ and $(s,t)\in[0,\infty)^2$,
\[
\psi(as,at) \;=\; a^{r}\,\psi(s,t).
\]
\end{assumption}
This is the formal requirement imposed on the user-specified function $\psi$.
\begin{assumption}
\label{asmp:FDR}
    In Algorithm \ref{alg:seq}, suppose that $(\bX^{(1)},\by^{(1)})\overset{\rm d}{=}(\bX^{(2)},\by^{(2)})$.
    Additionally, assume that the construction of $\bxi_{\cdot 1}^{(T)}$ and $\bxi_{\cdot 2}^{(T)}$ is the same; for example, the randomness of initializations, learning rate, and loss function are common.
\end{assumption}
This assumption is necessary for the validity of data splitting.
\begin{assumption}
\label{asmp:signal}
Let $S^+(u)=\#\{j\in S: M_j>u\}$, $S^-(u)=\#\{j\in S: M_j<-u\}$, and $S^\pm(u)=S^+(u)+S^-(u)$.
There exist $c,\theta,\rho\in(0,1)$ such that, for $K_n=\lfloor cn\rfloor$, as $n\to\infty$,
\begin{align}
    \pr\rbr{S^\pm(u_{K_n})\ge\theta K_n,
    ~~ \inf_{0\le u\le u_{K_n}}\frac{S^+(u)}{S^\pm(u)}\ge\rho}\to1,
\end{align}
where $u_{K_n}$ is the $K_n$-th largest magnitude among $M_j$'s.
Moreover, the following holds with the given nominal level $\alpha$:
\begin{equation}\label{eq:Feas}
(\alpha\rho-(1-\rho))\,\theta \;>\; \frac{1-\alpha}{2}\,(1-\theta).
\end{equation}
\end{assumption}
Such assumptions frequently appear in the related literature. 
Compared with Assumption 3.2 in \citet{dai2023scale}, which requires that a fixed proportion of the true signals diverge, our assumption can be regarded as considerably weaker.

Assumption \ref{asmp:signal} requires that, within the top $K_n$ statistics $M_j$'s ranked by magnitude, at least a fixed fraction corresponds to non-null and, moreover, lies on the positive side.
That is, among the non-null variables, at least a certain fraction is required to possess genuinely positive importance scores, and this requirement is expected to hold increasingly as the iteration $t$ advances.
Denote $n_0=|S^\c|$.
\begin{theorem}
\label{thm:FDR}
    Suppose Assumptions \ref{asmp:DGP}--\ref{asmp:signal} hold and $n_0/n\to\pi_0\in(0,1]$. 
    Then, Algorithm \ref{alg:seq} satisfies, for any nominal level $\alpha\in(0,1)$,
    \begin{align}
        \mathsf{FDP}\le \alpha+o_\mathsf{p}(1)
        \quad\mathrm{and}\quad
        \limsup_{n\to\infty}\,\mathsf{FDR}\le \alpha.
    \end{align}
\end{theorem}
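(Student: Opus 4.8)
\textbf{Proof plan for Theorem \ref{thm:FDR}.}

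The plan is to follow the now-standard data-splitting argument of \citet{dai2023false,dai2023scale}, but powered by the distributional guarantee of Theorem \ref{thm:asyN} rather than by asymptotic normality of an M-estimator. The skeleton has three parts: (1) establish asymptotic \emph{sign-symmetry} of $M_j$ for null $j$, so that $\widehat{\FDP}(u)$ is an asymptotically valid upper bound for $\FDP(u)$; (2) show that the data-driven cutoff $\tau_\alpha$ defined in \eqref{eq:tau} is, with probability tending to one, bounded above by some fixed threshold $u^\star$ at which the numerator and denominator counts concentrate (this is where Assumption \ref{asmp:signal} enters); (3) combine the two to deduce $\FDP(\tau_\alpha)\le\alpha+o_{\pr}(1)$, and then upgrade the in-probability bound to the $\limsup$ bound on $\FDR$ by the boundedness of $\FDP\in[0,1]$ and dominated convergence.

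For step (1): condition on the learning-rate and mini-batch schedule and on which split each observation falls into. By Assumption \ref{asmp:FDR} the two halves are i.i.d.\ copies, each still satisfying Assumptions \ref{asmp:DGP}--\ref{asmp:init}, so by Proposition \ref{prop:unif} applied within each half, $\bP_{\bB}^\perp\bxi_{\cdot k}^{(T)}/\|\bP_{\bB}^\perp\bxi_{\cdot k}^{(T)}\|$ is uniform on the unit sphere of $\Col(\bB)^\perp$ for $k=1,2$, and the two are independent. For $j\in S^\c$ we have $\bb_j=\bzero_{q^*}$, hence $\xi_{jk}^{(T)}=\be_j^\top\bxi_{\cdot k}^{(T)}=\be_j^\top\bP_{\bB}^\perp\bxi_{\cdot k}^{(T)}$ is, up to the positive common scalar $\|\bP_{\bB}^\perp\bxi_{\cdot k}^{(T)}\|$, a fixed coordinate of a spherically uniform vector, which is symmetric about zero; in particular $\pr(\xi_{jk}^{(T)}=0)=0$ and $\sign(\xi_{jk}^{(T)})$ is a fair coin independent across $k=1,2$. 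Therefore $\sign(\xi_{j1}^{(T)}\xi_{j2}^{(T)})$ is Rademacher, and since $\psi$ is a function of the magnitudes only, $M_j\deq -M_j$ for each $j\in S^\c$. This gives the distributional identity used in the displayed $\FDP$ inequality in the text, so that $\FDP(u)\le\widehat{\FDP}(u)$ holds in the appropriate averaged sense for every fixed $u>0$; one makes this uniform over $u$ in a fixed compact range by a monotonicity/discretization argument on the step functions $u\mapsto\#\{j\in S^\c:M_j<-u\}$ and $u\mapsto\#\{j\in S^\c:M_j>u\}$, exactly as in the cited references, controlling the gap by a Dvoretzky--Kiefer--Wolfowitz-type bound or by the exchangeability of the signs across the null coordinates.

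For step (2), which I expect to be the main obstacle: one must rule out the degenerate case in which $\widehat{\FDP}(u)>\alpha$ for all $u$ up to the relevant range, i.e.\ show $\tau_\alpha\le u_{K_n}$ with high probability, where $u_{K_n}$ is the $K_n$-th largest magnitude. On the event in Assumption \ref{asmp:signal} we have $S^\pm(u_{K_n})\ge\theta K_n$ and $S^+(u)/S^\pm(u)\ge\rho$ uniformly for $u\le u_{K_n}$, so the positive count at $u=u_{K_n}$ is at least $\rho\theta K_n$; meanwhile among the null coordinates, the number with $|M_j|>u_{K_n}$ is at most $K_n - S^\pm(u_{K_n})\le(1-\theta)K_n$, of which the positive and negative parts are asymptotically equal by step (1), so $\#\{j\in S^\c:M_j<-u_{K_n}\}\le \tfrac{1-\theta}{2}K_n+o_\pr(K_n)$ and likewise on the positive side. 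Hence
\[
\widehat{\FDP}(u_{K_n})
\;\le\;
\frac{\tfrac{1-\theta}{2}K_n + o_\pr(K_n)}
{\rho\theta K_n + \tfrac{1-\theta}{2}K_n + o_\pr(K_n)}
\;\longrightarrow\;
\frac{\tfrac{1-\theta}{2}}{\rho\theta+\tfrac{1-\theta}{2}},
\]
where in the denominator I have lower-bounded $\#\{j:M_j>u_{K_n}\}$ by the non-null positive count $\rho\theta K_n$ plus the null positive count; I must be slightly careful to use a \emph{lower} bound there, so in fact the cleanest route is to lower-bound the denominator by $\rho\theta K_n$ alone and upper-bound $\widehat{\FDP}(u_{K_n})\le \big(\tfrac{1-\theta}{2}K_n+o_\pr(K_n)\big)/(\rho\theta K_n)$. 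A short computation shows the rearranged feasibility inequality \eqref{eq:Feas}, $(\alpha\rho-(1-\rho))\theta>\tfrac{1-\alpha}{2}(1-\theta)$, is exactly what forces this limit to be $<\alpha$; hence with probability tending to one some $u\le u_{K_n}$ satisfies $\widehat{\FDP}(u)\le\alpha$, giving $\tau_\alpha\le u_{K_n}$ eventually. The delicate points are (i) tracking which bounds must be upper vs.\ lower so that the $o_\pr$ terms have the right sign, (ii) ensuring the null-symmetry approximation of step (1) is valid uniformly down to the \emph{random} level $u_{K_n}$ — this is why one proves uniformity over the compact range $[0,u_{K_n}]$ rather than pointwise — and (iii) handling the $\vee 1$ in the denominator, which is harmless once the positive count is shown to diverge.

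For step (3): on the high-probability event from step (2), $\tau_\alpha$ lies in $[0,u_{K_n}]$, where the step-(1) uniform symmetry gives $\FDP(\tau_\alpha)\le\widehat{\FDP}(\tau_\alpha)+o_\pr(1)\le\alpha+o_\pr(1)$ by the defining property of $\tau_\alpha$; off that event, which has vanishing probability, we use the trivial bound $\FDP\le 1$. This yields $\FDP\le\alpha+o_\pr(1)$. Finally, since $\FDP$ is bounded in $[0,1]$, the bounded convergence theorem gives $\limsup_{n\to\infty}\FDR=\limsup_{n\to\infty}\E[\FDP]\le\alpha$, completing the proof. Throughout, all statements are understood conditionally on the schedule $(\eta_t,I_t)_{t\le T}$ and on the split assignment, and then integrated out; Assumption \ref{assump:homogeneity} (positive homogeneity of $\psi$) is used only to guarantee that the common positive scalars $\|\bP_{\bB}^\perp\bxi_{\cdot k}^{(T)}\|$ cancel in $\sign(M_j)$ and in the ranking by $|M_j|$, which is the precise sense in which the procedure is scale-free.
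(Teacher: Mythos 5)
Your proposal shares the paper's three-phase skeleton (null symmetry, bound on the data-driven cutoff, integrate), but two of your steps have genuine problems.

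\paragraph{The exact-symmetry observation is correct but insufficient.}
Your finite-$n$ claim that $M_j\deq -M_j$ for $j\in S^\c$ is right, and it is in fact a nice sharpening: taking $\bU=\bI-2\be_j\be_j^\top$ in Proposition~\ref{prop:unif} (admissible because $\bb_j=\bzero$) gives $\xi_{jk}^{(T)}\deq -\xi_{jk}^{(T)}$ exactly, and independence of the two splits then yields exact sign-symmetry of $M_j$. The paper instead proves sign-symmetry only for the Gaussian surrogate $\tilde M_j$ (Lemma~\ref{lem:symmetry-tildeM}) and transfers it to $\breve M_j$ via a Kolmogorov-distance comparison. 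However, marginal symmetry of each $M_j$ does \emph{not} by itself give $\FDP(u)\le\widehat{\FDP}(u)+o_\pr(1)$: you need the null counts $\#\{j\in S^\c:M_j>u\}$ and $\#\{j\in S^\c:M_j<-u\}$ to \emph{concentrate} around a common deterministic curve, uniformly in $u$, and this is precisely the content of Lemmas~\ref{lem:Mj-conv}, \ref{lem:varM}, and \ref{lem:1Mj-avg} in the paper, which lean on the normal approximation of Theorem~\ref{thm:asyN} and a pairwise covariance bound. Your appeal to ``a DKW-type bound or the exchangeability of the signs across the null coordinates'' does not fill this gap: DKW requires i.i.d.\ samples, which the $M_j$ are not, and exchangeability alone (even joint exchangeability, which does hold here) yields, by de Finetti, convergence of the empirical CDF to a \emph{random} limit, not a constant. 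Something like the paper's Gaussian-comparison argument --- or, potentially, a genuine conditional coin-flipping argument à la Barber--Cand\`es, which your exact-symmetry observation would support if you verified that $\{\sign(M_j)\}_{j\in S^\c}$ are conditionally i.i.d.\ Rademacher given all magnitudes --- is needed, and neither is actually carried out in the proposal.

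\paragraph{The cutoff bound in Step 2 is computed incorrectly.}
You bound the numerator of $\widehat{\FDP}(u_{K_n})$ by the null negative count alone, $\tfrac{1-\theta}{2}K_n+o_\pr(K_n)$. But the numerator in \eqref{eq:tau} is $\#\{j:M_j<-u\}$ over \emph{all} $j$, which also contains the non-null negatives $S^-(u_{K_n})\le(1-\rho)S^\pm(u_{K_n})$; these cannot be dropped. Moreover, when you ``clean up'' by lower-bounding the denominator by $\rho\theta K_n$ alone, you throw away the contribution of the null positives (roughly $\tfrac{1-\theta}{2}K_n$), and the resulting inequality $\tfrac{(1-\theta)/2}{\rho\theta}<\alpha$ does \emph{not} follow from \eqref{eq:Feas}: \eqref{eq:Feas} is equivalent to $(1-\alpha)\tfrac{1-\theta}{2}<(\alpha\rho-(1-\rho))\theta$, which is strictly weaker. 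To make the limit equal exactly $\alpha$ at the boundary of \eqref{eq:Feas}, you must keep both the null positives in the denominator \emph{and} the non-null negatives in the numerator (as in Lemma~\ref{lem:lower-bound}, which introduces an intermediate level $u_0=\tilde F^\leftarrow(\delta_0)$ and carries out exactly this bookkeeping). As written, your Step 2 bound is not implied by the assumptions and the claim $\tau_\alpha\le u_{K_n}$ with high probability is not established.

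\paragraph{Minor.}
Your Step 3 is fine once Steps 1--2 are repaired; bounded convergence then upgrades the in-probability bound to $\limsup\FDR\le\alpha$. You are also right that Assumption~\ref{assump:homogeneity} is what makes the unknown scalar $\|\bP_{\bB}^\perp\bxi_{\cdot k}^{(T)}\|$ cancel (Lemma~\ref{lem:scaling} and Corollary~\ref{cor:scaling} in the paper make this precise).
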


A drawback of the data-splitting method is that it effectively halves the sample size, which may reduce power. 
One remedy, known in the literature and also applicable here, is to aggregate the selection results obtained from multiple random splits of the data \citep{dai2023false}, as is commonly done in stability selection \citep{meinshausen2010stability,shah2013variable}.
Related stabilization techniques include \citet{du2023false} and \citet{ren2024derandomised}.
Later, \citet{takahashi2025replica} theoretically shows that such stabilization techniques increase power in the proportional asymptotics.

\section{Numerical experiments}
\label{sec:experiments}
In this section, we empirically validate the theoretical guarantees developed in the preceding sections. 
We then compare the proposed method against relevant baselines. 
All code and scripts for reproducing our results is available at \url{https://github.com/sawaya-ka/deep-feature-selection}.
Further experiments are provided in Appendix~\ref{sec:experiment+}.

\subsection{Marginal asymptotic normality}
\label{sec:numer-asyN}
We numerically verify Theorem~\ref{thm:asyN}.
The data are generated according to 
\begin{align}
\label{eq:sim-MIM}
    y= g(\bb_1^\top\bx)
    +\sum_{k=2}^{q^*} \cbr{h(\bb_{k}^\top\bx)\cdot(\bb_{k-1}^\top\bx)}
    +\varepsilon,
\end{align}
with $\varepsilon\sim\cN(0,1)$, $\bx\sim\cN(\bzero,\bI_n)$, $q^*=8$, $g(x)=(x-2)^2$ and $h(x)=\max(x,0)$.
The vector $\bb_1$ has its first half of entries equal to $2/\sqrt{n}$ and the remaining entries are zero, and $\bb_k=\be_k\in\R^n$ for $k=2,\ldots,q^*$. 

We fixed the batch size of SGD to 128 and the learning rate to \(3 \times 10^{-3}\) except for \textsf{Transformer}. 
After ten update steps, we constructed a histogram of the latter half of the components of \({\sqrt{n}\bxi^{(10)}}/{\|\bP_{\bB}^\perp\bxi^{(10)}\|}\) and compared it with the density of a normal distribution whose mean and variance match the sample mean and sample variance of these components, as well as with the standard normal density \(\cN(0,1)\). The resulting plots are presented in Figure~\ref{fig:asyN}. 
\begin{figure}[tbp]
  \begin{center}
    \includegraphics[clip,width=1\columnwidth]{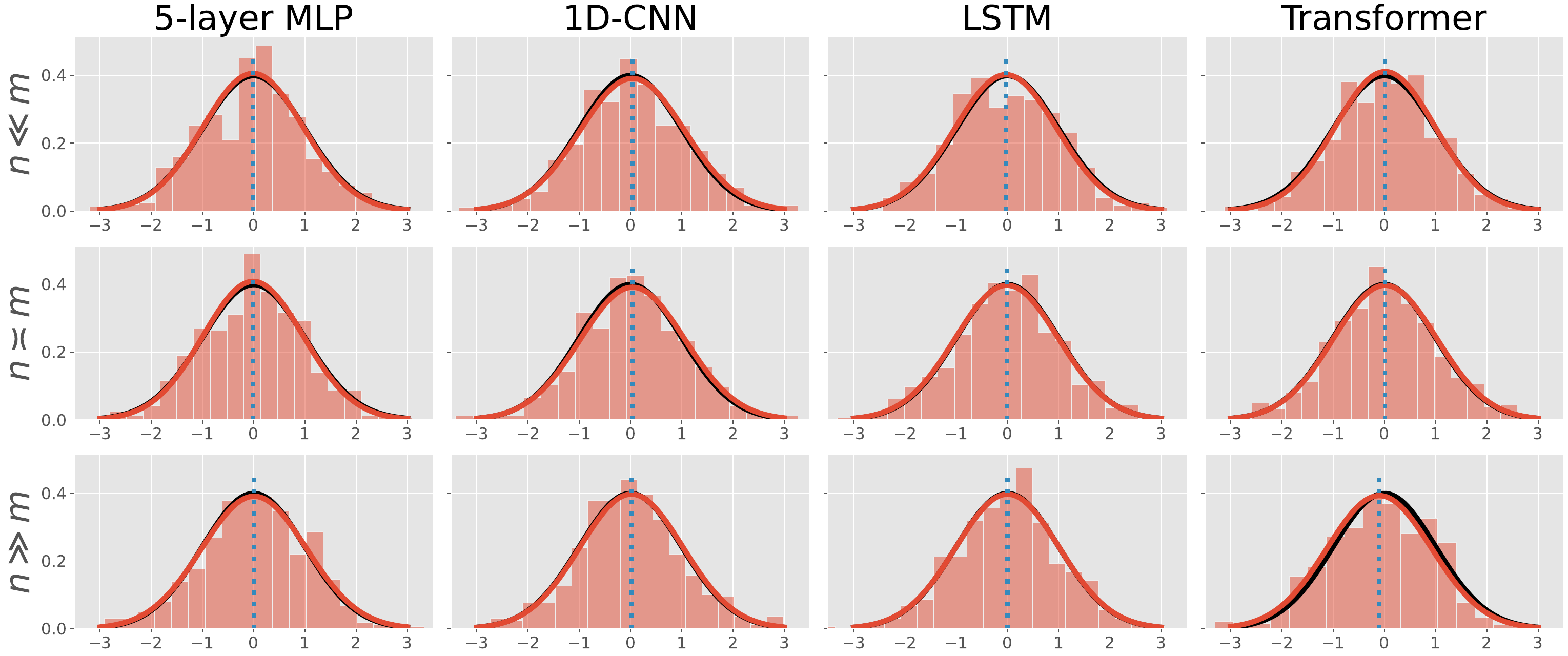}
  \end{center}
  \caption{
  Histograms of the empirical distribution of ${\sqrt{n}\xi_j^{(10)}}/{\|\bP_{\bB}^\perp\bxi^{(10)}\|}$ for $j\in S^\c$.
The solid black curve shows the $\cN(0,1)$ density.
The solid red curve represents a normal density fitted to the histograms, and the dotted blue line indicates the empirical mean.
}\label{fig:asyN}
\end{figure}
Also, the corresponding QQ-plots are provided in Figure \ref{fig:asyN_qq} of Appendix~\ref{sec:experiment+}.

The settings labeled \(n \ll m\), \(n \asymp m\), and \(n \gg m\) correspond respectively to \((m,n) = (100000, 1000)\), \((2000, 1000)\), and \((10, 1000)\). The network architectures used for training in these experiments are described below. We employ dropout with a rate $0.1$. 

\textsf{5-layer MLP.}
We use a 5-layer MLP consisting of four hidden layers of widths $(1024, 1024, 512, 256)$ with ReLU activations and a final linear output. 
All weights are initialized with He-normal initialization.

\textsf{1D-CNN.}
The 1D-CNN baseline first applies the lifting layer $\bW_1^\top \bx$ with $q=n$ followed by a stack of three convolutional layers with output channels $(64,128,128)$, kernel sizes $(11,9,7)$, and stride $1$. 
Dilated convolutions with dilation rates $(1,2,4)$ are used to enlarge the receptive field. 
ReLU activation is applied after each convolution. 
The convolutional output is flattened and passed through a fully connected head with hidden layers $(128,64)$ and a final linear output. 

\textsf{LSTM.}
For sequential modeling we adopt a two-layer bidirectional LSTM with hidden size $128$. 
Each time step input is obtained by projecting the lifted representation $\bW_1^\top \bx$ into dimension $4$. 
The last hidden states of the forward and backward directions are concatenated and fed into a fully connected head with hidden layer $(64)$ and linear output. 
We use Xavier-normal initialization for the recurrent weights. 

\textsf{Transformer.}
The input is first lifted by a trainable dense map
$\bW_1^\top\in\mathbb{R}^{q\times n}$ (Xavier initialization), producing $q$ tokens.
Each token is embedded by a linear ID-specific map and normalized (LayerNorm),
without positional encoding.
We employ a Transformer encoder with two layers, model dimension $256$,
four heads, feed-forward dimension $256$, and GELU activation.
For sequence aggregation we use gated pooling with hidden dimension $32$ and
temperature $\tau=1.0$, followed by a fully connected head with hidden size $32$ and
linear output.
Training uses AdamW with $(\beta_1,\beta_2,\varepsilon)=(0.9,0.95,10^{-8})$,
weight decay $0.01$, batch size $128$, base learning rate $3\!\times\!10^{-4}$
with $10\%$ warm-up, and gradient clipping at $1.0$.

These experiments were conducted on a Google Cloud Platform VM equipped with a single NVIDIA A100 (40GB) GPU using PyTorch 2.6.0 with CUDA 12.4.

\subsection{FDR control}
\label{sec:num-fdr}
We next demonstrate that Algorithm~\ref{alg:seq} is able to approximately control the
false discovery rate (FDR) at or below the nominal level $\alpha=0.1$ under the stated
assumptions.
We consider the setting $(m,n)=(1600,400)$; the data-generating process
and learning architectures are otherwise the same as in the previous section.
We use $\psi(u,v)=\min(u,v)$.

We define the power in the feature selection problem:
\begin{align}
    \mathsf{Power}
    =\E\sbr{\frac{\#\{j\in S: j\in\hat{S}\}}{\#\{j:j\in S\}}}.
\end{align}
We can see that the power is one minus the Type-II error.

Figure~\ref{fig:iter} reports the results. 
As anticipated in Assumption~\ref{asmp:signal}, once training progresses and the power reaches a reasonable level, the FDR is also brought
under control. 
The trajectory of the training loss corresponding to these
experiments is shown in Figure~\ref{fig:loss} of Appendix~\ref{sec:experiment+}.

Experiments were conducted on a Google Cloud Platform VM equipped with four NVIDIA Tesla T4 (16\,GB each) GPUs, using PyTorch~2.8.0 (built with CUDA~12.8) and CUDA runtime~12.4.
\begin{figure}[tbp]
  \begin{center}
    \includegraphics[clip,width=.85\columnwidth]{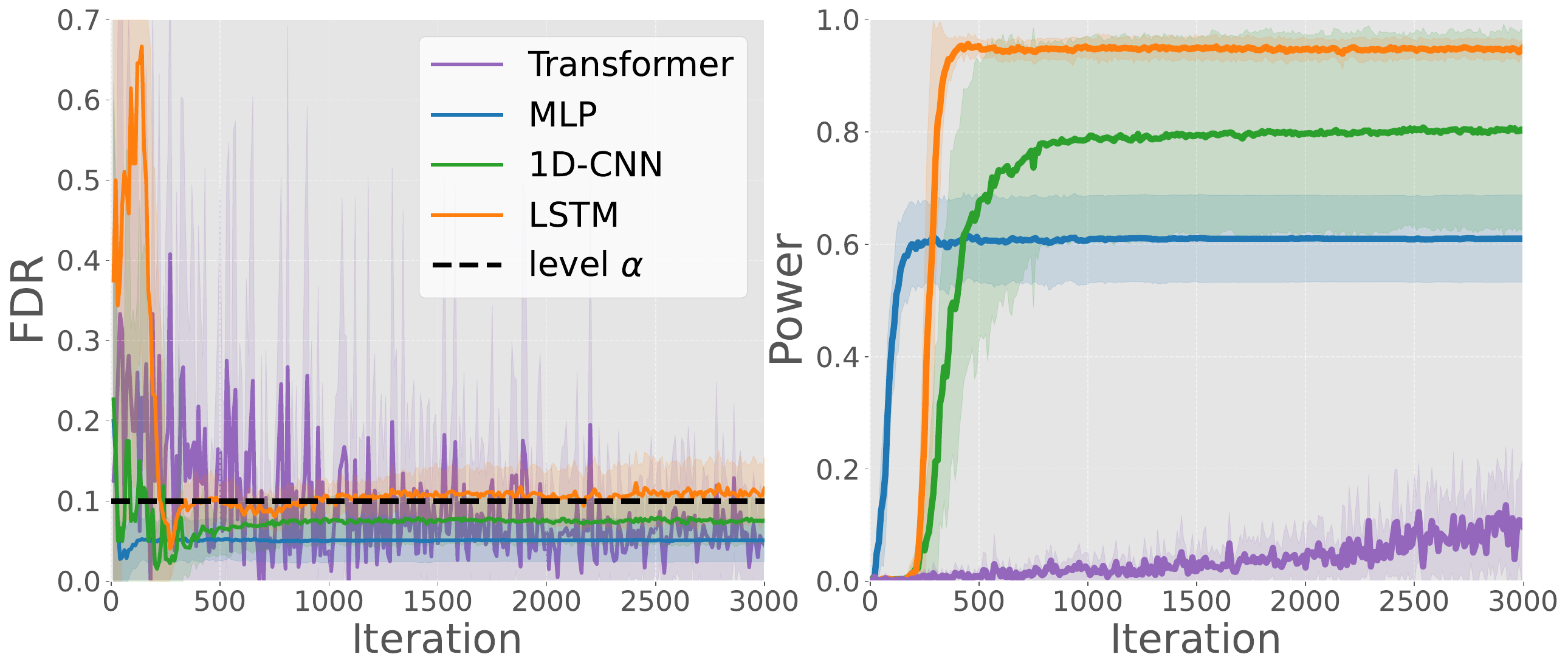}
  \end{center}
  \caption{Results for the false discovery rate (left) and power (right) when performing feature selection at each iteration using the artificial data and models defined in Section~\ref{sec:numer-asyN}. 
  The solid curves represent averages over 20 independent runs, and the shaded areas indicate one standard deviation around the mean.
}\label{fig:iter}
\end{figure}

\begin{remark}
In a single-index model $y=g(\langle \bm{w}_\star,\bm{x}\rangle)+\varepsilon$, if the trained network well approximates the regression function then 
$\nabla_{\bm{x}} f_{\cW^{(t)}}(\bm{x})\approx g'(\langle \bm{w}_\star,\bm{x}\rangle)\,\bm{w}_\star$, so the averaged gradient $\bxi^{(t)}$ converges to $\mathbb{E}[g'(Z)]\,\bm{w}_\star$ with $Z\sim \cN(0,\|\bm{w}_\star\|^2)$. 
While the factor $\mathbb{E}[g'(Z)]$ being close to zero does not affect the validity of FDR control, it cancels out the signal direction and thereby reduces the power of Algorithm \ref{alg:seq}.
\end{remark}

\subsection{Comparison with other methods}
\subsubsection{Competing methods.}
We benchmark our approach against flexible variable-selection baselines that can (or aim to) control FDR. 
\emph{Neural baselines} advertised as enabling FDR control via deep representations include the Neural Gaussian Mirror (\textsf{NGM}) \citep{xin2020ngm} and \textsf{DeepLINK} \citep{zhu2021deeplink}. 
While empirically competitive, these methods do not offer formal guarantees.
All methods are run at the same nominal FDR level $\alpha=0.1$.

We ran \textsf{DeepLINK} \citep{zhu2021deeplink} with authors' official code\footnote{\url{https://github.com/zifanzhu/DeepLINK}} with hyperparameters as follows: L1 penalty $10^{-3}$, learning rate $10^{-3}$, ELU activation, and mean squared error loss.
Column-wise centering and scaling were applied by default.

Among the methods compared, \textsf{NGM} was thus far the most computationally demanding. 
To make large-scale experiments feasible, we employed several approximations: a preliminary screening step retaining at most $n/2$ variables, a subsample-based approximation of the kernel matrices (subsample size $800$), and a coarse grid of 6 candidate values for the scale parameter $c_j$. 
The neural network used in \textsf{NGM} was a two-hidden-layer MLP with hidden widths proportional to $\log n$, trained for 60 epochs with batch size 256 and learning rate $10^{-3}$.

\subsubsection{Data generating process.}
We fix the ambient dimension at $n=500$ and vary the sample size $m\in\{2000,\,1000,\,500\}$ to probe the effect of sample scarcity.
Unless otherwise noted, the data-generating process follows Section~\ref{sec:numer-asyN} exactly, \emph{except} that (because the design $\bX$ is scaled by $1/\sqrt{n}$) we set the nonzero entries of the signal matrix $\bB$ to $2$ (rather than $2/\sqrt{n}$).

We consider four scenarios for the design matrix $\bX\in\R^{m\times n}$ satisfying Assumption~\ref{asmp:DGP} (ii).

\textsf{N(0,1).} 
Entries are i.i.d.\ Gaussian:
$X_{ij} \overset{\rm iid}{\sim} \cN(0,1/n),\, (i,j)\in [m]\times[n]$.

\textsf{t(3).}
Entries are i.i.d.\ standardized $t$:
$X_{ij} \overset{\rm iid}{\sim} \tfrac{1}{\sqrt{n}}t(3),\, (i,j)\in [m]\times[n],$
where $t(3)$ denotes the $t$-distribution with $3$ degrees of freedom.

\textsf{Spiked.}
Let $r=2$ be the spike rank. We write
$\bX = \bV_1\bV_2^\top + \bE,$
where $\bV_1\in\mathbb{R}^{m\times r}$ and $\bV_2\in\mathbb{R}^{n\times r}$ have orthonormal columns (each drawn as $r$-columns from independent Haar orthogonal matrices of sizes $m$ and $n$, respectively), and $\bE$ has i.i.d.\ entries $E_{ij}\sim\cN(0,1/n)$.

Our inferential procedures employ sample splitting: we partition the $m$ rows into two equal halves, $\bX^{(A)}\in\mathbb{R}^{(m/2)\times n}$ and $\bX^{(B)}\in\mathbb{R}^{(m/2)\times n}$. For \textsf{N(0,1)} and \textsf{t(3)}, closure under row-subsampling is immediate.
For \textsf{Spiked}, we assume a modeling convention such that, when the $m$ rows are partitioned into two equal halves, each half independently follows the same distributional form as the original model (with common latent parameters), so that both $\bX^{(A)}$ and $\bX^{(B)}$ are valid \textsf{Spiked} samples.

\begin{table}[ht]
\centering
\caption{Performance comparison for $m=2000$.}
\label{tab:full1}
\small
\setlength{\tabcolsep}{5pt}
\renewcommand{\arraystretch}{1.1}
\begin{tabular}{c cc cc cc cc cc}
\toprule
& \multicolumn{2}{c}{\textsf{MLP}} 
& \multicolumn{2}{c}{\textsf{1D-CNN}} 
& \multicolumn{2}{c}{\textsf{LSTM}} 
& \multicolumn{2}{c}{\textsf{DeepLINK}} 
& \multicolumn{2}{c}{\textsf{NGM}} \\
\cmidrule(lr){2-3} \cmidrule(lr){4-5} \cmidrule(lr){6-7} \cmidrule(lr){8-9} \cmidrule(lr){10-11}
\textsf{Design} & FDR & Power & FDR & Power & FDR & Power & FDR & Power & FDR & Power \\
\midrule
\textsf{N(0,1)}   & \makecell{0.075\\(0.019)} & \makecell{0.830\\(0.049)} & \makecell{0.094\\(0.028)} & \makecell{0.998\\(0.002)} & \makecell{0.101\\(0.025)} & \makecell{1.000\\(0.000)} & \makecell{0.089\\(0.021)} & \makecell{0.999\\(0.000)} & \makecell{0.086\\(0.015)} & \makecell{0.851\\(0.013)} \\
\textsf{t(3)}     & \makecell{0.043\\(0.042)} & \makecell{0.363\\(0.305)} & \makecell{0.047\\(0.044)} & \makecell{0.512\\(0.382)} & \makecell{0.072\\(0.040)} & \makecell{0.843\\(0.343)} & \makecell{0.081\\(0.030)} & \makecell{0.870\\(0.218)} & \makecell{0.063\\(0.031)} & \makecell{0.599\\(0.151)} \\
\textsf{Spiked}   & \makecell{0.076\\(0.034)} & \makecell{0.808\\(0.070)} & \makecell{0.099\\(0.028)} & \makecell{0.998\\(0.003)} & \makecell{0.104\\(0.026)} & \makecell{1.000\\(0.000)} & \makecell{0.087\\(0.021)} & \makecell{1.000\\(0.000)} & \makecell{0.080\\(0.016)} & \makecell{0.847\\(0.017)} \\
\bottomrule
\end{tabular}
\end{table}

\begin{table}[ht]
\centering
\caption{Performance comparison for $m=1000$.}
\label{tab:full2}
\small
\setlength{\tabcolsep}{5pt}
\renewcommand{\arraystretch}{1.1}
\begin{tabular}{c cc cc cc cc cc}
\toprule
& \multicolumn{2}{c}{\textsf{MLP}} 
& \multicolumn{2}{c}{\textsf{1D-CNN}} 
& \multicolumn{2}{c}{\textsf{LSTM}} 
& \multicolumn{2}{c}{\textsf{DeepLINK}} 
& \multicolumn{2}{c}{\textsf{NGM}} \\
\cmidrule(lr){2-3} \cmidrule(lr){4-5} \cmidrule(lr){6-7} \cmidrule(lr){8-9} \cmidrule(lr){10-11}
\textsf{Design} & FDR & Power & FDR & Power & FDR & Power & FDR & Power & FDR & Power \\
\midrule
\textsf{N(0,1)}   & \makecell{0.060\\(0.030)} & \makecell{0.186\\(0.069)} & \makecell{0.075\\(0.031)} & \makecell{0.546\\(0.093)} & \makecell{0.072\\(0.024)} & \makecell{0.637\\(0.089)} & \makecell{0.065\\(0.021)} & \makecell{0.467\\(0.122)} & \makecell{0.131\\(0.031)} & \makecell{0.658\\(0.041)} \\
\textsf{t(3)}     & \makecell{0.053\\(0.057)} & \makecell{0.080\\(0.068)} & \makecell{0.040\\(0.039)} & \makecell{0.122\\(0.111)} & \makecell{0.061\\(0.043)} & \makecell{0.168\\(0.130)} & \makecell{0.064\\(0.076)} & \makecell{0.085\\(0.107)} & \makecell{0.144\\(0.612)} & \makecell{0.401\\(0.127)} \\
\textsf{Spiked}   & \makecell{0.047\\(0.036)} & \makecell{0.187\\(0.066)} & \makecell{0.070\\(0.023)} & \makecell{0.571\\(0.094)} & \makecell{0.072\\(0.021)} & \makecell{0.682\\(0.089)} & \makecell{0.063\\(0.028)} & \makecell{0.492\\(0.126)} & \makecell{0.151\\(0.031)} & \makecell{0.674\\(0.046)} \\
\bottomrule
\end{tabular}
\end{table}

\begin{table}[ht]
\centering
\caption{Performance comparison for $m=500$.}
\label{tab:full3}
\small
\setlength{\tabcolsep}{5pt}
\renewcommand{\arraystretch}{1.1}
\begin{tabular}{c cc cc cc cc cc}
\toprule
& \multicolumn{2}{c}{\textsf{MLP}} 
& \multicolumn{2}{c}{\textsf{1D-CNN}} 
& \multicolumn{2}{c}{\textsf{LSTM}} 
& \multicolumn{2}{c}{\textsf{DeepLINK}} 
& \multicolumn{2}{c}{\textsf{NGM}} \\
\cmidrule(lr){2-3} \cmidrule(lr){4-5} \cmidrule(lr){6-7} \cmidrule(lr){8-9} \cmidrule(lr){10-11}
\textsf{Design} & FDR & Power & FDR & Power & FDR & Power & FDR & Power & FDR & Power \\
\midrule
\textsf{N(0,1)}   & \makecell{0.087\\(0.092)} & \makecell{0.031\\(0.029)} & \makecell{0.087\\(0.079)} & \makecell{0.057\\(0.033)} & \makecell{0.047\\(0.068)} & \makecell{0.040\\(0.034)} & \makecell{0.019\\(0.58)} & \makecell{0.003\\(0.010)} & \makecell{0.242\\(0.049)} & \makecell{0.380\\(0.093)} \\
\textsf{t(3)}     & \makecell{0.074\\(0.133)} & \makecell{0.012\\(0.015)} & \makecell{0.041\\(0.098)} & \makecell{0.009\\(0.010)} & \makecell{0.058\\(0.098)} & \makecell{0.016\\(0.012)} & \makecell{0.017\\(0.061)} & \makecell{0.004\\(0.013)} & \makecell{0.325\\(0.04)} & \makecell{0.428\\(0.076)} \\
\textsf{Spiked}   & \makecell{0.116\\(0.223)} & \makecell{0.028\\(0.033)} & \makecell{0.138\\(0.214)} & \makecell{0.059\\(0.046)} & \makecell{0.029\\(0.059)} & \makecell{0.037\\(0.036)} & \makecell{0.026\\(0.065)} & \makecell{0.020\\(0.049)} & \makecell{0.264\\(0.039)} & \makecell{0.435\\(0.056)} \\
\bottomrule
\end{tabular}
\end{table}

Tables~\ref{tab:full1}--\ref{tab:full3} report the comparison of FDR and Power with $n=500$, averaged over 20 independent runs. 
The entries labeled \textsf{MLP}, \textsf{1D-CNN}, and \textsf{LSTM} are defined in the previous section. 
It can be observed that \textsf{NGM} fails to control the FDR except {in the $m=2000$ setting}, 
while our method achieves relatively high power under FDR control. 
Furthermore, the values in parentheses below each entry denote the standard deviations computed over 20 independent random seeds.
These results suggest that the occasional exceedance of the nominal FDR level $\alpha=0.1$ by our proposed method is likely due to random variation arising from averaging over a relatively small number of 20 trials.
In contrast, the fact that \textsf{NGM} consistently yields FDR values above 0.1 with small standard deviations indicates a genuine failure in FDR control.



\section{Discussion}

\emph{Summary of Contributions.}
To the best of our knowledge, this paper provides the first theoretical guarantee
for false discovery rate (FDR) control in feature selection with multilayer neural networks.
This property is crucial for ensuring the reproducibility of scientific discoveries,
and our results demonstrate that one can achieve both model flexibility and rigorous statistical reliability.
In addition, the proposed implementation via simple data splitting is straightforward and easy to apply.

\emph{Limitations and Future Directions.}
Our analysis assumes that the first layer of the neural network is a dense fully connected transformation.
While technically convenient, this limits the ability to exploit the spatial locality of image data
or the sequential and positional structures inherent in natural language.
Extending the methodology to capture richer latent structures in diverse data modalities remains an important challenge.

Moreover, we have defined the input sensitivity $\bxi^{(t)}$ as an average across instances.
It would be interesting to investigate how this sensitivity can be characterized at the single-instance level,
especially in domains such as computer vision where the set of pixels critical for classification
may vary substantially across samples.

In this work, we considered the raw input gradients of a trained neural network 
as a feature importance. 
A promising direction for future work is to extend our analysis to more 
sophisticated feature attribution methods, including Integrated Gradients \citep{sundararajan2017axiomatic}, 
DeepLIFT \citep{shrikumar2017learning}, 
and SmoothGrad \citep{smilkov2017smoothgrad}. 
Adapting our FDR-control framework to such attribution methods would 
potentially yield more stable and interpretable feature selection.

From a theoretical standpoint, our framework relied on the $\bm B$-right-orthogonal invariance of the design matrix $\bm X$.
Exploring the behavior of our algorithm under more severe correlation structures is an appealing direction for future work, as is extending the theory to regularized training regimes.

Finally, while our results establish FDR control, an important practical question is
which network architectures---in terms of depth, width, use of attention, dropout, or residual connections---yield higher power as functions of the sample size and ambient dimension.
Answering this question is a promising research direction, though it will likely require substantially more theoretical effort.

\bibliographystyle{apalike}
\bibliography{main}

\appendix

\section{Proofs}
\label{sec:proof}
\subsection{Proof of Proposition \ref{prop:unif}}
\begin{proof}[Proof of Proposition \ref{prop:unif}]
The basic idea of the proof is inspired by the proof of Proposition 2.1 in \citet{zhao2022asymptotic}.
Since $\bP_{\bB}^\perp\bxi^{(t)}/\|\bP_{\bB}^\perp\bxi^{(t)}\|$ has the unit norm and lies in $\mathrm{Col}(\bB)^\perp$, it is sufficient to show that, for any orthogonal matrix $\bU\in\R^{n\times n}$ obeying $\bU\bB=\bB$,
\begin{align}
\label{eq:est-rot-inv}
    \bU\bP_{\bB}^\perp\bxi^{(t)}
    \overset{\rm d}{=}\bP_{\bB}^\perp\bxi^{(t)}.
\end{align}
We proceed to show its sufficient condition $\bU\bW_1^{(t)}=\bW_1^{(t)}$ since Assumption \ref{asmp:arch} implies that the input sensitivity is given by, with $\cW_{\setminus1}^{(t)}\equiv\cW^{(t)}\setminus\bW_1^{(t)}$,
\begin{align}
    \bxi^{(t)}=\sum_{i=1}^m\bW_1^{(t)} h\rbr{(\bW_1^{(t)})^\top\bx_i;\cW_{\setminus1}^{(t)}},
\end{align}
where $h(\cdot)$ depends on $\bx_i$ only through $(\bW_1^{(t)})^\top\bx_i$, and given by
\begin{align}
    h\rbr{(\bW_1^{(t)})^\top\bx_i;\cW_{\setminus1}^{(t)}}=\frac{\partial f_{\cW^{(t)}}(\bx_i)}{\partial ((\bW_1^{(t)})^\top\bx_i)}.
\end{align}

For each $t\in\N$, denote $\bW_1^{(t)}=\bW_1^{(t)}(\by,\bX,\bW_1^{(t-1)},\cW_{\setminus1}^{(t-1)})$ to clarify the dependence of SGD iterates on the sample $(\by,\bX)$ and the previous iterate $\cW^{(t-1)}=(\bW_1^{(t-1)},\cW_{\setminus1}^{(t-1)})$.
Then, Assumptions \ref{asmp:loss} and \ref{asmp:arch} yield, for each $t\in\N$,
\begin{align}
    &\bW_1^{(t)}(\by,\bX\bU,\bW_1^{(t-1)},\cW_{\setminus1}^{(t-1)})\\
    &= \bW_1^{(t-1)} - \eta_t 
       \sum_{i\in I_t}\partial_2\cL\rbr{y_i,f_{\cW^{(t-1)}}\rbr{\bU^\top\bx_i}}
       \cdot\bU^\top\bx_ih\rbr{(\bW_1^{(t-1)})^\top\bU^\top\bx_i;\cW_{\setminus1}^{(t-1)}}^\top \label{eq:rot-W1}\\
    &=\bU^\top\bW_1^{(t)}(\by,\bX,\bU\bW_1^{(t-1)},\cW_{\setminus1}^{(t-1)}),
\end{align}
where we use $f_{\cW^{(t-1)}}\rbr{\bU^\top\bx_i}=f_{(\bU\bW_1^{(t-1)},\cW_{\setminus1}^{(t-1)})}\rbr{\bx_i}$ in the last equation.

By Assumption \ref{asmp:DGP} (i) and $\bU\bB=\bB$, it follows that, for each $i\in[m]$,
\begin{align}
    y_i=g(\bB^\top\bx_i,\varepsilon_i)=g(\bB^\top\bU^\top\bx_i,\varepsilon_i).
\end{align}
Hence, $(\by,\bX)\overset{\rm d}{=}(\by,\bX\bU)$ by Assumption \ref{asmp:DGP} (ii).
Together with Assumption \ref{asmp:init}, we obtain
\begin{align}
\label{eq:deq-sample}
    (\by,\bX\bU,\bW_1^{(0)},\cW_{\setminus1}^{(0)})
    \overset{\rm d}{=}(\by,\bX,\bU\bW_1^{(0)},\cW_{\setminus1}^{(0)}).
\end{align}
This implies
\begin{align}
\label{eq:deq-W1}
    \bW_1^{(1)}(\by,\bX\bU,\bW_1^{(0)},\cW_{\setminus1}^{(0)})
    \overset{\rm d}{=}\bW_1^{(1)}(\by,\bX,\bU\bW_1^{(0)},\cW_{\setminus1}^{(0)}),
\end{align}
which follows from the fact that the identical measurable function of the random elements following the same law has again the same law.
As a result, \eqref{eq:rot-W1} for $t=1$, \eqref{eq:deq-W1}, and the left-orthogonal invariance of $\bW_1^{(0)}$ in Assumption \ref{asmp:init} give 
\begin{align}
\label{eq:rot-W^1}
    \bU\bW_1^{(1)}(\by,\bX,\bW_1^{(0)},\cW_{\setminus1}^{(0)})
    \overset{\rm d}{=}\bW_1^{(1)}(\by,\bX,\bW_1^{(0)},\cW_{\setminus1}^{(0)}).
\end{align}
Also for $t=2$, applying \eqref{eq:rot-W1} yields
\begin{align}
    &\bU\bW_1^{(2)}\rbr{\by,\bX\bU,\bW_1^{(1)}(\by,\bX\bU,\bW_1^{(0)},\cW_{\setminus1}^{(0)}),\cW_{\setminus1}^{(1)}}\\
    &=\bW_1^{(2)}\rbr{\by,\bX,\bU\bW_1^{(1)}(\by,\bX\bU,\bW_1^{(0)},\cW_{\setminus1}^{(0)}),\cW_{\setminus1}^{(1)}}\\
    &=\bW_1^{(2)}\rbr{\by,\bX,\bW_1^{(1)}(\by,\bX,\bU\bW_1^{(0)},\cW_{\setminus1}^{(0)}),\cW_{\setminus1}^{(1)}}.
\end{align}
Therefore, we have, by \eqref{eq:deq-sample},
\begin{align}
\label{eq:rot-W^2}
    \bU\bW_1^{(2)}(\by,\bX,\bW_1^{(1)},\cW_{\setminus1}^{(1)})
    \overset{\rm d}{=}\bW_1^{(2)}(\by,\bX,\bW_1^{(1)},\cW_{\setminus1}^{(1)}),
\end{align}
in the same manner as \eqref{eq:rot-W^1}.
We can immediately generalize this argument to any $t\in\N$ by the recursive argument.
This implies the desired property \eqref{eq:est-rot-inv}.
\end{proof}

\subsection{Proof of Theorem \ref{thm:asyN}}

For the proof of Theorem \ref{thm:asyN}, we prepare a lemma.
\begin{lemma}
\label{lem:hanson-wright}
Assume the Assumption \ref{asmp:DGP}.
Let $\bz\in\R^n$ be a standard Gaussian random vector. 
For any $t>0$, there exists a universal constant $c>0$ such that
\begin{align}
    \pr\rbr{\abs{\sqrt{\frac{\bz^\top\bP_{\bB}^\perp\bz}{n}}-\sqrt{\frac{n-q^*}{n}}}>t}\le2\exp\rbr{-c\rbr{nt^2\wedge \sqrt{n(n-q^*)}t}}.
\end{align}
\end{lemma}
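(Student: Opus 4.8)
The plan is to reduce the claim to a standard concentration bound for the quadratic form $W := \bz^\top\bP_{\bB}^\perp\bz$ and then transfer it through the square root. First I would record the two structural facts about $\bP_{\bB}^\perp$ that matter: since $\bB$ has full rank $q^*$ (Assumption~\ref{asmp:DGP}(i)), the orthogonal projection $\bP_{\bB}^\perp$ has rank $n-q^*$, hence $\trace\bP_{\bB}^\perp=\|\bP_{\bB}^\perp\|_F^2=n-q^*$ and $\|\bP_{\bB}^\perp\|_{\mathrm{op}}=1$. In particular $\E[W]=n-q^*$, and by rotational invariance of the standard Gaussian one even has $W\sim\chi^2_{n-q^*}$ exactly.

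Next I would apply the Hanson--Wright inequality to $\bz^\top\bP_{\bB}^\perp\bz$ (the sub-Gaussian norm of each coordinate of $\bz$ is a universal constant, which absorbs into the constant), using $\|\bP_{\bB}^\perp\|_F^2=n-q^*$ in the quadratic regime and $\|\bP_{\bB}^\perp\|_{\mathrm{op}}=1$ in the linear regime. This yields a universal $c_0>0$ with
\[
  \pr\bigl(|W-(n-q^*)|>s\bigr)\;\le\;2\exp\!\bigl(-c_0\,(s^2/(n-q^*)\wedge s)\bigr)
  \qquad\text{for all }s>0.
\]
(Equivalently, since $W$ is exactly $\chi^2_{n-q^*}$, one can invoke the Laurent--Massart $\chi^2$ tail bounds directly and bypass Hanson--Wright entirely.)

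Then I would transfer to the square root via the elementary inequality $|\sqrt{x}-\sqrt{y}|\le|x-y|/\sqrt{y}$ valid for $x\ge0$, $y>0$. Taking $x=W/n$ and $y=(n-q^*)/n$ gives, deterministically,
\[
  \Bigl|\sqrt{W/n}-\sqrt{(n-q^*)/n}\Bigr|
  \;\le\;\frac{|W-(n-q^*)|}{\sqrt{n(n-q^*)}},
\]
so the event in the lemma is contained in $\{\,|W-(n-q^*)|>\sqrt{n(n-q^*)}\,t\,\}$. Substituting $s=\sqrt{n(n-q^*)}\,t$ into the quadratic-form bound gives $s^2/(n-q^*)=nt^2$ and $s=\sqrt{n(n-q^*)}\,t$, hence $s^2/(n-q^*)\wedge s = nt^2\wedge\sqrt{n(n-q^*)}\,t$, which is precisely the asserted bound after relabeling the constant as $c$.

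There is no serious obstacle here; the only points requiring care are (i) invoking the two-regime form of Hanson--Wright with the correct Frobenius and operator norms of a rank-$(n-q^*)$ projection, and (ii) orienting the square-root inequality so that the denominator is lower-bounded by the deterministic quantity $\sqrt{(n-q^*)/n}$ rather than by the random $\sqrt{W/n}$, which is what makes the reduction hold on the whole probability space.
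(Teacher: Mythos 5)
Your proposal is correct and follows essentially the same route as the paper: apply Hanson--Wright to $\bz^\top\bP_{\bB}^\perp\bz$ using $\|\bP_{\bB}^\perp\|_F^2=n-q^*$ and $\|\bP_{\bB}^\perp\|_{\mathrm{op}}=1$, then transfer through the square root. Your inequality $|\sqrt{x}-\sqrt{y}|\le|x-y|/\sqrt{y}$ is algebraically the same as the paper's $|a-1|\le|a^2-1|$ step (with $a=\sqrt{x/y}$) combined with its final change of variables, so the two proofs differ only in how the same bookkeeping is organized.
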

\begin{proof}[Proof of Lemma \ref{lem:hanson-wright}]
Since $\rank(\bB)=q^*$ by Assumption \ref{asmp:DGP} (i), we have $\|\bP_{\bB}^\perp\|_F^2=n-q^*$ and $\|\bP_{\bB}^\perp\|_{\rm op}=1$. 
Thus, Hanson--Wright inequality \citep{vershynin2018high} implies that, for any $t>0$,
\begin{align}
    \pr\rbr{\abs{\bz^\top\bP_{\bB}\bz-(n-q^*)}>t}
    \le2\exp\rbr{-c\min\rbr{\frac{t^2}{n-q^*}\wedge t}},
\end{align}
with some constant $c>0$.
Hence, it follows that, for any $t>0$,
\begin{align}
    \pr\rbr{\abs{\frac{\bz^\top\bP_{\bB}\bz}{n-q^*}-1}>t}
    \le2\exp\rbr{-c\,(n-q^*)(t^2\wedge t)},
\end{align}
with some constant $c>0$.
For $a>0$, we have $|a^2-1|=|a+1|\cdot|a-1|\ge|a-1|$ since $a+1>1$.
Using this, we obtain, for any $t>0$,
\begin{align}
    \pr\rbr{\abs{\sqrt{\frac{\bz^\top\bP_{\bB}\bz}{n-q^*}}-1}>t}
    \le\pr\rbr{\abs{\frac{\bz^\top\bP_{\bB}\bz}{n-q^*}-1}>t}
    \le2\exp\rbr{-c\,(n-q^*)(t^2\wedge t)},
\end{align}
with some constant $c>0$.
Change-of-variable from $t$ to $t\sqrt{n/(n-q^*)}$ completes the proof.
\end{proof}

\begin{proof}[Proof of Theorem \ref{thm:asyN}]
    From Proposition \ref{prop:unif}, we obtain
\begin{align}
\label{eq:dequiv-xi}
    \frac{\bP_{\bB}^\perp\bxi^{(t)}}{\|\bP_{\bB}^\perp\bxi^{(t)}\|}
    \overset{\rm d}{=}
    \frac{\bP_{\bB}^\perp\bz}{\|\bP_{\bB}^\perp\bz\|},
\end{align}
where $\bz\sim\cN(\bzero,\bI_n)$.
Here, $j$-th element of $\bP_{\bB}^\perp\bxi^{(t)}$ and $\bP_{\bB}^\perp\bz$ are given by
\begin{align}
    \xi_j^{(t)}-\bb_{j}^\top(\bB^\top\bB)^{-1}\bB^\top\bxi^{(t)}\quad
    \mathrm{and}\quad
    z_j-\bb_{j}^\top(\bB^\top\bB)^{-1}\bB^\top\bz,
\end{align}
respectively.
Note that $\bb_{j}=\bzero$ under the null by Theorem \ref{thm:CI-equivalence}.
Thus, for $j\in S^\c$, we have 
\begin{align}
\label{eq:xi-and-z}
    \sqrt{n}\xi_j^{(t)}/\|\bP_{\bB}^\perp\bxi^{(t)}\|\overset{\rm d}{=}\sqrt{n}z_j/{\|\bP_{\bB}^\perp\bz\|}.
\end{align}
Here, since $\mathrm{rank}(\bB)=q^*$ by Assumption \ref{asmp:DGP} (i), Lemma \ref{lem:hanson-wright} implies that $\|\bP_{\bB}^\perp\bz\|/\sqrt{n}\pconv1$ as $n\to\infty$ while $q^*=o(n)$.
This completes the proof of \eqref{eq:asyN}.

Next, we show the uniform convergence. 
Denote $\sigma_n=\sqrt{n}/\|\bP_{\bB}^\perp\bz\|$ for convenience.
Fix an arbitrary $\epsilon>0$ and define $E_n=\cbr{|\sigma_n-1|<\epsilon}$.
From \eqref{eq:xi-and-z}, we have
\begin{align}
    \Delta_n\equiv\sup_{j\in S^\c}\ \sup_{u\in\R}
  \abs{ \pr\rbr{\frac{\sqrt{n}\,\xi_j^{(t)}}{\|\bP_{\bB}^\perp\bxi^{(t)}\|}\le u}-\Phi(u)}
  &=\sup_{j\in S^\c}\ \sup_{u\in\R}
  \abs{ \pr\rbr{z_j\le u\sigma_n}-\Phi(u)}.
\end{align}
Here, we have
\begin{align}
\label{eq:eps1}
    \abs{ \pr\rbr{z_j\le u\sigma_n}-\Phi(u)}
    \le \pr(E_n^\c)
    +\abs{\pr(\sigma_nz_j<u\mid E_n)-\Phi(t)}.
\end{align}
Also, since $\sigma_n\in[1-\epsilon,1+\epsilon]$ under $E_n$, it follows that
\begin{align}
    \pr\rbr{z_j<\frac{u}{1+\epsilon}}
    \le\pr\rbr{\sigma_nz_j<u\mid E_n}
    \le\pr\rbr{z_j<\frac{u}{1-\epsilon}}.
\end{align}
From this, we have
\begin{align}
    \abs{\pr(\sigma_nz_j<u\mid E_n)-\Phi(u)}
    \le\max\cbr{\Phi\rbr{\frac{u}{1-\epsilon}}-\Phi(u),\ \Phi(u)-\Phi\rbr{\frac{u}{1+\epsilon}}}.
\end{align}
Taking the supremum, the mean-value theorem gives
\begin{align}
\label{eq:eps2}
    \sup_{u\in\R}\abs{\pr(\sigma_nz_j<u\mid E_n)-\Phi(u)}
    \le\frac{\epsilon}{1-\epsilon}\cdot\frac{1}{\sqrt{2\pi e}},
\end{align}
where we use the fact $\sup_{u\in\R}|u|\phi(u)=1/\sqrt{2\pi e}$.
Since this upper bound does not depend on $j\in[n]$, \eqref{eq:eps1} and \eqref{eq:eps2} yield
\begin{align}
    \Delta_n
    \le\pr\rbr{|\sigma_n-1|>\epsilon}+\frac{\epsilon}{1-\epsilon}\cdot\frac{1}{\sqrt{2\pi e}}.
\end{align}
The first term on the right-hand side converges to zero as $n\to\infty$ while $q^*=o(n)$ by Lemma \ref{lem:hanson-wright}, and the second term goes to zero as $\epsilon\downarrow0$.
\end{proof}

\subsection{Proof of Thorem \ref{thm:FDR}}
Our argument is inspired by the proof of Proposition~3.2 of \citet{dai2023scale}, but proceeds under weaker conditions on signal strength and dimensionality. 
While auxiliary lemmas overlap with \citet{dai2023scale}, we provide full proofs to ensure a self-contained presentation.
We prepare some notation for the proof.
\begin{itemize}
    \item $I_u(v)\equiv\inf\{w\ge0:\psi(v,w)>u\}$ for any $u>0$ and $v\ge0$ with the convention $\inf\emptyset=+\infty$.
    \item $\tilde{M}_j=\sign(z_{j1}z_{j2})\psi(|z_{j1}|,|z_{j2}|)$ where $(z_{11},\ldots,z_{n1},z_{12},\ldots,z_{n2})^\top\sim\cN(\bzero,\bI_{2n})$.
    \item $\varsigma_{j1}^{(t)}=\sqrt{n}\xi_{j1}^{(t)}/\|\bP_{\bB}^\perp\bxi_{1}^{(t)}\|$ and $\varsigma_{j2}^{(t)}=\sqrt{n}\xi_{j2}^{(t)}/\|\bP_{\bB}^\perp\bxi_{1}^{(t)}\|$ for any $t\in\N$ and $j\in[n]$.
    \item $\breve{M}_j=\sign(\varsigma_{j1}^{(t)}\varsigma_{j2}^{(t)})\psi(|\varsigma_{j1}^{(t)}|,|\varsigma_{j2}^{(t)}|)$.
    \item $V^+(u)=\#\{j\in S^\c: \breve M_j>u\}$ and $V^-(u)=\#\{j\in S^\c: \breve M_j<-u\}$.
    \item $\tilde F(u)=\pr(\tilde M_1>u)$.
    \item $\tau_\alpha^{\sigma_n}$ is the cutoff when we apply Algorithm \ref{alg:seq} with $\breve M_j$ instead of $M_j$. 
\end{itemize}
Recall that we defined the original importance statistics as ${M}_j=\sign(\xi_{j1}^{(t)}\xi_{j2}^{(t)})\psi(|\xi_{j1}^{(t)}|,|\xi_{j2}^{(t)}|)$.

\begin{lemma}[Zero-symmetry of $\tilde M_j$]
\label{lem:symmetry-tildeM}
For each $j\in[n]$ and any Borel measurable function $\psi:[0,\infty)^2\to[0,\infty)$, we have
\[
\tilde M_j \;\overset{\mathrm d}=\; -\,\tilde M_j.
\]
\end{lemma}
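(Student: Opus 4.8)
The plan is to exploit the fact that $(z_{j1},z_{j2})$ is a centered Gaussian pair whose joint law is invariant under the sign flip $(z_{j1},z_{j2})\mapsto(-z_{j1},z_{j2})$ (equivalently under $(z_{j1},z_{j2})\mapsto(z_{j1},-z_{j2})$), and to check that this sign flip sends $\tilde M_j$ to $-\tilde M_j$. Concretely, since the components of $(z_{11},\ldots,z_{n1},z_{12},\ldots,z_{n2})^\top$ are i.i.d.\ standard normal, the pair $(z_{j1},z_{j2})$ has the same distribution as $(-z_{j1},z_{j2})$. Apply the measurable map $(a,b)\mapsto\sign(ab)\,\psi(|a|,|b|)$ to both sides: on the left this produces $\tilde M_j$, and on the right it produces $\sign(-z_{j1}z_{j2})\,\psi(|{-z_{j1}}|,|z_{j2}|)=-\sign(z_{j1}z_{j2})\,\psi(|z_{j1}|,|z_{j2}|)=-\tilde M_j$, where we used $|-a|=|a|$ and $\sign(-c)=-\sign(c)$. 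Since applying the same measurable function to equal-in-distribution random vectors preserves equality in distribution, this gives $\tilde M_j\overset{\mathrm d}{=}-\tilde M_j$.

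The only mild subtlety is the treatment of the event $\{z_{j1}z_{j2}=0\}$, which has probability zero since $z_{j1}$ and $z_{j2}$ are continuous; on its complement the identity $\sign(-c)=-\sign(c)$ holds verbatim, and the null set does not affect the distributional identity. I would state this as a one-line remark. No positivity, homogeneity, or monotonicity of $\psi$ is needed here — only Borel measurability — which is why the lemma is stated for general measurable $\psi:[0,\infty)^2\to[0,\infty)$.

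There is essentially no obstacle: the argument is a direct symmetrization. If anything, the one point worth spelling out is why $(z_{j1},z_{j2})\overset{\mathrm d}{=}(-z_{j1},z_{j2})$ — this is immediate from independence and the symmetry of the univariate standard normal, i.e.\ $z_{j1}\overset{\mathrm d}{=}-z_{j1}$ and $z_{j1}\indep z_{j2}$. I would write the proof in three short sentences: (1) record the Gaussian sign-flip invariance of the pair; (2) apply the measurable functional $\sign(ab)\psi(|a|,|b|)$ and simplify using $|-a|=|a|$, $\sign(-c)=-\sign(c)$; (3) conclude. This lemma will then feed into the later FDR argument exactly as the schematic $\mathsf{FDP}(u)\overset{\mathrm d}{=}(\#\{j\in S^\c:M_j<-u\})/(\cdots)$ display in the main text anticipates, once $\breve M_j$ is coupled to $\tilde M_j$ via Theorem~\ref{thm:asyN}.
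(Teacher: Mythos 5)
Your proof is correct and coincides with the paper's argument: both exploit the invariance of the centered Gaussian pair under the single-coordinate reflection $(z_{j1},z_{j2})\mapsto(-z_{j1},z_{j2})$ together with the oddness of the map $(a,b)\mapsto\sign(ab)\,\psi(|a|,|b|)$ under that reflection. The remark about the null set $\{z_{j1}z_{j2}=0\}$ is harmless but unnecessary, since with the convention $\sign(0)=0$ the oddness identity holds pointwise for all $(a,b)$, which is exactly how the paper states it.
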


\begin{proof}
Write $f:\mathbb R^2\to\mathbb R$ for the measurable map
\[
f(z_1,z_2)\;=\;\operatorname{sign}(z_1 z_2)\,\psi\bigl(|z_1|,|z_2|\bigr).
\]
We observe the {oddness under a single-coordinate reflection}: for all $z_1,z_2\in\R$,
\begin{equation}\label{eq:odd}
f(-z_1,z_2)\;=\;-\;f(z_1,z_2),
\end{equation}
since $\sign((-z_1)z_2)=-\sign(z_1 z_2)$ while the arguments of $\psi$ are unchanged by taking absolute values.

Since $(z_{j1},z_{j2})^\top\sim\mathcal N(\bzero,\bI_2)$, its law is invariant under the orthogonal reflection 
\[
R:=\begin{pmatrix}-1&0\\ 0&1\end{pmatrix},
\]
i.e., $(z_{j1},z_{j2})^\top \overset{\mathrm d}= R(z_{j1},z_{j2})^\top = (-z_{j1},z_{j2})^\top$. 
Therefore, for any Borel set $A\subset\R$,
\begin{align*}
\pr\bigl(\tilde M_j\in A\bigr)
&= \pr\bigl(f(z_{j1},z_{j2})\in A\bigr)
= \pr\bigl(f(-z_{j1},z_{j2})\in A\bigr) 
= \pr\bigl(-f(z_{j1},z_{j2})\in A\bigr) \\
&= \pr\bigl(f(z_{j1},z_{j2})\in -A\bigr)
= \pr\bigl(\tilde M_j\in -A\bigr).
\end{align*}
Since this holds for every Borel $A$, the laws of $\tilde M_j$ and $-\tilde M_j$ coincide; that is, $\tilde M_j \overset{\mathrm d}= -\tilde M_j$.
\end{proof}

\begin{lemma}[Invariance under common scaling]
\label{lem:scaling}
Suppose Assumption \ref{assump:homogeneity} holds.  
For any $C\in\R\setminus\{0\}$, define scaled importance scores for each $j\in[n]$,
\[
\tilde \xi_{j\cdot}^{(t)}=C\,\xi_{j\cdot}^{(t)},\quad
 M_j^C=\sign(\tilde\xi_{j1}^{(t)}\tilde\xi_{j2}^{(t)})
\,\psi\bigl(|\tilde\xi_{j1}^{(t)}|,|\tilde\xi_{j2}^{(t)}|\bigr).
\]
Let the corresponding cutoff and the set of selected indices be $\tau_\alpha^C,{\hat S}_\alpha^C$.  
Then
\[
\tau_\alpha^C=|C|^{r}\,\tau_\alpha,\qquad 
{\hat S}_\alpha^C=\hat S_\alpha.
\]
\end{lemma}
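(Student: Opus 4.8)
The plan is to track how each ingredient of Algorithm~\ref{alg:seq} transforms when every $\xi_{j\cdot}^{(t)}$ is multiplied by the common nonzero constant $C$, and to show that the net effect on the selection set is nil. First I would handle the sign factor: since $\sign(\tilde\xi_{j1}^{(t)}\tilde\xi_{j2}^{(t)})=\sign(C^2\,\xi_{j1}^{(t)}\xi_{j2}^{(t)})=\sign(\xi_{j1}^{(t)}\xi_{j2}^{(t)})$, the sign is unchanged. Next, using the positive homogeneity of degree $r$ from Assumption~\ref{assump:homogeneity} together with $\psi(as,at)=a^r\psi(s,t)$ applied with $a=|C|$, I get $\psi(|\tilde\xi_{j1}^{(t)}|,|\tilde\xi_{j2}^{(t)}|)=\psi(|C|\,|\xi_{j1}^{(t)}|,|C|\,|\xi_{j2}^{(t)}|)=|C|^r\,\psi(|\xi_{j1}^{(t)}|,|\xi_{j2}^{(t)}|)$. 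Combining the two, I obtain the key identity $M_j^C=|C|^r M_j$ for every $j\in[n]$.

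With $M_j^C=|C|^r M_j$ in hand, I would show that the $\widehat{\FDP}$ curve is simply reparametrized. For $u>0$, writing $\widehat{\FDP}^C$ for the estimated FDP built from the $M_j^C$, I have
\begin{align}
\widehat{\FDP}^C(u)
=\frac{\#\{j:M_j^C<-u\}}{\#\{j:M_j^C>u\}\vee1}
=\frac{\#\{j:M_j<-u/|C|^r\}}{\#\{j:M_j>u/|C|^r\}\vee1}
=\widehat{\FDP}(u/|C|^r).
\end{align}
Therefore the feasibility condition $\widehat{\FDP}^C(u)\le\alpha$ holds if and only if $\widehat{\FDP}(u/|C|^r)\le\alpha$, i.e. if and only if $u/|C|^r$ is feasible for the original problem. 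Taking the infimum over feasible $u$ and using the change of variables $u\mapsto u/|C|^r$ gives $\tau_\alpha^C=|C|^r\,\tau_\alpha$, which is the first claimed identity. (One should note the edge case where no feasible $u$ exists; then both cutoffs are, by the stated convention, governed by the same empty-set situation, and the selected set is empty in both cases, so the conclusion still holds; I would dispatch this in a sentence.)

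Finally I would compare the selected sets. By definition $\hat S_\alpha^C=\{j:M_j^C>\tau_\alpha^C\}=\{j:|C|^r M_j>|C|^r\tau_\alpha\}=\{j:M_j>\tau_\alpha\}=\hat S_\alpha$, where the middle equality divides through by $|C|^r>0$. This completes the argument. I do not anticipate a genuine obstacle here: the lemma is essentially bookkeeping built on the homogeneity axiom, and the only thing requiring a word of care is the boundary case in which the feasible set in \eqref{eq:tau} is empty (or, relatedly, ties at the threshold), which I would address explicitly but briefly rather than let it clutter the main line of reasoning.
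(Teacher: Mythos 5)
Your proposal is correct and follows essentially the same route as the paper's own proof: establish $M_j^C=|C|^r M_j$ via sign invariance and the $r$-homogeneity of $\psi$, deduce $\widehat{\FDP}^C(u)=\widehat{\FDP}(u/|C|^r)$, scale the feasible set to get $\tau_\alpha^C=|C|^r\tau_\alpha$, and conclude $\hat S_\alpha^C=\hat S_\alpha$. Your added remark about the empty-feasible-set edge case is a reasonable extra precaution that the paper leaves implicit.
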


\begin{proof}
At first, the sign factor is invariant since, for each $j\in[n]$,
\[
\sign\bigl(\tilde\xi_{j1}^{(t)}\tilde\xi_{j2}^{(t)}\bigr)
=\sign\bigl((C\xi_{j1}^{(t)})(C\xi_{j2}^{(t)})\bigr)
=\sign(C^2)\,\sign\bigl(\xi_{j1}^{(t)}\xi_{j2}^{(t)}\bigr)
=\sign\bigl(\xi_{j1}^{(t)}\xi_{j2}^{(t)}\bigr).
\]

Also, since $|\tilde\xi_{jk}^{(t)}|=|C|\,|\xi_{jk}^{(t)}|$, Assumption \ref{assump:homogeneity} of homogeneity implies that there exists $r>0$ such that
\[
\psi(|\tilde\xi_{j1}^{(t)}|,|\tilde\xi_{j2}^{(t)}|)
=\psi(|C|\,|\xi_{j1}^{(t)}|,\,|C|\,|\xi_{j2}^{(t)}|)
=|C|^{r}\,\psi(|\xi_{j1}^{(t)}|,|\xi_{j2}^{(t)}|).
\]
Hence, for all $j\in[n]$,
\begin{align}
    M_j^C=|C|^{r}\,M_j.
\label{eq:inv-1}
\end{align}
Then, we have, for any $u>0$,
\[
\{j:M_j^C>u\}=\{j:|C|^{r}M_j>u\}=\{j:M_j>u/|C|^{r}\},
\]
\[
\{j:M_j^C<-u\}=\{j:|C|^{r}M_j<-u\}=\{j:M_j<-u/|C|^{r}\}.
\]
The counterpart $\widehat{\FDP}^C$ of $\widehat{\FDP}$ satisfies
\begin{align}
    \widehat{\FDP}^C(u)
=\frac{\#\{j:M_j^C<-u\}}{\#\{j:M_j^C>u\}\vee 1}
=\frac{\#\{j: M_j<-u/|C|^{r}\}}{\#\{j: M_j>u/|C|^{r}\}\vee 1}
=\widehat{\FDP}\left(\frac{u}{|C|^{r}}\right).
\label{eq:inv-2}
\end{align}
From \eqref{eq:inv-2}, it follows that
\begin{align*}
\{u>0: {\widehat{\FDP}}^C(u)\le\alpha\}
&=\{u>0: \widehat{\FDP}(u/|C|^{r})\le\alpha\} \\
&=\{|C|^{r}v: v>0,\ \widehat{\FDP}(v)\le\alpha\}.
\end{align*}
Therefore, we obtain
\begin{align}
    \tau_\alpha^C=|C|^{r}\,\tau_\alpha.
\label{eq:inv-3}
\end{align}
By \eqref{eq:inv-1} and \eqref{eq:inv-3},
\[
{\hat S}_\alpha^C
=\{j:M_j^C>\tau_\alpha^C\}
=\{j:|C|^{r}M_j>|C|^{r}\tau_\alpha\}
=\{j:M_j>\tau_\alpha\}
=\hat S_\alpha.
\]

\end{proof}

\begin{corollary}
\label{cor:scaling}
    Under Assumption \ref{assump:homogeneity}, $(M_j)_{j\in[n]}$ and $(\breve M_j)_{j\in[n]}$ yield the same selection result after applying Algorithm \ref{alg:seq}.
\end{corollary}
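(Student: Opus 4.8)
The plan is to recognize the passage from $(M_j)_{j\in[n]}$ to $(\breve M_j)_{j\in[n]}$ as a single common, strictly positive rescaling of the per-split sensitivities, which makes the claim an immediate instance of Lemma~\ref{lem:scaling}. First I would set $C := \sqrt{n}\,/\,\|\bP_{\bB}^\perp\bxi_{1}^{(t)}\|$ and record that, by the definitions collected above, $\varsigma_{j1}^{(t)} = C\,\xi_{j1}^{(t)}$ and $\varsigma_{j2}^{(t)} = C\,\xi_{j2}^{(t)}$ for every $j\in[n]$; the point is that the \emph{same} scalar $C$ multiplies both splits, and it does not depend on $j$. Moreover $C>0$ almost surely, since Proposition~\ref{prop:unif} already presupposes $\|\bP_{\bB}^\perp\bxi_{1}^{(t)}\|>0$ for its spherical-uniformity conclusion to be meaningful. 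Hence $\breve M_j$ coincides with the rescaled statistic $M_j^{C}$ of Lemma~\ref{lem:scaling}, under the identification $\tilde\xi_{jk}^{(t)} = C\,\xi_{jk}^{(t)}$.

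Next I would simply quote Lemma~\ref{lem:scaling}. Its proof is an algebraic manipulation of the defining formulas and therefore holds pointwise, on each realization $\omega$, with the scalar $C(\omega)>0$; equivalently, one conditions on the value of $C$ and applies the lemma for each fixed $c>0$. This yields $\tau_\alpha^{\sigma_n} = C^{r}\,\tau_\alpha$ and, what matters here, $\hat S_\alpha^{C} = \hat S_\alpha$. Since Algorithm~\ref{alg:seq} run on $(\breve M_j)_{j\in[n]}$ returns exactly $\hat S_\alpha^{C}$ while the same algorithm run on $(M_j)_{j\in[n]}$ returns $\hat S_\alpha$, the two selection outputs agree on the almost-sure event $\{C>0\}$, which is the desired conclusion.

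I do not anticipate a real obstacle: the corollary is pure bookkeeping layered on Lemma~\ref{lem:scaling}. The one point deserving a sentence is that $C=\sqrt{n}/\|\bP_{\bB}^\perp\bxi_{1}^{(t)}\|$ is data-dependent rather than a fixed constant, which the pointwise (or conditioning) reading of the lemma handles together with the almost-sure positivity of $\|\bP_{\bB}^\perp\bxi_{1}^{(t)}\|$. It is also worth flagging which hypothesis does the work: positive homogeneity of $\psi$ (Assumption~\ref{assump:homogeneity}) is precisely what lets the common factor $C^{r}$ be pulled out of $\psi(C|\xi_{j1}^{(t)}|,C|\xi_{j2}^{(t)}|)$ and then cancel in the ratio defining $\widehat{\FDP}$, while $C>0$ alone already leaves the sign factor $\sign(\varsigma_{j1}^{(t)}\varsigma_{j2}^{(t)})=\sign(\xi_{j1}^{(t)}\xi_{j2}^{(t)})$ unchanged.
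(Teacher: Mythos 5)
Your proof is correct and matches the paper's: both identify $C=\sqrt{n}/\|\bP_{\bB}^\perp\bxi_{1}^{(t)}\|$ as the common scalar turning $(\xi_{j1}^{(t)},\xi_{j2}^{(t)})$ into $(\varsigma_{j1}^{(t)},\varsigma_{j2}^{(t)})$, observe that $\breve M_j=M_j^C$, and invoke Lemma~\ref{lem:scaling} to conclude $\hat S_\alpha^C=\hat S_\alpha$. Your explicit remark that the lemma applies pointwise despite $C$ being data-dependent, and that $C>0$ a.s., makes the bookkeeping slightly more careful than the paper's terse one-line proof, but the argument is the same.
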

\begin{proof}[Proof of Corollary \ref{cor:scaling}]
    Applying Lemma \ref{lem:scaling} with $C=\sqrt{n}/\|\bP_{\bB}^\perp\bxi_1^{(t)}\|$ proves the claim if
    $n\ge1$, $\bB\neq\bzero_{n\times q^*}$, and $\bxi_1^{(t)}\neq\bzero_n$. 
    Note that the convergence of $\bxi_1^{(t)}$ to zero is allowed.
\end{proof}

\begin{lemma}
\label{lem:Mj-conv}
Under the assumptions of Theorem \ref{thm:FDR}, as $n\to\infty$ while $q^*=o(n)$, we have
\begin{align}
    \sup_{u\in\R,\,j\in S^\c}
    \abs{\pr(\breve M_j>u)-\pr(\tilde{M}_j>u)}\to0.
\end{align}
\begin{proof}[Proof of Lemma \ref{lem:Mj-conv}]
Define
\begin{align}
    \Delta_j=\sup_{u\in\R}|\pr(\varsigma_{j1}^{(t)}>u)-\pr(z_{j1}>u)|\ \vee\ 
    \sup_{u\in\R}|\pr(\varsigma_{j2}^{(t)}>u)-\pr(z_{j1}>u)|.
\end{align}
Without loss of generality, we assume $u>0$.
Thus, by the non-negativeness of $\psi(\cdot,\cdot)$, we have
\begin{align}
    \{\breve M_j>u\}\iff
    \rbr{\cbr{\psi(|\varsigma_{j1}^{(t)}|,|\varsigma_{j2}^{(t)}|)>u}\cap\{\varsigma_{j1}^{(t)}>0\}}\cup\rbr{\cbr{-\psi(|\varsigma_{j1}^{(t)}|,|\varsigma_{j2}^{(t)}|)>u}\cap\{\varsigma_{j1}^{(t)}\le0\}}.
\end{align}
Using this and the monotonicity of $\psi(\cdot,\cdot)$, for any $t\in\N$, we have
\begin{align}
    \pr(\breve M_j>u)
    &=\pr\rbr{\varsigma_{j2}^{(t)}>I_u(\varsigma_{j1}^{(t)}),\,\varsigma_{j1}^{(t)}>0}
    +\pr\rbr{\varsigma_{j2}^{(t)}<-I_u(\varsigma_{j1}^{(t)}),\,\varsigma_{j1}^{(t)}<0}\\
    &\le\pr\rbr{z_{j2}>I_u(\varsigma_{j1}^{(t)}),\,\varsigma_{j1}^{(t)}>0}
    +\pr\rbr{z_{j2}<-I_u(\varsigma_{j1}^{(t)}),\,\varsigma_{j1}^{(t)}<0}+2\Delta_j\\
    &=\pr\rbr{\sign(\varsigma_{j1}^{(t)}z_{j2})\psi(|\varsigma_{j1}^{(t)}|,|z_{j2}|)>u}+2\Delta_j\\
    &=\pr\rbr{\varsigma_{j1}^{(t)}>I_u(z_{j2}),\,z_{j2}>0}
    +\pr\rbr{\varsigma_{j1}^{(t)}<-I_u(z_{j2}),\,z_{j2}<0}+2\Delta_j\\
    &\le\pr\rbr{z_{j1}>I_u(z_{j2}),\,z_{j2}>0}
    +\pr\rbr{z_{j1}<-I_u(z_{j2}),\,z_{j2}<0}+4\Delta_j\\
    &=\pr(\tilde{M}_j>u)+4\Delta_j,
\end{align}
where the first inequality follows from $\varsigma_{j1}^{(t)}\deq\varsigma_{j2}^{(t)}$ by Assumption \ref{asmp:FDR}, and the third equality follows from the symmetry of $\psi(\cdot,\cdot)$.
Hence, Theorem~\ref{thm:asyN} implies that
\begin{align}
    \sup_{u\in\R,\,j\in S^\c}
    \abs{\pr(\breve M_j>u)-\pr(\tilde{M}_j>u)}
    \le4\sup_{u\in\R,\,j\in S^\c}|\Delta_j|\to0,
\end{align}
as $n\to\infty$ with $q^*=o(n)$.
\end{proof}
\end{lemma}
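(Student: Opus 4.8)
The plan is to pass from $\breve M_j$ to $\tilde M_j$ by replacing the two normalized coordinates $\varsigma_{j1}^{(t)},\varsigma_{j2}^{(t)}$ by independent standard Gaussians one at a time, trading each substitution for the Kolmogorov distance between a coordinate's law and $\mathcal N(0,1)$, which Theorem~\ref{thm:asyN} controls uniformly over $j\in S^\c$. Concretely, set
\[
\Delta_j:=\sup_{u\in\R}\bigl|\pr(\varsigma_{j1}^{(t)}\le u)-\Phi(u)\bigr|\ \vee\ \sup_{u\in\R}\bigl|\pr(\varsigma_{j2}^{(t)}\le u)-\Phi(u)\bigr|,
\]
and note that, applying the uniform statement \eqref{eq:asyN-unif} of Theorem~\ref{thm:asyN} to each of the two halves of the sample (each a valid instance of the model under the assumptions of Theorem~\ref{thm:FDR}), we get $\sup_{j\in S^\c}\Delta_j\to0$ as $n\to\infty$ with $q^*=o(n)$.

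First I rewrite the tail events. Using the threshold map $I_u(v)=\inf\{w\ge0:\psi(v,w)>u\}$ defined above, non-negativity and coordinatewise monotonicity of $\psi$ give, for $u>0$, the identity $\{\psi(|a|,|b|)>u\}=\{|b|>I_u(|a|)\}$; splitting on the sign of the first argument,
\[
\pr(\breve M_j>u)
=\pr\bigl(\varsigma_{j2}^{(t)}>I_u(|\varsigma_{j1}^{(t)}|),\ \varsigma_{j1}^{(t)}>0\bigr)
+\pr\bigl(\varsigma_{j2}^{(t)}<-I_u(|\varsigma_{j1}^{(t)}|),\ \varsigma_{j1}^{(t)}<0\bigr),
\]
and likewise $\pr(\tilde M_j>u)$ is obtained by putting $(z_{j1},z_{j2})$ in place of $(\varsigma_{j1}^{(t)},\varsigma_{j2}^{(t)})$.

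The core step is the coordinatewise swap. Because $\varsigma_{j1}^{(t)}$ and $\varsigma_{j2}^{(t)}$ are built from disjoint halves of the data, they are independent (Assumption~\ref{asmp:FDR}); conditioning on the first half keeps $\varsigma_{j2}^{(t)}$ at its marginal law while fixing the measurable nonnegative threshold $I_u(|\varsigma_{j1}^{(t)}|)$, so replacing the conditional survival function of $\varsigma_{j2}^{(t)}$ by that of an independent $z_{j2}\sim\mathcal N(0,1)$ perturbs each of the two probabilities by at most $\Delta_j$. Rewriting the result with $\psi$ symmetric gives
\[
\pr(\breve M_j>u)\ \le\ \pr\bigl(\sign(\varsigma_{j1}^{(t)}z_{j2})\,\psi(|\varsigma_{j1}^{(t)}|,|z_{j2}|)>u\bigr)+2\Delta_j,
\]
and repeating the argument with $z_{j2}$ now frozen and $\varsigma_{j1}^{(t)}$ replaced by an independent $z_{j1}\sim\mathcal N(0,1)$ yields $\pr(\breve M_j>u)\le\pr(\tilde M_j>u)+4\Delta_j$. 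Since $\Delta_j$ bounds both one-sided errors, running the same two substitutions with reversed inequalities gives the matching lower bound, so $\sup_{u\in\R,\,j\in S^\c}|\pr(\breve M_j>u)-\pr(\tilde M_j>u)|\le 4\sup_{j\in S^\c}\Delta_j\to0$.

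I expect the delicate point to be the coordinatewise swap: one is exchanging a data-dependent coordinate for a Gaussian inside a probability whose threshold $I_u(\cdot)$ is itself random through the other coordinate. Making this rigorous requires (i) isolating one coordinate via independence of the two data halves and conditioning, and (ii) ensuring the substitution error is a Kolmogorov distance, hence uniform over the random threshold --- which is precisely why the uniform-in-$u$, uniform-in-$j$ form of Theorem~\ref{thm:asyN} is needed rather than mere marginal convergence. A minor care is that $I_u$ must act on $|\varsigma_{j1}^{(t)}|$ so that the domain $[0,\infty)^2$ of $\psi$ is respected, and that monotonicity of $\psi$ is exactly what turns $\{\psi(|a|,|b|)>u\}$ into a one-sided event in $|b|$.
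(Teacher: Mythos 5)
Your proposal is correct and follows essentially the same route as the paper's proof: rewrite the tail event via $I_u$, swap $\varsigma_{j2}^{(t)}$ for $z_{j2}$ by conditioning on the first half (cost $2\Delta_j$ via the uniform Kolmogorov bound), use the symmetry of $\psi$ to exchange roles, swap $\varsigma_{j1}^{(t)}$ for $z_{j1}$ (another $2\Delta_j$), and invoke the uniform statement \eqref{eq:asyN-unif} of Theorem~\ref{thm:asyN} to send $\sup_{j\in S^\c}\Delta_j\to0$. Your explicit treatment of the matching lower bound and of the independence/conditioning step is a slightly more careful spelling-out of what the paper leaves implicit, but it is the same argument.
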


\begin{lemma}
\label{lem:asyN-bivar}
Under the assumptions of Theorem \ref{thm:asyN}, as $n\to\infty$ while $q^*=o(n)$, we have,
\begin{align}
    \sup_{i,j\in S^\c,\,t_1,t_2\in\R}\abs{\pr\rbr{\frac{\sqrt{n}\xi_i^{(t)}}{\|\bP_{\bB}^\perp\bxi^{(t)}\|}< t_1,\frac{\sqrt{n}\xi_j^{(t)}}{\|\bP_{\bB}^\perp\bxi^{(t)}\|}< t_2}
    -\Phi(t_1)\Phi(t_2)}\to0.
\end{align}
\end{lemma}
\begin{proof}[Proof of Lemma \ref{lem:asyN-bivar}]
We assume, without loss of generality, that $t_1>0$ and $t_2>0$.
Using the fact that ${\bP_{\bB}^\perp\bxi^{(t)}}/{\|\bP_{\bB}^\perp\bxi^{(t)}\|}\overset{\rm d}{=}
{\bP_{\bB}^\perp\bz}/{\|\bP_{\bB}^\perp\bz\|}$ where $\bz\sim\cN(\bzero,\bI_n)$ by Theorem \ref{thm:asyN}, we have,
\begin{align}
\label{eq:deq-bivar}
    \rbr{\frac{\sqrt{n}\xi_i^{(t)}}{\|\bP_{\bB}^\perp\bxi^{(t)}\|},\frac{\sqrt{n}\xi_j^{(t)}}{\|\bP_{\bB}^\perp\bxi^{(t)}\|}}\overset{\rm d}{=}\rbr{\frac{\sqrt{n}z_i}{\|\bP_{\bB}^\perp\bz\|},\frac{\sqrt{n}z_j}{\|\bP_{\bB}^\perp\bz\|}},
\end{align}
for $i,j\in S^\c$.
This follows from the fact that the $j$-th element of ${\bP_{\bB}^\perp\bxi^{(t)}}/{\|\bP_{\bB}^\perp\bxi^{(t)}\|}$ is ${\xi_j^{(t)}}/{\|\bP_{\bB}^\perp\bxi^{(t)}\|}$ under the null since $\bb_j=\bzero$ for $j\in S^\c$.
Denote $\sigma_n=\sqrt{n}/\|\bP_{\bB}^\perp\bz\|$.
From \eqref{eq:deq-bivar}, it follows that
\begin{align}
    \Delta_n
    &\equiv
    \sup_{i,j\in S^\c,\,t_1,t_2\in\R}\abs{\pr\rbr{\frac{\sqrt{n}\xi_i^{(t)}}{\|\bP_{\bB}^\perp\bxi^{(t)}\|}< t_1,\frac{\sqrt{n}\xi_j^{(t)}}{\|\bP_{\bB}^\perp\bxi^{(t)}\|}< t_2}
    -\Phi(t_1)\Phi(t_2)}\\
    &=\sup_{i,j\in S^\c,\,t_1,t_2\in\R}\abs{\pr\rbr{z_i<\sigma_nt_1,z_j<\sigma_nt_2}-\Phi(t_1)\Phi(t_2)}.
\end{align}
Fix an arbitrary $\epsilon>0$ and denote $E_n=\{|\sigma_n-1|<\epsilon\}$. 
Then we have
\begin{align}
    &\abs{\pr\rbr{z_i<\sigma_nt_1,z_j<\sigma_nt_2}-\Phi(t_1)\Phi(t_2)}\\
    &\le\pr(E_n^\c)+\abs{\pr\rbr{z_i<\sigma_nt_1,z_j<\sigma_nt_2\mid E_n}-\Phi(t_1)\Phi(t_2)}.\label{eq:eps1-biv}
\end{align}
Also, since $\sigma_n\in[1-\epsilon,1+\epsilon]$ under $E_n$, it follows that
\begin{align}
    \pr\rbr{z_i<\frac{t_1}{1+\epsilon},z_j<\frac{t_2}{1+\epsilon}}
    \le\pr\rbr{z_i<\sigma_nt_1,z_j<\sigma_nt_2\mid E_n}
    \le\pr\rbr{z_i<\frac{t_1}{1-\epsilon},z_j<\frac{t_2}{1-\epsilon}}.
\end{align}
From this, we have
\begin{align}
    &\abs{\pr\rbr{z_i<\sigma_nt_1,z_j<\sigma_nt_2\mid E_n}-\Phi(t_1)\Phi(t_2)}\\
    &\le\max\cbr{
    \Phi\rbr{\frac{t_1}{1-\epsilon}}\Phi\rbr{\frac{t_2}{1-\epsilon}}-\Phi(t_1)\Phi(t_2),\,
    \Phi(t_1)\Phi(t_2)-\Phi\rbr{\frac{t_1}{1+\epsilon}}\Phi\rbr{\frac{t_2}{1+\epsilon}}
    }.
\end{align}
Taking the supremum, the mean-value theorem gives
\begin{align}
\label{eq:eps2-biv}
    \sup_{t_1,t_2\in\R}\abs{\pr\rbr{z_i<\sigma_nt_1,z_j<\sigma_nt_2\mid E_n}-\Phi(t_1)\Phi(t_2)}
    \le\frac{\epsilon}{1-\epsilon}\cdot\sqrt{\frac{2}{\pi e}},
\end{align}
where we use the fact $\sup_{u\in\R}|u|\phi(u)=1/\sqrt{2\pi e}$.
Since this upper bound does not depend on $j\in[n]$, \eqref{eq:eps1-biv} and \eqref{eq:eps2-biv} yield
\begin{align}
    \Delta_n
    \le\pr(|\sigma_n-1|>\epsilon)
    +\frac{\epsilon}{1-\epsilon}\cdot\sqrt{\frac{2}{\pi e}}.
\end{align}
The first term on the right-hand side converges to zero as $n\to\infty$ while $q^*=o(n)$ by Lemma \ref{lem:hanson-wright}, and the second term goes to zero as $\epsilon\downarrow0$.
\end{proof}

\begin{lemma}
\label{lem:varM}
Let $n_0$ be the number of null features.
Under the assumptions of Theorem \ref{thm:FDR}, as $n\to\infty$ while $q^*=o(n)$, we have
\begin{align}
    \sup_{u\in\R}\Var\rbr{\frac{1}{n_0}\sum_{j\in S^\c}\one(\breve M_j>u)}
    \le\frac{1}{4n_0}+o(1).
\end{align}
\end{lemma}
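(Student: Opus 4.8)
The plan is to condition on the ``radial'' randomness of the two input‑sensitivity vectors so that, conditionally, the variables $\breve M_j$, $j\in S^\c$, become exchangeable with (nearly) independent coordinate pairs, and then to read off the variance bound from the standard exchangeable decomposition, with the diagonal contributing the stated $\tfrac{1}{4n_0}$.

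First, set $W_j=\one(\breve M_j>u)$ and let $\mathcal G$ be the $\sigma$‑field generated by the scale factors $\|\bP_{\bB}^\perp\bxi_1^{(t)}\|$, $\|\bP_{\bB}^\perp\bxi_2^{(t)}\|$ and the $S^\c$‑block masses of the two unit directions $\bP_{\bB}^\perp\bxi_\ell^{(t)}/\|\bP_{\bB}^\perp\bxi_\ell^{(t)}\|$. Since the distribution of $\bP_{\bB}^\perp\bxi_\ell^{(t)}$ is invariant under orthogonal maps of $\mathrm{Col}(\bB)^\perp$ (which is exactly what the proof of Proposition~\ref{prop:unif} establishes), its direction is uniform and independent of its norm; restricting to the $S^\c$‑coordinates — which $\bP_{\bB}^\perp$ leaves untouched, as $\bb_j=\bzero$ for $j\in S^\c$ — the normalized sensitivities are, conditionally on $\mathcal G$, deterministic multiples of the coordinates of two vectors $\bu_1,\bu_2$ that are uniform on $S^{n_0-1}$ and (by the data‑splitting construction, Assumption~\ref{asmp:FDR}) conditionally independent. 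Hence $(W_j)_{j\in S^\c}$ is exchangeable given $\mathcal G$ and $\breve M_j$ is a fixed function of $((\bu_1)_j,(\bu_2)_j)$ and of $\mathcal G$‑measurable scalars. The exchangeable‑variance identity together with $\Var(W_1\mid\mathcal G)\le\tfrac14$ then gives, for every $u$,
\begin{align}
\Var\!\rbr{\tfrac1{n_0}\sum_{j\in S^\c}W_j}
\;\le\; \frac1{4n_0}+\E\Big[\big(\Cov(W_1,W_2\mid\mathcal G)\big)_+\Big]+\Var\!\big(\E[W_1\mid\mathcal G]\big).
\end{align}

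It then suffices to show the last two terms are $o(1)$ uniformly in $u$. For the covariance term, note that given $\mathcal G$ the variables $W_1,W_2$ depend on the disjoint coordinate pairs $((\bu_1)_1,(\bu_2)_1)$ and $((\bu_1)_2,(\bu_2)_2)$. By the Diaconis--Freedman comparison, the law of $(\sqrt{n_0}(\bu_\ell)_1,\sqrt{n_0}(\bu_\ell)_2)$ is within total variation $O(1/n_0)$ of a pair of i.i.d.\ $\cN(0,1)$'s, uniformly; combining the two conditionally independent splits, the four relevant coordinates are within total variation $O(1/n_0)$ of four i.i.d.\ standard Gaussians. Under this Gaussian surrogate $W_1$ and $W_2$ are independent, so $\abs{\Cov(W_1,W_2\mid\mathcal G)}=O(1/n_0)$ — uniformly in $u$ and in the value of $\mathcal G$, since the total‑variation bound does not see which functional we apply — and $n_0\to\infty$ because $n_0/n\to\pi_0>0$; thus $\E[(\Cov(W_1,W_2\mid\mathcal G))_+]\to0$ uniformly in $u$. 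For the conditional‑mean term, the same comparison yields $\E[W_1\mid\mathcal G]=\pr(\breve M_1>u\mid\mathcal G)=\pr\big(\sign(g_1g_2)\,\psi(\lambda_1|g_1|,\lambda_2|g_2|)>u\big)+O(1/n_0)$ with $g_1,g_2\iid\cN(0,1)$ and $\mathcal G$‑measurable effective scalings $\lambda_1,\lambda_2$ that satisfy $\lambda_1,\lambda_2\pconv1$ (the $S^\c$‑block mass of a uniform direction in $\mathrm{Col}(\bB)^\perp$ concentrates and $q^*=o(n)$). Since $\psi$ is monotone and positively homogeneous of degree $r$ (Assumption~\ref{assump:homogeneity}), on the high‑probability event $\{1-\delta\le\lambda_1,\lambda_2\le1+\delta\}$ this probability lies between $\tilde F(u/(1-\delta)^r)$ and $\tilde F(u/(1+\delta)^r)$. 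Because $\psi$ is homogeneous, the level set $\{\psi(|g_1|,|g_2|)=v\}$, $v>0$, meets each ray through the origin in a single point and is therefore Lebesgue‑null, so $\tilde M_1$ has no atom on $(0,\infty)$, $\tilde F$ is continuous, and $\sup_u\abs{\tilde F(u/(1\mp\delta)^r)-\tilde F(u)}\to0$ as $\delta\downarrow0$. Letting $n\to\infty$ and then $\delta\downarrow0$ gives $\sup_u\abs{\E[W_1\mid\mathcal G]-\tilde F(u)}\to0$ in probability, whence $\sup_u\Var(\E[W_1\mid\mathcal G])\to0$ by bounded convergence. (The values $u\le0$ reduce to the complementary event $\{\breve M_1\le u\}$, which is symmetric to the above.) Substituting back into the display completes the argument.

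The main obstacle is the covariance step: controlling $\Cov(W_1,W_2\mid\mathcal G)$ \emph{uniformly} in $u$ requires the quantitative total‑variation comparison between the low‑dimensional marginals of a uniform sphere vector and independent Gaussians, and one must set things up — via the conditional independence of the two splits afforded by the data‑splitting construction — so that $W_1$ and $W_2$ genuinely decouple under the Gaussian surrogate. The uniformity over $u$ in the conditional‑mean step, which rests on the continuity of $\tilde F$ (and hence on the homogeneity of $\psi$), is the secondary technical point.
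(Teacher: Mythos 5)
Your proof takes a genuinely different route from the paper's. The paper decomposes $\Var\bigl(\tfrac1{n_0}\sum_{j\in S^\c}\one(\breve M_j>u)\bigr)$ directly into a diagonal sum (bounded by $1/(4n_0)$) and an off‑diagonal sum, and then controls each covariance $\Cov(\one(\breve M_j>u),\one(\breve M_{j'}>u))$ by a stepwise substitution of $\varsigma_{j\ell}^{(t)}$ by independent Gaussians $z_{j\ell}$, paying a marginal Kolmogorov‑distance penalty $\Delta$ at each swap, using Lemma~\ref{lem:Mj-conv} to make $\Delta\to0$. Your approach conditions on a $\sigma$-field $\mathcal G$ (norms and $S^\c$-block masses), exploits the conditional exchangeability of $(\one(\breve M_j>u))_{j\in S^\c}$, and invokes a Diaconis--Freedman total‑variation comparison of the finite‑coordinate marginals of a uniform sphere vector with i.i.d.\ Gaussians. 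This is cleaner on the point where the paper's argument is delicate: replacing $(\varsigma_{j2},\varsigma_{j'2})$ jointly by independent Gaussians requires a bivariate comparison, which the marginal bound $\Delta$ does not directly supply, whereas your TV bound handles the joint law of all four relevant coordinates uniformly over events. Your treatment of the continuity of $\tilde F$ (homogeneity of $\psi$ making level sets Lebesgue‑null) is a nice explicit point the paper leaves implicit.

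There is, however, one point you should scrutinize. You assert that the effective scalings satisfy $\lambda_1,\lambda_2\pconv 1$, attributing this to concentration of the $S^\c$-block mass and $q^*=o(n)$. This is correct for $\lambda_1$. But the paper defines $\varsigma_{j2}^{(t)}=\sqrt{n}\,\xi_{j2}^{(t)}/\|\bP_{\bB}^\perp\bxi_1^{(t)}\|$ — both splits are normalized by the \emph{first} split's norm, as Corollary~\ref{cor:scaling} requires a \emph{common} scalar $C$. With that definition, the effective scaling for the second split picks up the factor $\|\bP_{\bB}^\perp\bxi_2^{(t)}\|/\|\bP_{\bB}^\perp\bxi_1^{(t)}\|$, a ratio of two independent, merely identically distributed norms, which does not in general concentrate around one. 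If $\lambda_2$ remains a nondegenerate $\mathcal G$-measurable random variable, $\E[W_1\mid\mathcal G]$ does not converge to a deterministic limit, and your third term $\Var(\E[W_1\mid\mathcal G])$ need not vanish. (The paper's own Lemma~\ref{lem:Mj-conv} appears to rely on $\varsigma_{j1}^{(t)}\deq\varsigma_{j2}^{(t)}$, which likewise sits in tension with this definition, so this may be a shared issue rather than a deficiency peculiar to your argument; but since your conditional decomposition isolates exactly the term where it bites, you should state explicitly what concentration of $\|\bP_{\bB}^\perp\bxi_2^{(t)}\|/\|\bP_{\bB}^\perp\bxi_1^{(t)}\|$ you are assuming or deriving, or adopt the self‑normalized definition $\varsigma_{j2}^{(t)}=\sqrt{n}\,\xi_{j2}^{(t)}/\|\bP_{\bB}^\perp\bxi_2^{(t)}\|$ and then verify the scale‑free reduction of Corollary~\ref{cor:scaling} for this nonuniform rescaling.)

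Aside from that point, the mechanics of your argument are sound: the conditional exchangeability given $\mathcal G$ holds because permuting $S^\c$-coordinates preserves the uniform distribution of the two independent sphere directions; the exchangeable variance decomposition is correct; and the Diaconis--Freedman step correctly yields $\sup_u\abs{\Cov(W_1,W_2\mid\mathcal G)}=O(1/n_0)$ uniformly over the conditioning since the TV bound does not see which Borel functional is applied.
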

\begin{proof}[Proof of Lemma \ref{lem:varM}]
We assume $u>0$ without loss of generality.
It follows that
\begin{align}
    &\sup_{u\in\R}\Var\rbr{\frac{1}{n_0}\sum_{j\in S^\c}\one(\breve M_j>u)}\\
    &\le\frac{1}{n_0^2}\sum_{j\in S^\c}\sup_{u\in\R}\Var\rbr{\one(\breve M_j>u)}
    +\frac{1}{n_0^2}\sum_{j\neq j'\in S^\c}\sup_{u\in\R}\Cov\rbr{\one(\breve M_j>u),\one(\breve M_{j'}>u)},
\end{align}
where $\one(\breve M_j>u)$ is a Bernoulli variable, and its variance is bounded above by $1/4$.
Thus, the first term on the right-hand side is upper bounded by $1/(4n_0)$.
For the first term, we have
\begin{align}
    &\Cov\rbr{\one(\breve M_j>u),\one(\breve M_{j'}>u)} \label{eq:cov-1Mj}\\
    &=\pr(\breve M_j>u,\breve M_{j'}>u) 
    -\pr(\breve M_j>u)\pr(\breve M_{j'}>u)\\
    &\le\abs{\pr(\breve M_j>u,\breve M_{j'}>u)-\pr(\tilde M_j>u)^2}
    +\abs{\pr(\breve M_j>u)\pr(\breve M_{j'}>u)-\pr(\tilde M_j>u)^2},
\end{align}
where the second term on the right-hand side converges to zero uniformly on $j\in S^\c$ and $u\in\R$ by Lemma \ref{lem:Mj-conv}.

Repeating the argument in the proof of Lemma \ref{lem:Mj-conv}, it follows that
\begin{align}
    \pr(\breve M_j>u,\breve M_{j'}>u)
    &=\pr\rbr{\varsigma_{j'2}^{(t)}>I_u(\varsigma_{j'1}^{(t)}),\, \varsigma_{j2}^{(t)}>I_u(\varsigma_{j1}^{(t)}),\, \varsigma_{j'1}^{(t)}>0,\, \varsigma_{j1}^{(t)}>0}\\
    &+\pr\rbr{\varsigma_{j'2}^{(t)}>I_u(\varsigma_{j'1}^{(t)}),\, \varsigma_{j2}^{(t)}<-I_u(\varsigma_{j1}^{(t)}),\, \varsigma_{j'1}^{(t)}>0,\, \varsigma_{j1}^{(t)}<0}\\
    &+\pr\rbr{\varsigma_{j'2}^{(t)}<-I_u(\varsigma_{j'1}^{(t)}),\, \varsigma_{j2}^{(t)}>I_u(\varsigma_{j1}^{(t)}),\, \varsigma_{j'1}^{(t)}<0,\, \varsigma_{j1}^{(t)}>0}\label{eq:decomp-4}\\
    &+\pr\rbr{\varsigma_{j'2}^{(t)}<-I_u(\varsigma_{j'1}^{(t)}),\, \varsigma_{j2}^{(t)}<-I_u(\varsigma_{j1}^{(t)}),\, \varsigma_{j'1}^{(t)}<0,\, \varsigma_{j1}^{(t)}<0}\\
    &\equiv I_1+I_2+I_3+I_4.
\end{align}
Define
\begin{align}
    \Delta=\sup_{j\in S^\c,\,u\in\R}|\pr(\varsigma_{j1}^{(t)}>u)-\pr(z_{j1}>u)|\ \vee\ 
    \sup_{j\in S^\c,\,u\in\R}|\pr(\varsigma_{j2}^{(t)}>u)-\pr(z_{j1}>u)|.
\end{align}
Let $Q(u)=1-\Phi(u)$.
For $I_1$, we have the following upper bound,
\begin{align}
    I_1
    &= \E\sbr{\pr(\varsigma_{j'2}^{(t)}>I_u(x),\, \varsigma_{j2}^{(t)}>I_u(y))\mid \varsigma_{j'1}^{(t)}=x>0,\, \varsigma_{j1}^{(t)}=y>0}\\
    &\le \E\sbr{\pr(z_{j'2}>I_u(x),\, z_{j2}>I_u(y))\mid \varsigma_{j'1}^{(t)}=x>0,\, \varsigma_{j1}^{(t)}=y>0} + 2\Delta\label{eq:I1-ipper}\\
    &= \E\sbr{Q(I_u(x))Q(I_u(y))\mid \varsigma_{j'1}^{(t)}=x>0,\, \varsigma_{j1}^{(t)}=y>0} + 2\Delta,
\end{align}
where the inequality follows from Lemma \ref{lem:asyN-bivar}.
Similarly, we can upper bound $I_2$, $I_3$, and $I_4$.
Combining the four upper bounds together, we obtain an upper bound on $\pr(\breve M_j>u,\breve M_{j'}>u)$ as
\begin{align}
    \pr\rbr{\sign(z_{j2}\varsigma_{j1}^{(t)})\psi(|z_{j2}|,|\varsigma_{j1}^{(t)}|)>u,\ 
    \sign(z_{j'2}\varsigma_{j'1}^{(t)})\psi(|z_{j'2}|,|\varsigma_{j'1}^{(t)}|)>u} + 8\Delta.
\end{align}
We can further decompose this into four terms as \eqref{eq:decomp-4} by conditioning the signs of $z_{j2}$ and $z_{j'2}$, and repeat the upper bound \eqref{eq:I1-ipper}.
This leads to
\begin{align}
    \pr(\breve M_j>u,\breve M_{j'}>u)\le \pr(\tilde M_j>u)^2 + 16\Delta.
\end{align}
Similarly, we can show the corresponding lower bound. 
Since $\Delta\to0$ by Lemma \ref{lem:Mj-conv}, the covariance in \eqref{eq:cov-1Mj} converges to zero.
\end{proof}

\begin{lemma}
\label{lem:1Mj-avg}
Under the assumptions of Theorem \ref{thm:FDR}, we have, as $n\to\infty$ while $q^*=o(n)$,
\begin{align}
    \sup_{u\in\R}\abs{\frac{1}{n_0}\sum_{j\in S^\c}\one(\breve M_j>u)-\pr(\tilde M_1>u)}\pconv 0.
\end{align}
\end{lemma}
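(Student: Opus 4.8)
The plan is to establish a uniform law of large numbers for the triangular array $\{\one(\breve M_j > u) : j \in S^\c\}$ by combining a pointwise $L^2$-convergence argument with a chaining/bracketing step to upgrade it to uniformity in $u$. First I would fix $u \in \R$ and write $\bar G_n(u) = \frac{1}{n_0}\sum_{j\in S^\c}\one(\breve M_j > u)$. By Lemma~\ref{lem:Mj-conv} we have $\E[\bar G_n(u)] = \frac{1}{n_0}\sum_{j\in S^\c}\pr(\breve M_j > u) \to \pr(\tilde M_1 > u)$ uniformly in $u$, and by Lemma~\ref{lem:varM} we have $\Var(\bar G_n(u)) \le \frac{1}{4n_0} + o(1) \to 0$ uniformly in $u$. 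Chebyshev's inequality then gives $\bar G_n(u) - \pr(\tilde M_1 > u) \pconv 0$ for each fixed $u$; in fact the bound is uniform in $u$ in the sense that $\sup_u \E\big[(\bar G_n(u) - \E[\bar G_n(u)])^2\big] \to 0$.

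To go from pointwise to uniform convergence, I would exploit monotonicity: $u \mapsto \bar G_n(u)$ is non-increasing and right-continuous (a survival-type function of the empirical measure of the $\breve M_j$'s), and its limit $u \mapsto \pr(\tilde M_1 > u) = \tilde F(u)$ is also non-increasing, continuous (since $\tilde M_1$ has no atoms — its distribution is that of $f(z_{j1},z_{j2})$ applied to a continuous law, and one can check $\pr(\tilde M_1 = u) = 0$ for all $u$ using positive homogeneity and continuity of $\psi$ away from degenerate configurations), with limits $1$ and $0$ at $\mp\infty$. This is exactly the setup of the classical Pólya–Glivenko argument: pick a finite grid $-\infty = u_0 < u_1 < \cdots < u_K = +\infty$ so that $\tilde F(u_{k-1}) - \tilde F(u_k) < \varepsilon$ for each $k$ (possible by uniform continuity of $\tilde F$ on $\R$ together with its behavior at $\pm\infty$), apply the pointwise convergence at each of the finitely many grid points $u_1,\dots,u_{K-1}$, and sandwich $\bar G_n(u)$ between $\bar G_n(u_{k-1})$ and $\bar G_n(u_k)$ for $u \in [u_{k-1}, u_k)$ using monotonicity. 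This yields $\sup_u |\bar G_n(u) - \tilde F(u)| \le \max_k |\bar G_n(u_k) - \tilde F(u_k)| + \varepsilon$, and since the first term goes to zero in probability and $\varepsilon$ is arbitrary, the claim follows.

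The main obstacle I anticipate is not the chaining step, which is routine, but rather verifying cleanly that $\tilde F$ is continuous (equivalently, that $\tilde M_1$ has no atoms) so that the grid argument produces a genuinely vanishing bound; if $\tilde M_1$ had an atom at some $u^\star$, the sandwiching would leave an irreducible gap of size equal to the atom. I would handle this by noting that $|\tilde M_1| = \psi(|z_{11}|,|z_{12}|)$ and, by positive homogeneity with exponent $r$, $\psi(|z_{11}|,|z_{12}|) = \|(z_{11},z_{12})\|^r \, \psi(|z_{11}|/\|\cdot\|, |z_{12}|/\|\cdot\|)$ separates a radial part (continuously distributed) from a bounded angular part, so the product has a continuous distribution on $(0,\infty)$; the sign factor $\sign(z_{11}z_{12})$ is $\pm 1$ with probability $1/2$ each and independent of the magnitude only up to the angular coordinate, but in any case $\pr(\tilde M_1 = u) \le \pr(\psi(|z_{11}|,|z_{12}|) = |u|) = 0$ for every $u \ne 0$, and $\pr(\tilde M_1 = 0) \le \pr(z_{11}z_{12} = 0) = 0$. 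A secondary technical point is that Lemma~\ref{lem:varM} only controls the variance of the \emph{centered} average, so I must keep the mean and the limit $\tilde F(u)$ bookkept separately via the triangle inequality $|\bar G_n(u) - \tilde F(u)| \le |\bar G_n(u) - \E \bar G_n(u)| + |\E \bar G_n(u) - \tilde F(u)|$, with the two pieces handled by Lemma~\ref{lem:varM} and Lemma~\ref{lem:Mj-conv} respectively.
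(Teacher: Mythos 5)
Your proof takes the same decomposition as the paper---split via the triangle inequality into a bias term handled by Lemma~\ref{lem:Mj-conv} and a fluctuation term handled by Lemma~\ref{lem:varM} plus Chebyshev---but you go further, and you are right to do so. The paper's proof, as written, establishes only that for each fixed $v>0$,
\[
\sup_{u\in\R}\,\pr\!\left(\left|\frac{1}{n_0}\sum_{j\in S^\c}\big\{\one(\breve M_j>u)-\pr(\breve M_j>u)\big\}\right|>v\right)\ \to\ 0,
\]
i.e.\ pointwise-in-$u$ convergence with a bound uniform in $u$. This is strictly weaker than the claimed conclusion, which is $\pr\big(\sup_{u}|\cdot|>v\big)\to 0$; one cannot pass the supremum inside the probability. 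You correctly identify this as the missing step and fill it with the standard P\'olya/Glivenko--Cantelli chaining argument: exploit that $u\mapsto\bar G_n(u)$ and $u\mapsto\tilde F(u)$ are both non-increasing, pick a finite $\varepsilon$-grid for $\tilde F$, apply the pointwise estimate at the finitely many grid points, and sandwich in between. Your version is therefore more complete than the paper's, not merely different in style.

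Two small comments. First, in the atomlessness check you write $\pr(\tilde M_1=0)\le\pr(z_{11}z_{12}=0)$, which is not quite right: $\tilde M_1=0$ also occurs whenever $\psi(|z_{11}|,|z_{12}|)=0$, irrespective of the sign factor, so the correct identity is $\pr(\tilde M_1=0)=\pr\big(\psi(|z_{11}|,|z_{12}|)=0\big)$. For the representative choices $\psi\in\{uv,\min,u+v\}$ this is still zero, but under Assumption~\ref{assump:homogeneity} alone $\psi$ could vanish on a cone of positive measure, in which case $\tilde F$ genuinely has an atom at the origin. Second, you can sidestep the continuity discussion entirely: the P\'olya argument works for an arbitrary non-increasing limit if you place the grid at generalized-inverse points $u_k=\tilde F^{\leftarrow}(k\varepsilon)$ and apply the pointwise estimate at both $u_k$ and $u_k^-$, exactly as in the proof of the classical Glivenko--Cantelli theorem for possibly atomic distributions. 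That removes the only soft spot in your argument and makes the lemma hold under the stated assumptions without a side condition on $\psi$.
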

\begin{proof}[Proof of Lemma \ref{lem:1Mj-avg}]
We have
\begin{align}
    &\sup_{u\in\R}\abs{\frac{1}{n_0}\sum_{j\in S^\c}\one(\breve M_j>u)-\pr(\tilde M_1>u)}\\
    &\le\sup_{u\in\R}\abs{\frac{1}{n_0}\sum_{j\in S^\c}\cbr{\one(\breve M_j>u)-\pr(\breve M_j>u)}}
    +\sup_{u\in\R}\abs{\frac{1}{n_0}\sum_{j\in S^\c}\pr(\breve M_j>u)-\pr(\tilde M_1>u)},
\end{align}
where the second term on the right-hand side converges to zero by Lemma \ref{lem:Mj-conv}.
Also, Chebyshev's inequality yields, for any $v\in\R$,
\begin{align}
   \sup_{u\in\R}\,\pr\rbr{\abs{\frac{1}{n_0}\sum_{j\in S^\c}\cbr{\one(\breve M_j>u)-\pr(\breve M_j>u)}}>v}
   \le\frac{1}{v^2}\sup_{u\in\R}\Var\rbr{\frac{1}{n_0}\sum_{j\in S^\c}\one(\breve M_j>u)},
\end{align}
where the supremum of variance converges to zero by Lemma \ref{lem:varM}.
This completes the proof.
\end{proof}

\begin{lemma}
\label{lem:lower-bound}
Under the assumptions of Theorem \ref{thm:FDR}, there exists a constant $\delta_0=\delta_0(\alpha,c,\theta,\rho,\pi_0)>0$ such that
\[
\pr\bigl(\, \tilde F(\tau_\alpha^{\sigma_n}) \ge \delta_0 \,\bigr) \;\longrightarrow\; 1 .
\]
Moreover, one can take explicitly
\[
\delta_\ast:=\frac{(1-\theta)c}{2\pi_0}, \qquad
U:=\frac{(\alpha\rho-(1-\rho))\,\theta c}{\pi_0(1-\alpha)}, \qquad
\delta_0:=\frac{\delta_\ast+U}{2}\in(0,1).
\]
\end{lemma}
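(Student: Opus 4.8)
The plan is to show that the rescaled cutoff $\tau_\alpha^{\sigma_n}$ cannot exceed a deterministic level $q_\delta:=\inf\{u\ge0:\tilde F(u)\le\delta\}$ for a suitable $\delta$ slightly larger than $\delta_0$, and then to conclude from the monotonicity of $\tilde F$: an upper bound $\tau_\alpha^{\sigma_n}\le q_\delta$ yields $\tilde F(\tau_\alpha^{\sigma_n})\ge\tilde F(q_\delta)=\delta>\delta_0$. Throughout I work with $\breve M_j$ in place of $M_j$ and write $\widehat{\FDP}^{\sigma_n}$ for the estimate in~\eqref{eq:tau} formed from $(\breve M_j)_{j\in[n]}$; by Corollary~\ref{cor:scaling} the selection is unchanged, and since $\breve M_j=\kappa_n M_j$ with $\kappa_n:=(\sqrt n/\|\bP_{\bB}^\perp\bxi_1^{(t)}\|)^{r}>0$, Assumption~\ref{asmp:signal} transfers verbatim to $\breve M$: writing $\breve u_{K_n}:=\kappa_n u_{K_n}$, letting $\breve S^{+},\breve S^{-},\breve S^{\pm}$ be the non-null counterparts of $V^{+},V^{-},V^{\pm}:=V^{+}+V^{-}$ formed from $\breve M_j$, one has $\breve S^{\pm}(\breve u_{K_n})\ge\theta K_n$ and $\inf_{0\le u\le\breve u_{K_n}}\breve S^{+}(u)/\breve S^{\pm}(u)\ge\rho$ with probability $\to1$. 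Finally, multiplying~\eqref{eq:Feas} by $c/(\pi_0(1-\alpha))>0$ shows that the feasibility condition is exactly $\delta_*<U$, so $\delta_*<\delta_0<U$; I fix a continuity point $\delta\in(\delta_0,U)$ of $\tilde F$ with $q_\delta>0$ (such $\delta$ exists since $\delta_0<U$, $\delta_0<\tilde F(0)$ — a consequence of the parameter ranges and~\eqref{eq:Feas} — and continuity points are dense), and set $q:=q_\delta$, so $\tilde F(q)=\delta$.

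\emph{Step 1: ruling out $q>\breve u_{K_n}$, which is where $\delta_0>\delta_*$ is used.} On the event $\{q>\breve u_{K_n}\}$ one has $\tilde F(\breve u_{K_n})>\delta$, as every point strictly below $q_\delta$ has $\tilde F$-value exceeding $\delta$. By the uniform law of large numbers of Lemma~\ref{lem:1Mj-avg} — together with its lower-tail analogue, valid because $\tilde M_1\overset{\rm d}{=}-\tilde M_1$ by Lemma~\ref{lem:symmetry-tildeM}, so $\pr(\tilde M_1<-u)=\tilde F(u)$ — both $V^{+}(\breve u_{K_n})/n_0$ and $V^{-}(\breve u_{K_n})/n_0$ exceed $\tilde F(\breve u_{K_n})-\epsilon>\delta-\epsilon$ on an event of probability $\to1$, for any fixed $\epsilon>0$. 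Hence $V^{\pm}(\breve u_{K_n})>2n_0(\delta-\epsilon)$, whereas always $V^{\pm}(\breve u_{K_n})+\breve S^{\pm}(\breve u_{K_n})\le K_n$ and, on the signal event, $\breve S^{\pm}(\breve u_{K_n})\ge\theta K_n$; together these give $2n_0(\delta-\epsilon)<(1-\theta)K_n$, i.e.\ $\delta-\epsilon<(1-\theta)K_n/(2n_0)\to\delta_*$. Taking $\epsilon<(\delta-\delta_*)/3$ makes this impossible for $n$ large, so $\pr(q>\breve u_{K_n})\to0$.

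\emph{Step 2: bounding $\widehat{\FDP}^{\sigma_n}(q)$ on $\{q\le\breve u_{K_n}\}$, which is where $\delta_0<U$ is used.} Here $q\in(0,\breve u_{K_n}]$, so the signal event gives $\breve S^{\pm}(q)\ge\breve S^{\pm}(\breve u_{K_n})\ge\theta K_n$ and $\breve S^{-}(q)=\breve S^{\pm}(q)-\breve S^{+}(q)\le(1-\rho)\breve S^{\pm}(q)$; since the denominator below is $\ge\rho\theta K_n\ge1$ for $n$ large,
\[
\widehat{\FDP}^{\sigma_n}(q)=\frac{V^{-}(q)+\breve S^{-}(q)}{\bigl(V^{+}(q)+\breve S^{+}(q)\bigr)\vee1}\ \le\ \frac{V^{-}(q)+(1-\rho)\breve S^{\pm}(q)}{V^{+}(q)+\rho\,\breve S^{\pm}(q)} .
\]
This last ratio is $\le\alpha$ iff $V^{-}(q)-\alpha V^{+}(q)\le(\alpha\rho-(1-\rho))\breve S^{\pm}(q)$, with $\alpha\rho-(1-\rho)>0$ by~\eqref{eq:Feas}. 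The crucial point is to keep the term $-\alpha V^{+}(q)$ rather than discard it: by Lemma~\ref{lem:1Mj-avg} and its lower-tail analogue, $V^{-}(q)/n_0$ and $V^{+}(q)/n_0$ both lie within $\epsilon$ of $\tilde F(q)=\delta$ on a probability-$\to1$ event, so the left side is at most $n_0\bigl[(1-\alpha)\delta+(1+\alpha)\epsilon\bigr]$, while the right side is at least $(\alpha\rho-(1-\rho))\theta K_n$. Dividing by $n$, the respective limits are $\pi_0\bigl[(1-\alpha)\delta+(1+\alpha)\epsilon\bigr]$ and $(\alpha\rho-(1-\rho))\theta c=\pi_0(1-\alpha)U$; as $\delta<U$, a small enough $\epsilon>0$ (also $\le(\delta-\delta_*)/3$ for Step~1) makes the first limit strictly smaller, so the desired inequality — hence $\widehat{\FDP}^{\sigma_n}(q)\le\alpha$ — holds for all large $n$.

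Combining the two steps: on the intersection of the signal event of Assumption~\ref{asmp:signal} with the deviation events of Lemma~\ref{lem:1Mj-avg} (both of probability $\to1$), for $n$ large we have simultaneously $q\le\breve u_{K_n}$ and $\widehat{\FDP}^{\sigma_n}(q)\le\alpha$; the latter forces $\tau_\alpha^{\sigma_n}\le q$, whence $\tilde F(\tau_\alpha^{\sigma_n})\ge\tilde F(q)=\delta>\delta_0$. Thus $\pr(\tilde F(\tau_\alpha^{\sigma_n})\ge\delta_0)\to1$, with $\delta_*,U,\delta_0$ as displayed (their membership in $(0,1)$ being a direct computation from the ranges of $\alpha,c,\theta,\rho,\pi_0$ and~\eqref{eq:Feas}). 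I expect Step~1 to be the main obstacle: Assumption~\ref{asmp:signal} controls $\breve S^{+},\breve S^{\pm}$ only below the random magnitude level $\breve u_{K_n}$, so one must first certify that $q$ lies below that level before using the signal hypothesis at $q$, and it is precisely the gap $\delta_0>\delta_*$ that secures this — the dual role of $\delta_0<U$ being to force the FDP estimate below $\alpha$ at $q$.
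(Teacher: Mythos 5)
Your proof is correct and follows essentially the same route as the paper's: both proofs hinge on the same three quantities $\delta_\ast,\ U,\ \delta_0$, both use the feasibility inequality~\eqref{eq:Feas} in the form $\delta_\ast<U$, both exploit Lemma~\ref{lem:1Mj-avg} (with its lower-tail analogue via Lemma~\ref{lem:symmetry-tildeM}) to approximate $V^\pm$ by $n_0\tilde F$, and both conclude by exhibiting a deterministic subthreshold at which the estimated FDP is below $\alpha$, forcing $\tau_\alpha^{\sigma_n}$ below it and then invoking monotonicity of $\tilde F$. The only differences are cosmetic: the paper works directly at level $\delta_0$ through the generalized inverse $u_0=\tilde F^{\leftarrow}(\delta_0)$ and a parenthetical convention $\tilde F(u_0)\ge\delta_0$, whereas you pick a level $\delta\in(\delta_0,U)$ at which $\tilde F$ is continuous (so $\tilde F(q_\delta)=\delta$ exactly) — a marginally more careful move that sidesteps the jump issue at $u_0$; and you phrase the ``$q$ lies below the top-$K_n$ threshold'' step as a contradiction, where the paper derives it directly from the $o_p$ bound $\tilde F(u_{K_n})\le\delta_\ast+o_p(1)$, but the content is identical. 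Two small things you handle better than the paper's write-up: you make the rescaling $\breve M_j=\kappa_n M_j$ and the transfer of Assumption~\ref{asmp:signal} to $\breve M$ explicit, and your $\widehat{\FDP}^{\sigma_n}(q)$ formula matches the definition in~\eqref{eq:tau} exactly (the paper's Step~3 inserts a spurious ``$+1$'' in the numerator, harmless since it is absorbed into the $o_p(n)$, but not literally the defined quantity).

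One caveat to flag rather than fix: both proofs tacitly require $\delta_0<\tilde F(0)$ (equivalently $u_0>0$, or in your notation $q_\delta>0$). You gesture at this — ``a consequence of the parameter ranges and~\eqref{eq:Feas}'' — but since $\tilde F(0)=\pr(\tilde M_1>0)\le\tfrac12$ by symmetry, this is a genuine constraint on $(\alpha,c,\theta,\rho,\pi_0)$ beyond~\eqref{eq:Feas} and the claim $\delta_0\in(0,1)$; the paper leaves it implicit too. It does not affect the comparison, as you are mirroring the paper's implicit assumption.
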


\begin{proof}
Define the generalized inverse
$\tilde F^{\leftarrow}(\delta):=\inf\{u\ge0:\ \tilde F(u)<\delta\}$ (so $\tilde F(\tilde F^{\leftarrow}(\delta))\ge\delta$).
Note that $\tau_\alpha^{\sigma_n}$, $V^+$, and $V^-$ are defined at the top of this section.

\emph{Step 1: An upper bound for $\tilde F(u_{K_n})$.}
By Assumption \ref{asmp:signal}, $S^\pm(u_{K_n})\ge\theta K_n$ with probability $1-o(1)$; hence the number
of nulls among the top $K_n$ magnitudes satisfies
\[
V^+(u_{K_n})+V^-(u_{K_n}) \;\le\; (1-\theta)K_n .
\]
By Lemma \ref{lem:1Mj-avg}, $V^+(u)+V^-(u)=2n_0\tilde F(u)+o_p(n_0)$ uniformly in $u$. Since $n_0$ and $n$ are of the same order,
\[
2n_0\tilde F(u_{K_n}) \;\le\; (1-\theta)K_n + o_p(n),
\]
and dividing both sides by $n$ yields
\begin{equation}\label{eq:upper-uKn}
\tilde F(u_{K_n}) \;\le\; \delta_\ast + o_p(1).
\end{equation}

\emph{Step 2: Construct a subthreshold $u_0$.}
Let $\delta_0:=(\delta_\ast+U)/2$, which satisfies $\delta_\ast<\delta_0<U$ by \eqref{eq:Feas}.
By \eqref{eq:upper-uKn} and the monotonicity of $\tilde F$, we have $u_0:=\tilde F^{\leftarrow}(\delta_0) < u_{K_n}$ with probability $1-o(1)$.
Hence, by monotonicity again,
\begin{equation}\label{eq:SignalMass}
S^\pm(u_0)\;\ge\; S^\pm(u_{K_n})\;\ge\;\theta K_n,
\end{equation}
with probability approaching one.
Moreover, Assumption \ref{asmp:signal} (applied for all $u\le u_{K_n}$) gives
\begin{equation}\label{eq:Direction}
S^+(u_0)\;\ge\;\rho\,S^\pm(u_0), \qquad S^-(u_0)\;\le\;(1-\rho)\,S^\pm(u_0),
\end{equation}
with probability approaching one.
Lemma \ref{lem:1Mj-avg} implies
\begin{equation}\label{eq:NullLLN-at-u0}
V^+(u_0) \;=\; n_0\,\delta_0 + o_p(n_0).
\end{equation}

\emph{Step 3: Upper bound for $\widehat{\FDP}^{\sigma_n}(u_0)$.}
Let $\widehat{\FDP}^{\sigma_n}(u)$ denote the version of $\widehat{\FDP}(u)$ with $\breve M_j$ substituted for $M_j$ in the definition.
Using \eqref{eq:SignalMass}--\eqref{eq:NullLLN-at-u0}, Lemma \ref{lem:symmetry-tildeM}, and $K_n=cn+o(n)$,
\[
\widehat{\FDP}^{\sigma_n}(u_0)
=\frac{1+V^-(u_0)+S^-(u_0)}{V^+(u_0)+S^+(u_0)}
\;\le\;
\frac{1+n_0\delta_0+(1-\rho)S^\pm(u_0)+o_p(n)}{n_0\delta_0+\rho S^\pm(u_0)+o_p(n)}.
\]
Divide numerator and denominator by $n$ and pass to $\limsup$ using $n_0/n\to\pi_0$ and \eqref{eq:SignalMass}:
\[
\limsup_{n\to\infty}\ \widehat{\FDP}^{\sigma_n}(u_0)
\;\le\;
\frac{\pi_0\delta_0 + (1-\rho)\theta c}{\pi_0\delta_0 + \rho\theta c}
\;=:\; \Psi(\delta_0).
\]
By the definition of $U$ and the equivalence
\[
\Psi(\delta)\le \alpha
\ \Longleftrightarrow\
\pi_0(1-\alpha)\,\delta \le (\alpha\rho-(1-\rho))\,\theta c,
\]
we have $\Psi(\delta_0) < \alpha$ because $\delta_0<U$. Hence there exists $\varepsilon>0$ such that
\begin{equation}\label{eq:FDPhat-at-u0}
\pr\bigl(\, \widehat{\FDP}^{\sigma_n}(u_0) \le \alpha - \varepsilon \,\bigr)\ \to\ 1.
\end{equation}

\emph{Step 4: Compare $\tau_\alpha^{\sigma_n}$ to $u_0$.}
By definition $\tau_\alpha^{\sigma_n} := \inf\{u>0:\widehat{\FDP}^{\sigma_n}(u)\le\alpha\}$. From \eqref{eq:FDPhat-at-u0} we obtain
$\tau_\alpha^{\sigma_n} \le u_0$ with probability $1-o(1)$. Since $\tilde F$ is nonincreasing and
$\tilde F(\tilde F^{\leftarrow}(\delta_0))\ge\delta_0$, we conclude
\[
\tilde F(\tau_\alpha^{\sigma_n})\ \ge\ \tilde F(u_0)\ \ge\ \delta_0,
\]
with probability $1-o(1)$, which proves the claim.
\end{proof}

\begin{proof}[Proof of Theorem \ref{thm:FDR}]
To begin with, we have, by Corollary \ref{cor:scaling},
\begin{align}
    \FDR
    &=\E\sbr{\frac{\#\{j\in S^\c:\breve M_j>\tau_\alpha^{\sigma_n}\}}{\#\{j:\breve M_j>\tau_\alpha^{\sigma_n}\}\vee1}}
    =\E\sbr{\frac{V^-(\tau_\alpha^{\sigma_n})}{V^+(\tau_\alpha^{\sigma_n})+S^+(\tau_\alpha^{\sigma_n})}}\\
    &\le\E\sbr{\frac{1+V^-(\tau_\alpha^{\sigma_n})+S^-(\tau_\alpha^{\sigma_n})}{V^+(\tau_\alpha^{\sigma_n})+S^+(\tau_\alpha^{\sigma_n})}
    +\frac{\abs{V^+(\tau_\alpha^{\sigma_n})-V^-(\tau_\alpha^{\sigma_n})}}{V^+(\tau_\alpha^{\sigma_n})+S^+(\tau_\alpha^{\sigma_n})}}\\
    &\le\alpha+\E\sbr{\frac{\abs{V^+(\tau_\alpha^{\sigma_n})-V^-(\tau_\alpha^{\sigma_n})}}{V^+(\tau_\alpha^{\sigma_n})+S^+(\tau_\alpha^{\sigma_n})}}, \label{eq:FDR-upper}
\end{align}
where the last inequality follows from $\widehat{\FDP}^{\sigma_n}(\tau_\alpha^{\sigma_n})\le\alpha$ by construction.
Since $\FDR=0$ when $V^+(\tau_\alpha^{\sigma_n})+S^+(\tau_\alpha^{\sigma_n})=0$, we consider the case $V^+(\tau_\alpha^{\sigma_n})+S^+(\tau_\alpha^{\sigma_n})\ge1$.
By Lemma \ref{lem:1Mj-avg} and Lemma \ref{lem:symmetry-tildeM}, we have, as $n\to\infty$,
\begin{align}
\label{eq:V+1}
    \frac{1}{n_0}\abs{V^+(\tau_\alpha^{\sigma_n})-V^-(\tau_\alpha^{\sigma_n})}
    \le
    \frac{1}{n_0}\abs{V^+(\tau_\alpha^{\sigma_n})-n_0\tilde F(\tau_\alpha^{\sigma_n})}
    + \frac{1}{n_0}\abs{V^-(\tau_\alpha^{\sigma_n})-n_0\tilde F(\tau_\alpha^{\sigma_n})}
    =o_p(1).
\end{align}
Also, since $n_0^{-1}V^+(\tau_\alpha^{\sigma_n})=\tilde F(\tau_\alpha^{\sigma_n})+o_p(1)$ by Lemma \ref{lem:1Mj-avg}, Lemma \ref{lem:lower-bound} implies that, as $n\to\infty$,
\begin{align}
\label{eq:V+2}
    \frac{1}{n_0}V^+(\tau_\alpha^{\sigma_n})
    \ge \delta_0 + o_p(1).
\end{align}
Therefore, from \eqref{eq:V+1} and \eqref{eq:V+2}, we have
\begin{align}
\label{eq:V+3}
    \frac{\abs{V^+(\tau_\alpha^{\sigma_n})-V^-(\tau_\alpha^{\sigma_n})}}{V^+(\tau_\alpha^{\sigma_n})+S^+(\tau_\alpha^{\sigma_n})}=o_p(1).
\end{align}
Since the left-hand side is bounded by one, \eqref{eq:FDR-upper}, \eqref{eq:V+3}, and the bounded convergence theorem yield $\FDR\le\alpha+o(1)$.
\end{proof}

\subsection{A necessary and sufficient condition for the conditional null}
\label{sec:iff-cond-indep}

For probability measures $\mu$ and $\nu$ on $\mathbb R$, let $W_1(\mu,\nu)$ denote the $1$-Wasserstein distance and $d_{\mathrm{TV}}(\mu,\nu)$ the total variation distance.
Write $\mathsf{BL}_1:=\{\varphi:\mathbb R\to\mathbb R:\ \|\varphi\|_\infty\le 1,\ \mathrm{Lip}(\varphi)\le 1\}$ for bounded $1$-Lipschitz functions.

We consider the multi-index model
\[
y \;=\; g(\bm B^\top \bx,\ \varepsilon),\qquad 
\bx\in\mathbb R^n,\ y\in\mathbb R,\ \bm B\in\mathbb R^{n\times {q^*}},
\]
with $\varepsilon\indep\bx$, $1\le {q^*}<n$, and $\mathrm{rank}(\bm B)={q^*}$.
Let the $j$-th row be $\bm b_j^\top\in\mathbb R^{q^*}$, and set $\bu:=\bm B^\top \bx\in\mathbb R^{q^*}$.

\begin{assumption}[Minimal thickness of the projected regressor]
\label{ass:U-density}
There exists a nonempty open set $O\subset\mathbb R^{q^*}$ such that the law of $\bu$ admits a Lebesgue density strictly positive on $O$.
\end{assumption}

\begin{assumption}[Local kernel Lipschitzness in $W_1$]
\label{ass:kernel-W1}
Let $K(u):=\mathcal L(y\mid \bu=u)$ be the conditional law (a stochastic kernel).
There exists $L>0$ such that
\[
W_1\big(K(u),K(u')\big)\ \le\ L\,\|u-u'\|
\qquad\text{for all }u,u'\in O.
\]
\end{assumption}

\begin{assumption}[Bounded-Lipschitz nondegeneracy]
\label{ass:ND-BL}
For every nonzero $\bm v\in\mathbb R^{q^*}$ there exist $\varphi\in\mathsf{BL}_1$ and a measurable set $A_{\bm v}\subset O$ with positive Lebesgue measure such that the directional derivative
$D_{\bm v} m_\varphi(u)$ exists for Lebesgue-a.e.\ $u\in A_{\bm v}$ and is not a.e.\ zero on $A_{\bm v}$,
where $m_\varphi(u)\equiv\mathbb E[\varphi(y)\mid \bu=u]$.
\end{assumption}

\begin{assumption}[Conditional thickness of $x_j$ given $\bx_{-j}$]
\label{ass:cond-thick}
For the fixed index $j\in\{1,\dots,n\}$ under consideration, write $\bx=(x_j,\bx_{-j})$.
For almost every $\bm a_{-j}\in\R^{n-1}$, the conditional law $\mathcal L(x_j\mid \bx_{-j}=\bm a_{-j})$ has a Lebesgue density that is strictly positive on some nonempty open set $K(\bm a_{-j})\subset\mathbb R$.
\end{assumption}

\begin{definition}[Conditional null for variable $j$]
\label{def:CI}
We say that the conditional null holds for the index $j\in[n]$ if
\[
y\perp\!\!\!\perp x_j\mid \bx_{-j}.
\]
\end{definition}

\begin{lemma}
\label{lem:KR}
Under Assumption \ref{ass:kernel-W1}, for every $\varphi\in\mathsf{BL}_1$, the map $m_\varphi(u)=\mathbb E[\varphi(y)\mid \bu=u]$ is $L$-Lipschitz on $O$.
In particular, $m_\varphi$ is differentiable almost everywhere on $O$.
\end{lemma}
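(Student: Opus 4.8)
The plan is to obtain the Lipschitz bound directly from the elementary (lower-bound) direction of Kantorovich--Rubinstein duality, and then deduce a.e.\ differentiability from Rademacher's theorem. First I would fix $\varphi\in\mathsf{BL}_1$ and two points $u,u'\in O$. By definition of the stochastic kernel one has $m_\varphi(u)=\int_\R\varphi\,\d K(u)$, which is a finite number because $\varphi$ is bounded; the same holds at $u'$. For an arbitrary coupling $\pi$ of $K(u)$ and $K(u')$, using that $\varphi$ is $1$-Lipschitz,
\begin{align}
\abs{m_\varphi(u)-m_\varphi(u')}
=\abs{\int\bigl(\varphi(s)-\varphi(s')\bigr)\,\d\pi(s,s')}
\le\int\abs{s-s'}\,\d\pi(s,s').
\end{align}
Taking the infimum over all couplings $\pi$ gives $\abs{m_\varphi(u)-m_\varphi(u')}\le W_1\bigl(K(u),K(u')\bigr)$, and Assumption~\ref{ass:kernel-W1} then yields $\abs{m_\varphi(u)-m_\varphi(u')}\le L\norm{u-u'}$ for all $u,u'\in O$. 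This is exactly the claimed $L$-Lipschitz property of $m_\varphi$ on $O$.

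For the second assertion, since $O\subset\R^{q^*}$ is open and $m_\varphi$ is globally $L$-Lipschitz on $O$, Rademacher's theorem (which is a local statement and hence valid on any open set) shows that $m_\varphi$ is totally differentiable at Lebesgue-almost every point of $O$; in particular all directional derivatives exist a.e., which is what is used downstream in Assumption~\ref{ass:ND-BL} and the subsequent arguments.

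I do not expect a genuine obstacle here: the lemma is essentially a repackaging of the dual characterization of $W_1$ together with the assumed local Lipschitz continuity of the conditional-law kernel. The only point that deserves a word of care is that we test against \emph{bounded} $1$-Lipschitz functions rather than general $1$-Lipschitz ones. Boundedness is precisely what makes $\int\varphi\,\d K(u)$ well defined for every probability measure $K(u)$, so no moment hypothesis on the conditional laws is needed and the coupling estimate above goes through verbatim; if one wished to work with unbounded Lipschitz test functions one would additionally have to invoke finiteness of the first moments of $K(u)$, but this is unnecessary for the stated lemma.
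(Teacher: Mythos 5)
Your proof is correct and matches the paper's argument: both deduce the Lipschitz bound from the fact that bounded $1$-Lipschitz test functions cannot distinguish measures by more than $W_1$, and then apply Rademacher's theorem. The only cosmetic difference is that you derive the needed inequality $|\int\varphi\,dK(u)-\int\varphi\,dK(u')|\le W_1(K(u),K(u'))$ directly from the coupling (primal) definition of $W_1$, whereas the paper cites Kantorovich--Rubinstein duality as a black box; since you only ever use the easy direction of the duality, the two are essentially identical.
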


\begin{proof}
By the Kantorovich--Rubinstein duality for $W_1$ on Polish spaces,
\[
|m_\varphi(u)-m_\varphi(u')|
=\Big|\int \varphi\, dK(u)-\int \varphi\, dK(u')\Big|
\le W_1(K(u),K(u'))\le L\|u-u'\|,
\]
since $\varphi\in\mathsf{BL}_1$ is $1$-Lipschitz and bounded.
Rademacher’s theorem ensures almost-everywhere differentiability of Lipschitz maps $m_\varphi:O\to\mathbb R$.
\end{proof}

\begin{lemma}
\label{lem:line-const}
Fix $j\in\{1,\dots,n\}$.
Under Assumptions \ref{ass:U-density}, \ref{ass:cond-thick}, and the conditional null for $j$ in Definition \ref{def:CI}, fix $\varphi\in\mathsf{BL}_1$ and write
\[
\bm w:=\sum_{k\neq j}\bm b_k x_k,\qquad \bm v:=\bm b_j x_j,
\]
so that $\bu=\bm w+\bm v$.
Then for almost every $\bm a_{-j}\in\R^{n-1}$, there exists a nonempty open set
\[
G(\bm a_{-j})\ \subset\ \bm w(\bm a_{-j})+\mathrm{span}\{\bm b_j\}
\]
such that $m_\varphi(u)$ is almost everywhere constant on $G(\bm a_{-j})$.
\end{lemma}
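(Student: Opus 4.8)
The plan is to translate the conditional-independence statement $y\indep x_j\mid\bx_{-j}$ into a statement that $m_\varphi(u)=\E[\varphi(y)\mid\bu=u]$ does not vary along the line direction $\bb_j$, on a set of positive measure. The first step is to unpack the conditional null: since $\bu=\bm w(\bm a_{-j})+\bb_j x_j$ is, for fixed $\bx_{-j}=\bm a_{-j}$, a deterministic affine function of $x_j$, the conditional law $\cL(y\mid x_j=a, \bx_{-j}=\bm a_{-j})$ equals $K(\bm w(\bm a_{-j})+\bb_j a)$ for a.e.\ $a$ (using that $\varepsilon\indep\bx$, so conditioning on $\bx$ is the same as conditioning on $\bu$ within the multi-index model). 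The conditional null says this law does not depend on $a$ once $\bx_{-j}$ is fixed. Hence, integrating $\varphi$, the function $a\mapsto m_\varphi(\bm w(\bm a_{-j})+\bb_j a)$ is a.e.\ constant in $a$ over the range of $x_j$-values that actually occur.

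The second step is to identify a genuinely open set on which this holds. By Assumption \ref{ass:cond-thick}, for a.e.\ $\bm a_{-j}$ the conditional law $\cL(x_j\mid\bx_{-j}=\bm a_{-j})$ has a density strictly positive on some nonempty open interval $K(\bm a_{-j})\subset\R$; the image of this interval under the affine map $a\mapsto\bm w(\bm a_{-j})+\bb_j a$ is a nonempty open subset (relative to the line $\bm w(\bm a_{-j})+\mathrm{span}\{\bb_j\}$, which is one-dimensional since $\bb_j\neq\bzero$ — this nonvanishing must be arranged, and indeed for $j$ with $\bb_j=\bzero$ the statement is vacuous or the line degenerates, a case we can dispose of separately). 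Call this open subset $G(\bm a_{-j})$. On $G(\bm a_{-j})$, the pushforward of the positive conditional density is positive, so "a.e.\ with respect to the conditional law of $x_j$" upgrades to "a.e.\ with respect to one-dimensional Lebesgue measure on the line." Therefore $m_\varphi$ is a.e.\ constant on $G(\bm a_{-j})$, as claimed.

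The third step — really just bookkeeping — is to make the exceptional null sets uniform. The conditional null holds for $\pr$-a.e.\ realization; the thickness assumption holds for a.e.\ $\bm a_{-j}$; and the identification of the conditional law of $y$ with a value of $K(\cdot)$ holds off a null set. Intersecting these gives a single full-measure set of $\bm a_{-j}$ on which all three conclusions hold simultaneously, which is what the statement asserts. One should also invoke Assumption \ref{ass:U-density} only implicitly here — it guarantees that $m_\varphi$ is defined (as a genuine conditional expectation with respect to a law possessing a density) on the open set $O$, so that "a.e.\ constant on $G(\bm a_{-j})$" is meaningful; strictly one should note $G(\bm a_{-j})\cap O$ in place of $G(\bm a_{-j})$, or argue that the thickness of $\bu$ forces $G(\bm a_{-j})$ to meet $O$ in positive measure, and shrink $G(\bm a_{-j})$ accordingly.

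The main obstacle I anticipate is the measure-theoretic disintegration at the first step: justifying rigorously that $\cL(y\mid x_j=a,\bx_{-j}=\bm a_{-j})=K(\bm w(\bm a_{-j})+\bb_j a)$ for a.e.\ $(a,\bm a_{-j})$, i.e.\ that conditioning on the pair $(x_j,\bx_{-j})$ and then pushing through the deterministic map $\bx\mapsto\bB^\top\bx$ is compatible with the kernel $K$ defined by conditioning on $\bu$ directly. This is a standard-but-delicate application of the tower property for regular conditional distributions together with the fact that $\sigma(\bu)\subseteq\sigma(\bx)$ and $y\indep\bx\mid\bu$ (which itself follows from $y=g(\bu,\varepsilon)$ with $\varepsilon\indep\bx$). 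Everything downstream — the affine pushforward of a positive density, the a.e.-to-a.e.\ upgrade, the union bound over null sets — is routine once this identification is in place.
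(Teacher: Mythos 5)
Your proposal is correct and follows essentially the same route as the paper's proof: translate the conditional null into constancy of $m_\varphi$ along the affine line determined by $\bb_j$ (via the identification $\E[\varphi(y)\mid \bx]=m_\varphi(\bu)$ from $\varepsilon\indep\bx$), then use the conditional thickness of $x_j\mid\bx_{-j}$ to produce the open set $G(\bm a_{-j})$ as the image of $K(\bm a_{-j})$ under the affine map $a\mapsto \bm w(\bm a_{-j})+\bb_j a$. Your additional remarks---the pushforward-density argument upgrading ``a.e.\ w.r.t.\ the conditional law of $x_j$'' to ``Lebesgue-a.e.\ on the line,'' the explicit handling of the degenerate case $\bb_j=\bzero$, and the note that one should in principle intersect $G(\bm a_{-j})$ with $O$---are all sound and simply spell out details that the paper leaves implicit (the last point the paper defers to the proof of Theorem~\ref{thm:CI-equivalence}).
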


\begin{proof}
By $\bx\indep\varepsilon$ and the definition of $K$,
\[
\mathbb E[\varphi(y)\mid x_j,\bx_{-j}] 
\;=\; \mathbb E[\varphi(y)\mid \bu] 
\;=\; m_\varphi(\bu).
\]
The conditional null $y\perp\!\!\!\perp x_j\mid \bx_{-j}$ implies
\[
\mathbb E[\varphi(y)\mid x_j,\bx_{-j}] = \mathbb E[\varphi(y)\mid \bx_{-j}],
\]
hence
\[
m_\varphi(\bm w+\bm v)=h_\varphi(\bx_{-j})\quad\text{a.s.}
\]
for some measurable $h_\varphi(\cdot)$.

By Assumption \ref{ass:cond-thick}, for almost every $\bm a_{-j}$ the conditional support of $x_j\mid \bx_{-j}=\bm a_{-j}$ contains a nonempty open set $K(\bm a_{-j})\subset\R$.
Consider the linear map
\[
T:\ \mathbb R\to \mathbb R^{q^*},\qquad t\mapsto \bm b_j t.
\]
Its image is the subspace $\mathrm{span}\{\bm b_j\}$, and $T$ is open onto its image in finite dimensions.
Therefore, for almost every $\bm a_{-j}$, the image
\[
H(\bm a_{-j}) := T\bigl(K(\bm a_{-j})\bigr)
\]
is a nonempty open set inside $\mathrm{span}\{\bm b_j\}$ (with respect to the subspace topology).
Consequently,
\[
G(\bm a_{-j}) := \bm w(\bm a_{-j}) + H(\bm a_{-j})
\]
is a nonempty open set within the affine subspace $\bm w(\bm a_{-j})+\mathrm{span}\{\bm b_j\}$ on which $m_\varphi$ is almost everywhere constant (equal to $h_\varphi(\bm a_{-j})$).
\end{proof}

\begin{theorem}
\label{thm:CI-equivalence}
Fix $j\in\{1,\dots,n\}$.
Under Assumptions \ref{ass:U-density}, \ref{ass:kernel-W1}, \ref{ass:ND-BL}, and \ref{ass:cond-thick}, the following are equivalent:
\[
{\,y\perp\!\!\!\perp x_j\mid \bx_{-j}\,}\qquad\Longleftrightarrow\qquad
{\,\bm b_j=\bm 0\,}.
\]
\end{theorem}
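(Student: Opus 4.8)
The reverse implication is immediate: if $\bm b_j=\bm 0$ then $\bm B^\top\bx=\sum_{k\neq j}\bm b_k x_k$ depends on $\bx$ only through $\bx_{-j}$, so conditionally on $\bx_{-j}$ the response $y=g(\bm B^\top\bx,\varepsilon)$ is a measurable function of a conditionally-determined argument and of $\varepsilon$, which is independent of $(x_j,\bx_{-j})$; hence $y\indep x_j\mid\bx_{-j}$. For the forward implication I assume $y\indep x_j\mid\bx_{-j}$ and, toward a contradiction, $\bm b_j\neq\bm 0$. The starting point is that for every $\varphi\in\mathsf{BL}_1$ the independence $\bx\indep\varepsilon$ gives $\E[\varphi(y)\mid\bx]=m_\varphi(\bu)$ with $\bu:=\bm B^\top\bx$ and $m_\varphi(u):=\E[\varphi(y)\mid\bu=u]$, so the conditional null forces $m_\varphi(\bu)=h_\varphi(\bx_{-j})$ almost surely for some measurable $h_\varphi$; that is, $m_\varphi$ is a.s.\ constant along every fiber of $\bu$ obtained by varying $x_j$ with $\bx_{-j}$ held fixed. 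The goal is to show that this constancy is incompatible with Assumption~\ref{ass:ND-BL}.

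Fix the direction $\bm v:=\bm b_j$ and take $\varphi\in\mathsf{BL}_1$ and $A_{\bm b_j}\subset O$ as in Assumption~\ref{ass:ND-BL}. By Lemma~\ref{lem:KR}, $m_\varphi$ is $L$-Lipschitz on $O$, hence differentiable Lebesgue-a.e.\ there by Rademacher's theorem, with $D_{\bm b_j}m_\varphi(u)=\langle\nabla m_\varphi(u),\bm b_j\rangle$ at such points; put $D:=\{u\in O:\ m_\varphi\text{ differentiable at }u,\ \langle\nabla m_\varphi(u),\bm b_j\rangle\neq 0\}$. Assumption~\ref{ass:ND-BL} then gives $\mathrm{Leb}(D)>0$, and since the law of $\bu$ has a strictly positive density on $O\supseteq D$ by Assumption~\ref{ass:U-density}, we obtain $\pr(\bu\in D)>0$. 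On the other hand, Lemma~\ref{lem:line-const}, applied with this $\varphi$ and the conditional null, produces for a conull set of $\bm a_{-j}\in\R^{n-1}$ a nonempty open subset $G(\bm a_{-j})$ of the line $\bm w(\bm a_{-j})+\mathrm{span}\{\bm b_j\}$ on which $m_\varphi$ is a.e.\ constant; since $m_\varphi$ restricted to that line is Lipschitz in the line parameter and a.e.\ constant on the open set $G(\bm a_{-j})$, it is genuinely constant on each connected component, so $\langle\nabla m_\varphi,\bm b_j\rangle=0$ at every point of $G(\bm a_{-j})$ at which $m_\varphi$ is differentiable. Consequently $D\cap G(\bm a_{-j})=\emptyset$ for a.e.\ $\bm a_{-j}$, and $D$ is disjoint from the reachable set $R:=\bigcup\{G(\bm a_{-j}):\bm a_{-j}\text{ in the conull index set}\}$.

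It therefore suffices to prove $\pr(\bu\in R)=1$: combined with $D\subseteq O$ and $D\cap R=\emptyset$ this gives $\pr(\bu\in D)=0$, contradicting the previous paragraph, and hence $\bm b_j=\bm 0$. This is the crux of the argument and the step I expect to require the most care. The plan is a disintegration argument along the decomposition $\R^{q^*}=W+\mathrm{span}\{\bm b_j\}$, where $W:=\mathrm{span}\{\bm b_k:k\neq j\}$; the full-rank hypothesis $\mathrm{rank}(\bm B)=q^*$ ensures this is indeed all of $\R^{q^*}$, so that the lines $\bm w(\bm a_{-j})+\mathrm{span}\{\bm b_j\}$, as $\bm w(\bm a_{-j})$ ranges over $W$, cover $\R^{q^*}$. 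Writing $\bu=\bm w(\bx_{-j})+\bm b_j x_j$ and disintegrating $\mathcal L(\bu)$ over $\bx_{-j}$, the conditional law $\mathcal L(\bu\mid\bx_{-j}=\bm a_{-j})$ is carried by one such line and is absolutely continuous along it; one then checks, using Assumption~\ref{ass:cond-thick} to control the conditional support of $x_j$ together with the fiberwise a.e.-constancy of $m_\varphi$ (which in fact holds on the interior of that conditional support, not merely on the particular open set singled out by Assumption~\ref{ass:cond-thick}), that each such conditional law assigns full mass to $R$, and then integrates over $\bm a_{-j}$. The delicate point is precisely this propagation—turning one-dimensional a.e.-constancy of $m_\varphi$ along a.e.\ $\bm b_j$-fiber into a statement of full $\bu$-measure on all of $O$—which forces one to reconcile the open thickness set of Assumption~\ref{ass:cond-thick} with the genuine conditional support of $x_j$ and to exploit $\mathrm{rank}(\bm B)=q^*$ so that these fibers sweep out $O$; once it is in place, $\mathrm{Leb}(D)>0$ is contradicted and the equivalence follows.
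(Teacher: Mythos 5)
Your overall route coincides with the paper's: the $\Leftarrow$ direction is proved identically, and for $\Rightarrow$ both you and the paper invoke Lemma~\ref{lem:KR} (Lipschitzness, hence Rademacher differentiability) and Lemma~\ref{lem:line-const} (constancy of $m_\varphi$ along a nonempty open piece of each $\bm b_j$-fiber), then derive a contradiction with Assumption~\ref{ass:ND-BL}. Fixing $\bm v=\bm b_j$ rather than quantifying over all of $\mathrm{span}\{\bm b_j\}$ is harmless since that span is one-dimensional. Where you differ is that you package the conclusion as $D\cap R=\emptyset$ together with $\pr(\bu\in R)=1$, whereas the paper asserts directly that $D_{\bm v}m_\varphi=0$ Lebesgue-a.e.\ on $O$. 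These two formulations encode the same Fubini/covering step: turning ``a.e.\ constancy of $m_\varphi$ along a.e.\ $\bm b_j$-line'' into ``the directional derivative vanishes a.e.\ on $O$.''

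That covering step is precisely the part you leave unproved, and you say so yourself (``once it is in place, \dots the equivalence follows''), so this is a genuine gap in your attempt. Two specific concerns. First, as written, your set $R$ is built from the open sets $G(\bm a_{-j})=\bm w(\bm a_{-j})+\bm b_j K(\bm a_{-j})$, where $K(\bm a_{-j})$ is only \emph{some} open set on which the conditional density of $x_j$ is positive, not the full conditional support; the conditional law of $x_j$ may place mass outside $K(\bm a_{-j})$, so $\pr(\bu\in R)=1$ is not automatic for $R$ as defined. You flag that one should work with the interior of the true conditional support instead, but you never reformulate $R$ accordingly nor verify that this larger set is still one on which the vanishing of $\langle\nabla m_\varphi,\bm b_j\rangle$ can be extracted. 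Second, the foliation argument (lines $\bm w+\mathrm{span}\{\bm b_j\}$ sweeping $O$) is clean only when $\bm b_j\notin\mathrm{span}\{\bm b_k:k\neq j\}$; when $\bm b_j$ lies in that span the lines overlap massively, and the full-rank hypothesis alone does not separate these two cases, so the disintegration has to be set up more carefully than your sketch indicates. It is worth noting that the paper's own treatment of this step is also terse (``these affine pieces intersect $O$ on sets of positive Lebesgue measure''), so you have correctly located the one nontrivial measure-theoretic step; but locating it is not the same as closing it.
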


\begin{proof}
($\Leftarrow$) If $\bm b_j=\bm 0$, then
\[
\bu=\sum_{k\neq j}\bm b_k x_k
\]
is $\sigma(\bx_{-j})$-measurable.
By $\bx\indep\bvarepsilon$, the conditional law
\[
\mathcal L(y\mid x_j,\bx_{-j})
= \mathcal L\bigl(g(\bu,\varepsilon)\mid x_j,\bx_{-j}\bigr)
= K(\bu)
\]
does not depend on $x_j$, hence $y\perp\!\!\!\perp x_j\mid \bx_{-j}$.

($\Rightarrow$) Suppose $y\perp\!\!\!\perp x_j\mid \bx_{-j}$.
Fix $\varphi\in\mathsf{BL}_1$.
By Lemma \ref{lem:line-const}, for almost every $\bm a_{-j}$, $m_\varphi$ is almost everywhere constant on a nonempty open set inside the affine subspace $\bm w(\bm a_{-j})+\mathrm{span}\{\bm b_j\}$.
By Assumption \ref{ass:U-density}, these affine pieces intersect $O$ on sets of positive Lebesgue measure in $\mathbb R^{q^*}$; by Lemma \ref{lem:KR}, $m_\varphi$ is locally Lipschitz on $O$, hence (Rademacher) directionally differentiable almost everywhere on $O$.
Consequently,
\[
D_{\bm v} m_\varphi(u)=0\quad\text{for Lebesgue-a.e.\ }u\in O\ \text{ and all }\bm v\in \mathrm{span}\{\bm b_j\}.
\]

If $\mathrm{span}\{\bm b_j\}\neq\{\bm 0\}$, then there exists a nonzero $\bm v\in\mathrm{span}\{\bm b_j\}$.
Assumption \ref{ass:ND-BL} then yields some $\tilde\varphi\in\mathsf{BL}_1$ and a measurable $A_{\bm v}\subset O$ of positive Lebesgue measure such that $D_{\bm v} m_{\tilde\varphi}(u)$ exists for Lebesgue-a.e.\ $u\in A_{\bm v}$ and is not almost everywhere zero on $A_{\bm v}$, which contradicts the conclusion above (applied to $\tilde\varphi$).
Thus necessarily $\mathrm{span}\{\bm b_j\}=\{\bm 0\}$, i.e., $\bm b_j=\bm 0$.
\end{proof}

\begin{remark}[TV-variant for classification]
\label{rem:TV}
Assumption \ref{ass:kernel-W1} can be replaced by the TV version $d_{\mathrm{TV}}(K(u),K(u'))\le L\|u-u'\|$ on $O$; then $|m_\varphi(u)-m_\varphi(u')|\le d_{\mathrm{TV}}(K(u),K(u'))$ for $\varphi\in\mathsf{BL}_1$, and Lemma \ref{lem:KR} and Theorem \ref{thm:CI-equivalence} remain valid with the same proof.
In particular, for the ordinal binary classification model $y=\one{\{h(u)+\varepsilon>0\}}$ (e.g., logistic regression and Probit model) with $q^*=1$, $h(\cdot)$ locally Lipschitz and $\varepsilon$ independent with bounded density, the kernel is TV-Lipschitz and Assumption  \ref{ass:ND-BL} holds with $\varphi(y)=y$ whenever the class-probability $p(u)=\mathbb P(\varepsilon>-h(u))$ is not a.e.\ flat in any nonzero direction.
\end{remark}

\section{On the right-orthogonal invariance}
\label{sec:ROI}
In this section, we delineate what kinds of random designs fall into the class of
\emph{right-orthogonally invariant} (ROI) matrices, and what kinds do not. 
ROI is sometimes assumed in the literature of approximate message passing algorithms in the proportional asymptotics where $n$ and $m$ diverge with $m/n\to\delta\in(0,\infty)$ \citep{rangan2019vector,li2023spectrum}. 
Subsequently, we discuss in what sense the $\bB$-ROI in Assumption \ref{asmp:DGP} (ii) is relaxed.

\subsection{Definition and basic consequences}
\begin{definition}[Right-orthogonal invariance]
A random matrix $\bX\in\R^{m\times n}$ is \emph{ROI} if
\[
\bX\overset{\rm d}=\bX\bU\qquad(\forall\,\bU\in O(n)).
\]
\end{definition}
If $\E\|\bX\|_F^2<\infty$, then ROI implies the isotropy of the column Gram:
\begin{equation}\label{eq:ROI-2nd}
\E[\bX^\top \bX]=c\,\bI_n,
\qquad c=\frac{1}{n}\,\E\|\bX\|_F^2,
\end{equation}
which is a necessary (but not sufficient) condition for ROI.
There are several closure properties.
\begin{lemma}[Left-multiplicative closure]\label{lem:left}
Let $\bA$ and $\bZ$ be independent random matrices. If $\bZ$ is ROI, then $\bX:=\bA\bZ$ is ROI.
\end{lemma}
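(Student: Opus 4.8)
The plan is to condition on $\bA$ and exploit the ROI property of $\bZ$ fiberwise. First I would fix an arbitrary orthogonal matrix $\bU\in O(n)$ and show $\bA\bZ\bU\eqd\bA\bZ$. Since $\bA$ and $\bZ$ are independent, one may write the joint law as the product measure $\mathcal L(\bA)\otimes\mathcal L(\bZ)$, and for any bounded measurable test function $\Psi$ on $\R^{m\times n}$ compute
\begin{align}
\E\big[\Psi(\bA\bZ\bU)\big]
=\int \E\big[\Psi(\ba\bZ\bU)\big]\,\mathcal L(\bA)(\d\ba),
\end{align}
where the inner expectation is over $\bZ$ alone with $\ba$ held fixed. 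This reduction is legitimate by Fubini/Tonelli, since $\Psi$ is bounded.

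The key step is that for each fixed deterministic $\ba\in\R^{m\times n}$, the map $\bZ\mapsto \ba\bZ$ sends the ROI law of $\bZ$ to a law that is itself right-$O(n)$-invariant: because $\bZ\eqd\bZ\bU$, we have $\ba\bZ\eqd \ba(\bZ\bU)=(\ba\bZ)\bU$, hence $\E[\Psi(\ba\bZ\bU)]=\E[\Psi(\ba\bZ)]$ for every $\ba$. Substituting this identity back into the integral over $\mathcal L(\bA)$ gives
\begin{align}
\E\big[\Psi(\bA\bZ\bU)\big]
=\int \E\big[\Psi(\ba\bZ)\big]\,\mathcal L(\bA)(\d\ba)
=\E\big[\Psi(\bA\bZ)\big].
\end{align}
Since $\Psi$ and $\bU$ were arbitrary, $\bX=\bA\bZ$ satisfies $\bX\eqd\bX\bU$ for all $\bU\in O(n)$, i.e., $\bX$ is ROI.

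The only point requiring a little care — and the mild obstacle here — is the measurability/integrability bookkeeping needed to pass to the conditional (fiberwise) expectation: one must ensure that $\ba\mapsto\E[\Psi(\ba\bZ)]$ is measurable and that the interchange of integration order is valid. Both follow routinely from the independence of $\bA$ and $\bZ$, boundedness of $\Psi$, and Tonelli's theorem, so no serious difficulty arises; the algebraic heart of the argument is simply the identity $\ba(\bZ\bU)=(\ba\bZ)\bU$ combined with $\bZ\eqd\bZ\bU$.
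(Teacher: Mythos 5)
Your proof is correct, and since the paper states Lemma~\ref{lem:left} without supplying a proof, there is nothing to compare against; the conditioning-on-$\bA$ via Fubini plus the pointwise identity $\ba(\bZ\bU)=(\ba\bZ)\bU$ combined with $\bZ\eqd\bZ\bU$ is exactly the standard (and surely intended) argument. One minor notational slip: in the displayed integral the dummy $\ba$ should range over the space where $\bA$ lives (so that $\ba\bZ$ is conformable), not over $\R^{m\times n}$, but this does not affect the substance of the argument.
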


\begin{lemma}[Right Haar mixer]\label{lem:haar}
For any (possibly deterministic) $\bY$ and $\bQ\sim\mathrm{Haar}(O(n))$ independent of $\bY$,
$\bX:=\bY\bQ$ is ROI.
\end{lemma}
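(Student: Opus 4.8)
The plan is to verify the defining ROI property $\bX \eqd \bX\bU$ directly for each fixed $\bU \in O(n)$, using only right-invariance of the Haar measure together with the independence $\bY$ and $\bQ$. Fix $\bU \in O(n)$ arbitrarily and write $\bX\bU = \bY(\bQ\bU)$. The key reduction is to establish the joint equality in law $(\bY,\bQ) \eqd (\bY,\bQ\bU)$; granting this, one applies the (continuous, hence Borel) matrix-multiplication map $(\bA,\bB)\mapsto \bA\bB$ on $\R^{m\times n}\times O(n)\to\R^{m\times n}$ to both sides, and since equality in distribution is preserved under a common measurable map one gets $\bY\bQ \eqd \bY(\bQ\bU)$, i.e.\ $\bX \eqd \bX\bU$. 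As $\bU\in O(n)$ was arbitrary, $\bX$ is ROI.

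To prove $(\bY,\bQ) \eqd (\bY,\bQ\bU)$ I would condition on $\bY$: since $\bQ$ is independent of $\bY$, the conditional law of $\bQ$ given $\bY$ is $\mathrm{Haar}(O(n))$ almost surely, and right translation by the fixed orthogonal matrix $\bU$ preserves $\mathrm{Haar}(O(n))$; hence the conditional law of $\bQ\bU$ given $\bY$ is again $\mathrm{Haar}(O(n))$, in particular the same as that of $\bQ$ and still independent of $\bY$. Integrating this identity of conditional laws against the distribution of $\bY$ gives the claimed joint equality. Equivalently and more directly: $\bQ\bU$ is a deterministic (indeed linear) function of $\bQ$, hence independent of $\bY$, while $\bQ\bU \eqd \bQ$ by right-invariance of Haar; so the law of $(\bY,\bQ\bU)$, being the product of its two marginals, coincides with the law of $(\bY,\bQ)$.

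I do not expect a genuine obstacle; the lemma is essentially bookkeeping around right-invariance of the Haar measure. The only point deserving a little care is the independence step---confirming that replacing $\bQ$ by $\bQ\bU$ does not create dependence on $\bY$---which is exactly why I would route the argument through conditioning on $\bY$ rather than manipulating unconditional laws. For completeness one would also note the measurability of $\bQ\mapsto\bQ\bU$ and of $(\bA,\bB)\mapsto\bA\bB$ (both continuous), so that the pushforward step is justified.
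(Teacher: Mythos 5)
Your proof is correct. The paper states this lemma without proof, so there is no textual argument to compare against, but your route is the natural one: reduce $\bX\bU \eqd \bX$ to the joint identity $(\bY,\bQ) \eqd (\bY,\bQ\bU)$, establish the latter by combining right-invariance of the Haar measure on $O(n)$ with preservation of independence under the deterministic map $\bQ\mapsto\bQ\bU$, and then push forward through the continuous multiplication map. The conditioning-on-$\bY$ phrasing and the ``product of marginals'' phrasing are both valid ways to carry out the joint-law step, and you correctly flag that preserving independence is the one point requiring care. No gaps.
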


\begin{lemma}[Orthogonally conjugate mixture]\label{lem:mix}
Let $\bLambda$ be a symmetric positive-definite random matrix, and suppose
$(\bX\bU\mid\bLambda)\overset{\rm d}=(\bX\mid\bU^\top\bLambda\bU)$ and $\bLambda\overset{\rm d}=\bU^\top\bLambda\bU$ for all $\bU$.
Then the marginal $\bX$ is ROI.
\end{lemma}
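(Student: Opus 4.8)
The plan is to fix an arbitrary orthogonal $\bU\in\R^{n\times n}$ and prove $\bX\bU\overset{\rm d}=\bX$; since $\bU$ is arbitrary, this is precisely right-orthogonal invariance. First I would introduce a regular conditional distribution $K(\lambda,\cdot):=\mathrm{Law}(\bX\mid\bLambda=\lambda)$, which exists because $\R^{m\times n}$ is Polish. I would then read the first assumed identity as the statement that, for $P_\bLambda$-almost every $\lambda$,
\[
\mathrm{Law}\big(\bX\bU\mid\bLambda=\lambda\big)=K\big(\bU\lambda\bU^\top,\cdot\big),
\]
using that conditioning on $\{\bU^\top\bLambda\bU=\lambda\}$ is the same as conditioning on $\{\bLambda=\bU\lambda\bU^\top\}$ because $\bU$ is a fixed invertible linear map. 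Thus the first hypothesis transports a right rotation of $\bX$ into an orthogonal conjugation of the latent parameter $\bLambda$.

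Next I would compute, for an arbitrary bounded measurable $\phi:\R^{m\times n}\to\R$, by disintegration over $\bLambda$:
\begin{align}
\E\big[\phi(\bX\bU)\big]
&=\int \E\big[\phi(\bX\bU)\mid\bLambda=\lambda\big]\,\d P_\bLambda(\lambda)\\
&=\int \Big(\int \phi\,\d K(\bU\lambda\bU^\top,\cdot)\Big)\,\d P_\bLambda(\lambda).
\end{align}
Then I would invoke the second hypothesis with $\bU^\top$ in place of $\bU$, giving $\bU\bLambda\bU^\top\overset{\rm d}=\bLambda$; equivalently, the pushforward of $P_\bLambda$ under the conjugation $\lambda\mapsto\bU\lambda\bU^\top$ is again $P_\bLambda$. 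Substituting $\mu=\bU\lambda\bU^\top$ therefore yields $\int\big(\int\phi\,\d K(\mu,\cdot)\big)\,\d P_\bLambda(\mu)=\E[\phi(\bX)]$. Since $\phi$ was arbitrary, $\bX\bU\overset{\rm d}=\bX$, and as $\bU$ was an arbitrary orthogonal matrix, $\bX$ is ROI.

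The computation is a one-line change of variables, so I expect no analytic difficulty; the main care needed is purely measure-theoretic. Concretely, the obstacle is to fix the precise meaning of the conditional-law hypothesis, to justify the existence and essential uniqueness of the kernel $K$, and to handle the $P_\bLambda$-a.e.\ qualifier in the first identity — all routine once one works on standard Borel spaces and notes that conjugation by a fixed orthogonal matrix is a bijective bimeasurable self-map of the cone of symmetric positive-definite matrices. The conceptual content is simply that the first hypothesis converts a right rotation of $\bX$ into an orthogonal conjugation of $\bLambda$, while the second says such conjugations leave the law of $\bLambda$ unchanged, so composing the two eliminates the effect of $\bU$.
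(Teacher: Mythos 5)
The paper states this lemma, along with the neighboring closure properties (left-multiplicative closure, right Haar mixer), without proof, so there is no in-paper argument to compare against. Your disintegration proof is correct and complete: the reduction to showing $\bX\bU\deq\bX$ for one fixed orthogonal $\bU$, the introduction of a regular conditional kernel $K(\lambda,\cdot)=\mathrm{Law}(\bX\mid\bLambda=\lambda)$ (which exists because the state space is Polish), the use of the first hypothesis to transport a right rotation of $\bX$ into an orthogonal conjugation of $\bLambda$, and the final change of variables against the conjugation-invariant law $P_\bLambda$ together give exactly the intended argument.

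A small remark on how you decode the notation $(\bX\mid\bU^\top\bLambda\bU)$. You read it as the conditional law of $\bX$ given the random variable $\bU^\top\bLambda\bU$, which upon evaluation at $\lambda$ becomes $K(\bU\lambda\bU^\top,\cdot)$. The paper's worked example (E5), where $\bepsilon_i\mid\bSigma\sim\cN(\bzero,\bSigma)$ so that $\mathrm{Law}(\bepsilon_i\bU\mid\bSigma=\sigma)=\cN(\bzero,\bU^\top\sigma\bU)=K(\bU^\top\sigma\bU,\cdot)$, indicates the author intends the other convention, namely $\mathrm{Law}(\bX\bU\mid\bLambda=\lambda)=K(\bU^\top\lambda\bU,\cdot)$. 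The two readings differ only by swapping $\bU$ and $\bU^\top$; since both hypotheses are quantified over all $\bU\in O(n)$, which is closed under transposition, the lemma holds under either reading and your change of variables goes through verbatim with the substitution adjusted accordingly ($\mu=\bU^\top\lambda\bU$ instead of $\mu=\bU\lambda\bU^\top$). This is a notational ambiguity in the paper, not a flaw in your argument.
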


\subsection{Canonical ROI examples}
Denote the Stiefel manifold $V_{n,r}\equiv\{\bW:\bW^\top\bW=\bI_r\}$.

(E1) \emph{Matrix-normal with isotropic columns.}
If $\bX\sim\mathcal{MN}(0,\bSigma_{\mathrm{row}},\bI_n)$, then $\bX$ is ROI.  
Conversely, $\mathcal{MN}(0,\bSigma_{\mathrm{row}},\bSigma_{\mathrm{col}})$ with $\bSigma_{\mathrm{col}}\not\propto \bI_n$ is not ROI.

(E2) \emph{Elliptical rows (after whitening).}
If each row is elliptical $\bx_i=\bSigma^{1/2}\bz_i$ with $\bz_i$ spherically symmetric,
then $\bZ:=\bX\bSigma^{-1/2}$ is ROI. 

(E3) \emph{Spiked with Haar loadings plus isotropic noise.}
Let $\bX=\alpha\bV\bW^\top+\bE$, where $\bW\in V_{n,r}$ is Haar and $\bE$ is ROI (e.g., i.i.d.\ Gaussian).
Then $\bX$ is ROI (by left Haar-invariance of $\bW$ and Lemma~\ref{lem:left}).

(E4) \emph{Linear multi-layer with an ROI rightmost factor.}
If $\bX=\bX_1\bX_2\cdots \bX_L$ with $\bX_L$ i.i.d.\ standard Gaussian, then $\bX$ is ROI (Lemma~\ref{lem:left}).

(E5) \emph{VAR with orthogonally invariant covariance mixing.}
With $\bX_{i,\cdot}=\sum_{k=1}^\nu \alpha_k \bX_{i-k,\cdot}+\bepsilon_i$ and
$\bepsilon_i\mid\bSigma\sim\mathcal N(0,\bSigma)$, $\bSigma\sim\mathrm{InvWishart}(\bI_n)$,
one has $(\bX\bU\mid\bSigma)\overset{\rm d}=(\bX\mid\bU^\top\bSigma\bU)$ and $\bSigma$ is orthogonally invariant,
hence $\bX$ is ROI by Lemma~\ref{lem:mix}.

(E6) \emph{Stiefel-uniform columns and random right projection.}
If $\bQ\in V_{n,p}$ is uniform and $\bX=s\,\bQ$, then $\bX\bU\overset{\rm d}=\bX$.
More generally, for any $\bY$ independent of $\bQ\sim$Haar, $\bX=\bY\bQ$ is ROI (Lemma~\ref{lem:haar}).

\subsection{Non-ROI archetypes (counterexamples)}
(N1) \emph{Anisotropic Gaussian across columns.}
$\bx_i\sim\mathcal N(0,\bSigma_{\mathrm{col}})$ with $\bSigma_{\mathrm{col}}\not\propto \bI_n$
violates \eqref{eq:ROI-2nd}.

(N2) \emph{Columnwise scaling heterogeneity.}
$\bX=\bZ\bD$ with i.i.d.\ isotropic $\bZ$ and non-scalar diagonal $\bD$ has $\E[\bX^\top \bX]=\E[\bZ^\top \bZ]\bD^2\not\propto \bI_n$.

(N3) \emph{Toeplitz/AR(1) column covariance.}
$\bSigma_{\mathrm{col}}=(\rho^{|i-j|})$ breaks ROI already at the second moment.

(N4) \emph{Rademacher i.i.d.\ entries.}
Invariance holds only for the finite hyperoctahedral group (sign flips and permutations), not for all $\bU\in O(n)$.

(N5) \emph{Blockwise variance mixtures across columns.}
Different column blocks having different scales violate \eqref{eq:ROI-2nd}.

\subsection{Relaxing ROI to the stabilizer}
We relax the ROI assumption to invariance under the stabilizer
\[
\cG_{\bB}:=\{\, \bU\in O(n):\, \bU\bB=\bB\, \}.
\]
\begin{definition}[$\bB$-ROI]\label{def:BROI}
A random matrix $\bX\in\R^{m\times n}$ is \emph{$\bB$-ROI} if
$\bX\overset{\rm d}=\bX\bU$ for all $\bU\in\cG_{\bB}$.
\end{definition}

Let $r=\rank(\bB)$ and take $\bQ=[\bQ_{B}\ \bQ_\perp]$ with $\Col(\bQ_{B})=\Col(\bB)$.
Then $\cG_{\bB}=\{\, \bQ\,\diag(\bI_r,\bR)\,\bQ^\top:\ \bR\in O(n-r)\, \}$ and
$\bU\bP_{\bB}^\perp=\bP_{\bB}^\perp\bU$ for all $\bU\in\cG_{\bB}$.

\paragraph{How much weaker?}
If $\E\|\bX\|_F^2<\infty$ and we write
$\bQ^\top\,\E[\bX^\top\bX]\,\bQ=\begin{psmallmatrix}\bA&\bC\\ \bC^\top&\bD\end{psmallmatrix}$,
then $\bB$-ROI forces
\begin{equation}\label{eq:BROI-second}
\bC=\mathbf 0,\qquad \bD=c\,\bI_{n-r},
\end{equation}
while $\bA\in\R^{r\times r}$ is arbitrary. In contrast, ROI requires $\E[\bX^\top\bX]=c\,\bI_n$.
Thus $\bB$-ROI is strictly weaker unless $r=0$ (then it coincides with ROI).
At the distributional level, $\bB$-ROI is equivalent to: for all $\bU\in O(n-r)$,
\begin{equation}\label{eq:BROI-conditional}
\big(\bX\bP_{\bB},\ \bX\bP_{\bB}^\perp\big)
\ \overset{\rm d}=\ 
\big(\bX\bP_{\bB},\ \bX\bP_{\bB}^\perp\bU\big),
\end{equation}
i.e.\ the conditional law of $\bX\bP_{\bB}^\perp$ given $\bX\bP_{\bB}$ is ROI.

{New important examples under $\bB$-ROI} are
\begin{itemize}
\item \textbf{Fixed-loading spike + isotropic noise:}
$\bX=\bF\bLambda^\top+\bE$ with $\Col(\bLambda)=\Col(\bB)$ and $\bE$ isotropic on $\Col(\bB)^\perp$.
Here $\bLambda$ may be deterministic (no Haar randomness needed).
\item \textbf{Anisotropy/discreteness only along $\Col(\bB)$:}
$\bX=\bZ\bSigma^{1/2}$ with $\bQ^\top\bSigma\bQ=\diag(\bSigma_{B},\ \sigma^2\bI_{n-r})$, where
$\bSigma_{B}$ is arbitrary SPD; or $\bX\bP_{\bB}$ is discrete/binary while
$\bX\bP_{\bB}^\perp$ is continuous isotropic (Gaussian/$t$/elliptical).
\item \textbf{Row dependence with conjugate mixing on the complement:}
VAR-type rows with innovations covariance $\bSigma$ satisfying
$\bQ^\top\bSigma\bQ=\diag(\bSigma_{B},\ \sigma^2\bI_{n-r})$.
\item \textbf{Partial random right projection:}
$\bX=\bY\bQ_\perp$ with $\bQ_\perp\in V_{n,n-r}$ uniform and $\Col(\bQ_\perp)=\Col(\bB)^\perp$.
\end{itemize}

\paragraph{Remark.}
When $r=n$, $\cG_{\bB}=\{\bI_n\}$ and the assumption is vacuous; $\Col(\bB)^\perp=\{0\}$
so our directional statements are degenerate. Conversely, $r=0$ reduces to ROI.

\section{Elliptical designs}
\label{sec:relax}
This section presents the asymptotic normality and feature selection results for designs that violate the $\bB$-right orthogonal invariance assumption, and provides numerical evidence that our theoretical results remain valid in more general settings.

As an instance of elliptical distributions, we examine a design where each row of $\bm{X}$ is independently drawn from a multivariate normal distribution with an AR(1) covariance structure.
That is, $\bm x_i\overset{\rm iid}{\sim}\cN(\bzero, \bSigma)$ with $(\bSigma)_{ij}=\rho^{|i-j|}$, $\rho>0$ for any $i\in[n]$.
We conducted experiments with $(m, n) = (2000, 1000)$.
To address potential instability induced by strong correlations, we set the learning rate and weight decay to $10^{-3}$ and $10^{-4}$, respectively, for the \textsf{MLP} and \textsf{1D-CNN}, while keeping all other configurations identical to those in Section~\ref{sec:numer-asyN}.
Figure~\ref{fig:ar1} presents the results, showing that the asymptotic normality of Theorem~\ref{thm:asyN} for null variables is numerically preserved, irrespective of the correlation strength among features.

\begin{figure}[tbp]
  \begin{center}
    \includegraphics[clip,width=0.85\columnwidth]{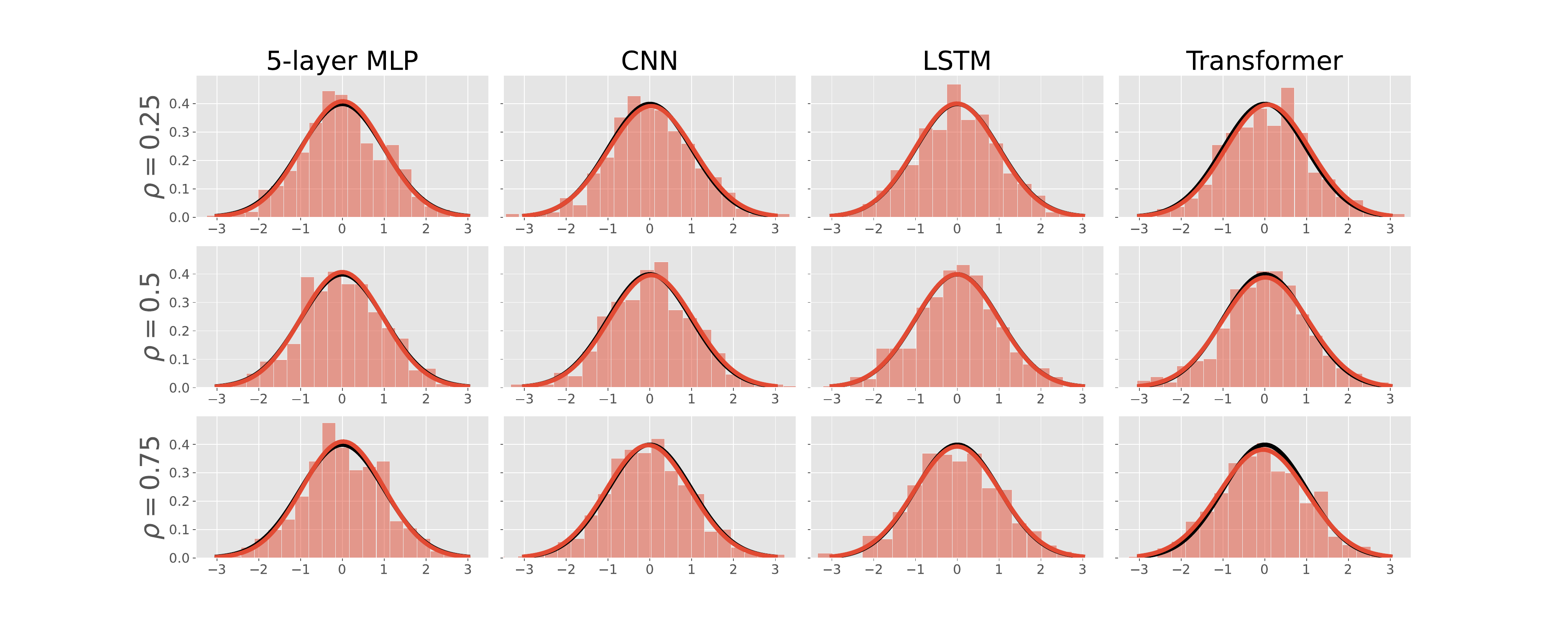}
  \end{center}
  \caption{
  Histograms of the empirical distribution of ${\sqrt{n}\xi_j^{(10)}}/{\|\bP_{\bB}^\perp\bxi^{(10)}\|}$ for $j\in S^\c$.
The solid black curve shows the $\cN(0,1)$ density.
The solid red curve represents a normal density fitted to the histograms.
}\label{fig:ar1}
\end{figure}

Next, we examine the iteration-wise evolution of the FDR and Power observed during the numerical experiments.
We fix the correlation parameter at $\rho=0.5$, set the learning rate and weight decay as described above, and keep all other settings identical to those in Section~\ref{sec:num-fdr}.
The results are shown in Figure~\ref{fig:iter_elpt}, indicating that FDR control is successfully achieved despite the presence of feature correlations.
On the other hand, the detection power of \textsf{LSTM} begins to decay after a certain number of iterations, suggesting that early stopping could be beneficial.

\begin{figure}[tbp]
  \begin{center}
    \includegraphics[clip,width=0.75\columnwidth]{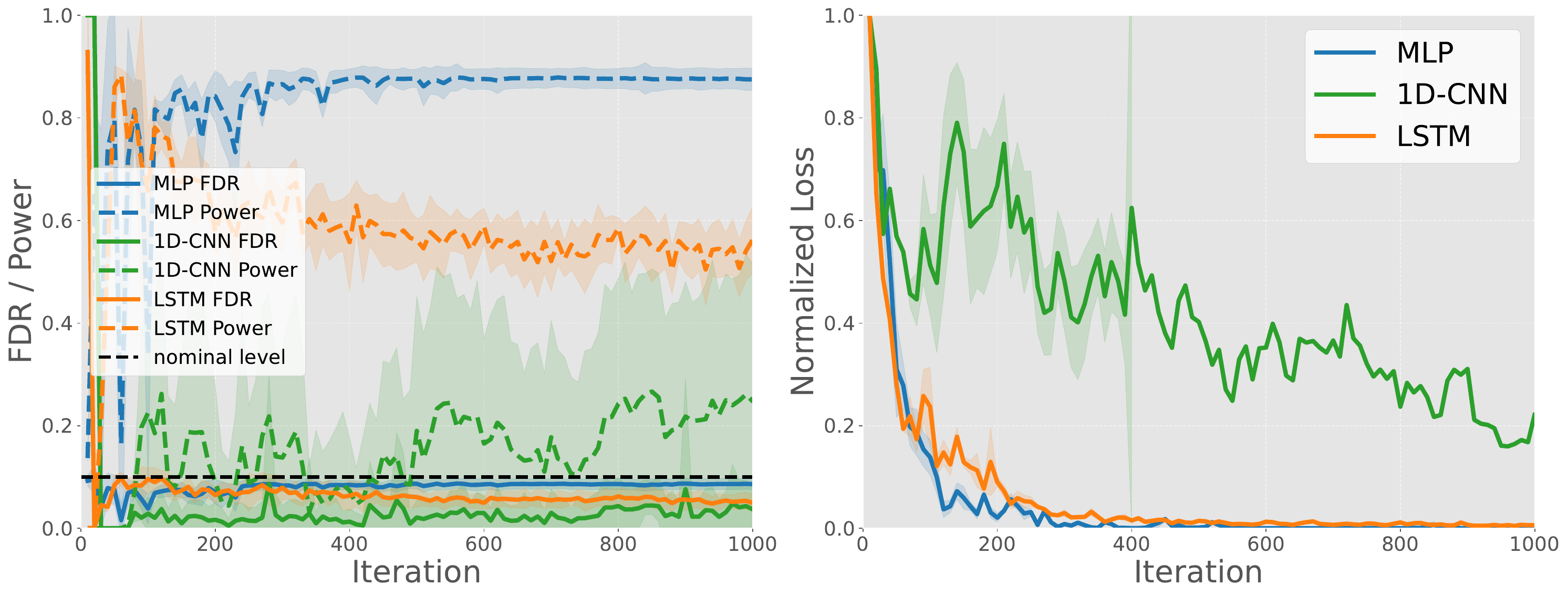}
  \end{center}
  \caption{Results for the FDR/power (left) and the training loss (right) when performing feature selection at each iteration. 
  The solid curves represent averages over 20 independent runs, and the shaded areas indicate one standard deviation around the mean.
}\label{fig:iter_elpt}
\end{figure}


\section{Additional numerical experiments}
\label{sec:experiment+}

This section provides additional and more detailed results complementing the experiments presented in Section~\ref{sec:experiments}.

\textbf{QQ-plots}.
As further evidence supporting the asymptotic normality demonstrated in Section~\ref{sec:numer-asyN}, 
Figure~\ref{fig:asyN_qq} shows the QQ-plots of 
${\sqrt{n}\xi_j^{(10)}}/{\|\bP_{\bB}^\perp\bxi^{(10)}\|}$ for $j \in S^\c$.

\begin{figure}[htbp]
  \begin{center}
    \includegraphics[clip,width=0.85\columnwidth]{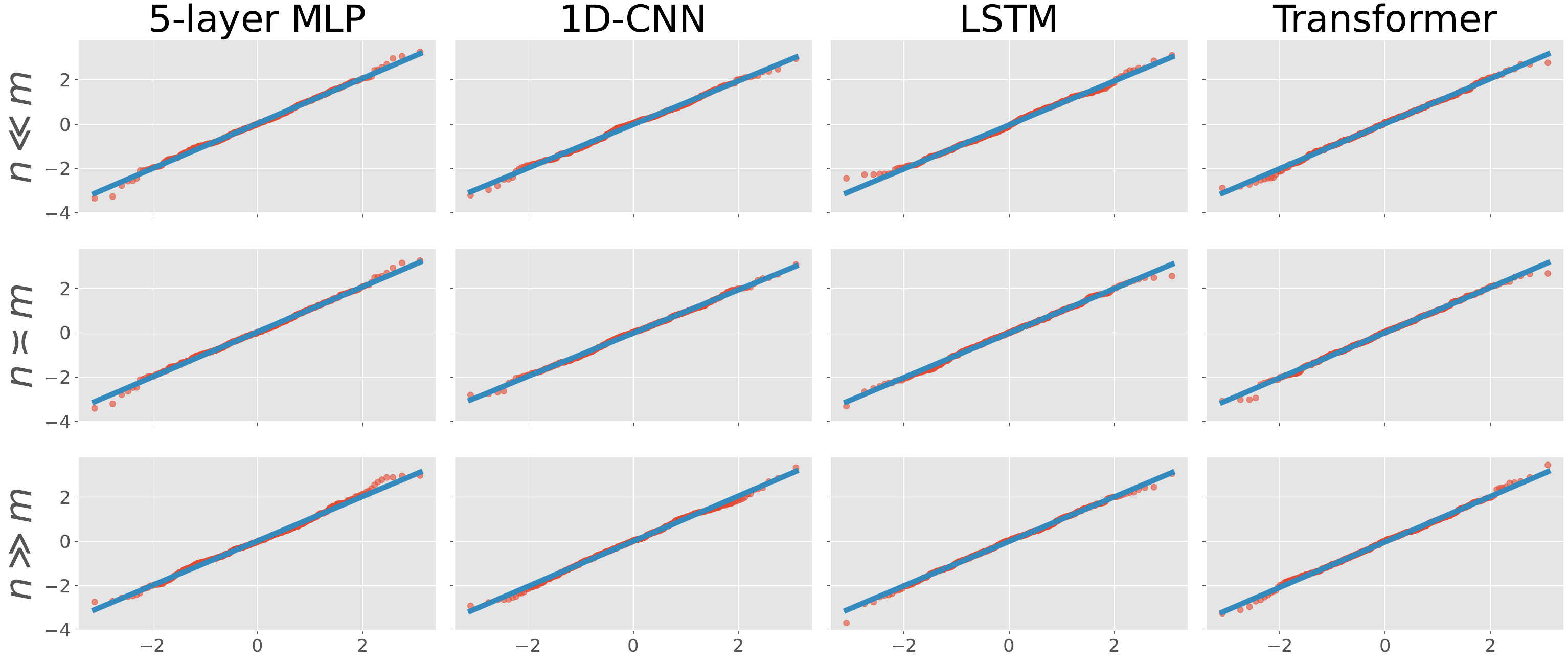}
  \end{center}
  \caption{
  QQ-plots of ${\sqrt{n}\xi_j^{(10)}}/{\|\bP_{\bB}^\perp\bxi^{(10)}\|}$ under the settings of Figure \ref{fig:asyN}.}\label{fig:asyN_qq}
\end{figure}

\textbf{Loss trajectories}.
Figure~\ref{fig:loss} presents the evolution of the training loss corresponding to the FDR and Power trajectories shown in Figure~\ref{fig:iter}.

\begin{figure}[htbp]
  \begin{center}
    \includegraphics[clip,width=0.7\columnwidth]{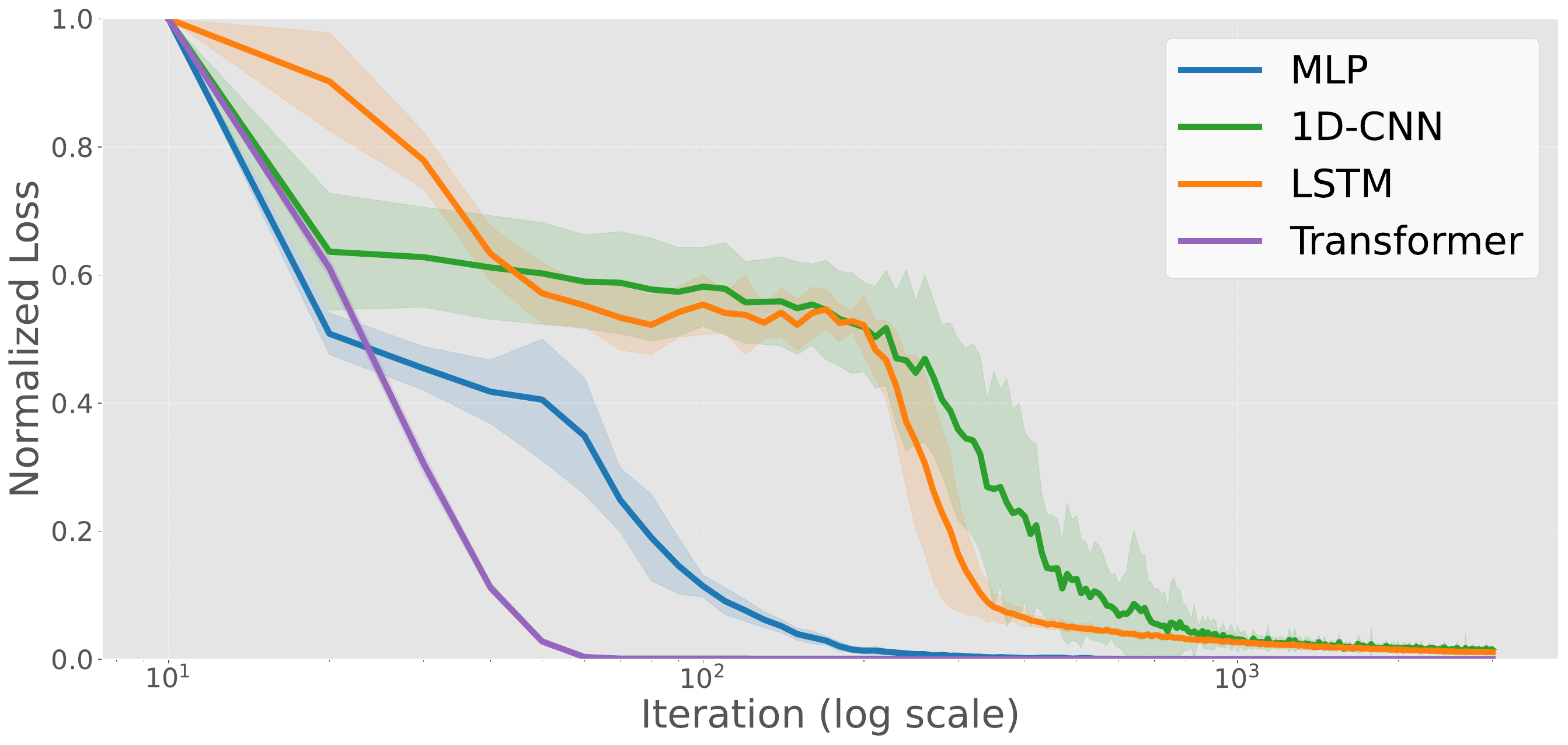}
  \end{center}
  \caption{
  Training loss trajectories of each method across iterations (corresponding to the methods plotted in Figure \ref{fig:iter}).}\label{fig:loss}
\end{figure}

\textbf{Classification problem}.
Finally, as an application to a different data-generating process, we consider a multi-class classification problem. 
Each entry of $\bX\in\R^{m\times n}$ is drawn from an i.i.d.\ standard Gaussian distribution. 
We construct a weight matrix $\bB\in\R^{n\times 3}$ by drawing a Gaussian matrix in $\R^{(n/2)\times 3}$, orthonormalizing its columns via QR, and embedding it into the top $n/2$ coordinates while filling zeros in $S^{\c}$. 
For $K=3$ classes, let the class-$k$ score be
\[
h_k(\bx)=\alpha_k\sin\big(\omega_k\, (\bB_{\cdot 1}^\top \bx)\big)
+\beta_k\cos\big(\nu_k\, (\bB_{\cdot 2}^\top \bx)\big)
+\gamma_k (\bB_{\cdot 3}^\top \bx)+b_k,
\]
where $(\alpha_k,\beta_k,\gamma_k)$ control the relative contributions, $(\omega_k,\nu_k)$ set the frequencies and $b_k$ balances class prior. Class probabilities follow a softmax with temperature $\tau>0$,
\[
\Pr(y_i=k\mid \bx_i)=\frac{\exp\big(h_k(\bx_i)/\tau\big)}{\sum_{\ell=1}^{K}\exp\big(h_\ell(\bx_i)/\tau\big)},
\]
and labels are sampled accordingly. 
We keep $\tau$ and the amplitudes fixed across trials unless stated otherwise and vary the random seed to average over data realizations.

The configurations of the \textsf{MLP}, \textsf{1D-CNN}, and \textsf{LSTM} models are the same as those used in Section~\ref{sec:experiments}.  
The choices of $m$ and $n$ follow the same setting as well.  
Figures~\ref{fig:asyN_clsf} and~\ref{fig:asyN_qq_clsf} present histograms and QQ-plots that confirm the asymptotic normality of the proposed statistics. 
Figure~\ref{fig:iter_clsf} presents the results of feature selection when $m=4000$ and $n=400$.  
In this setting, the power remains nearly zero for all methods, and in some cases the training loss does not decrease.  
This behavior is likely due to the non-null distribution of $\xi_j^{(t)}$ not being sufficiently separated from its null counterpart, suggesting that further investigation is needed to determine whether this issue can be mitigated through adjustments to the network architecture or optimization strategy.

\begin{figure}[tbp]
  \begin{center}
    \includegraphics[clip,width=0.75\columnwidth]{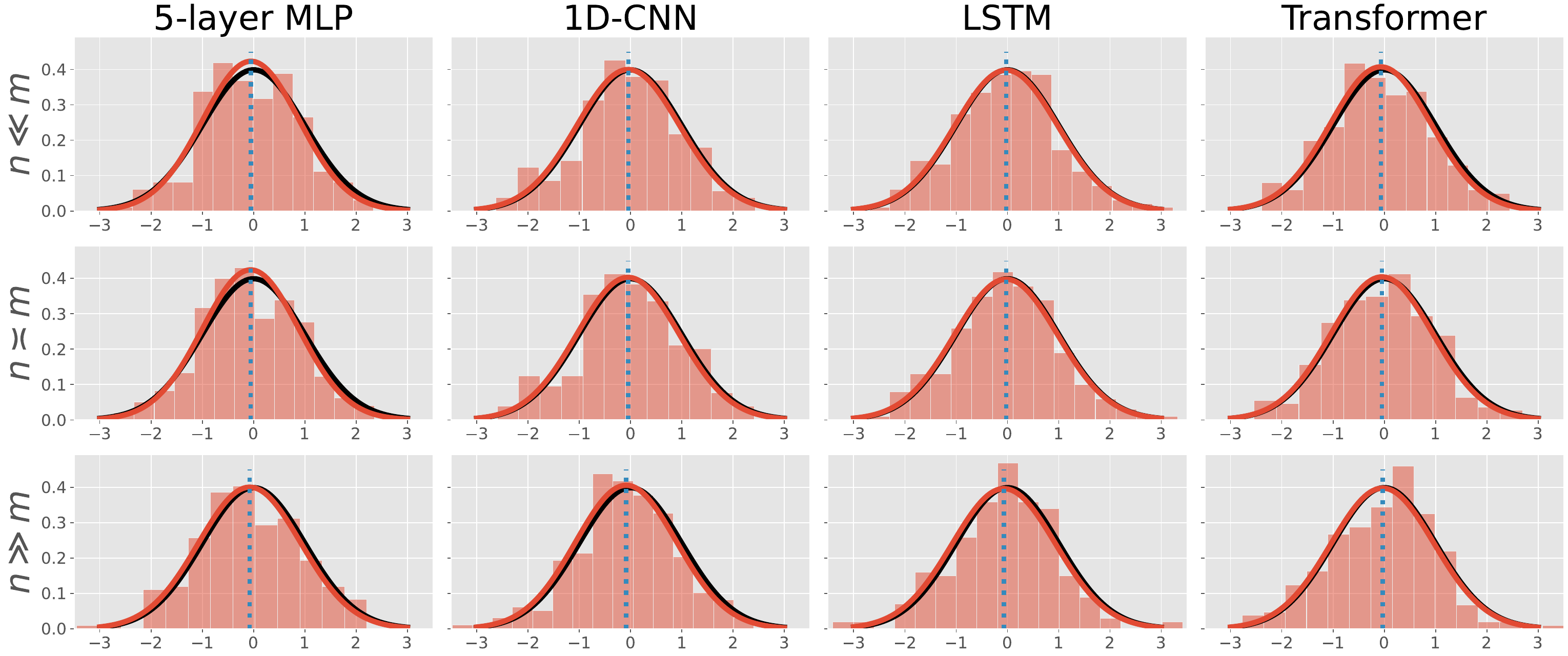}
  \end{center}
  \caption{
  Histograms of the empirical distribution of ${\sqrt{n}\xi_j^{(10)}}/{\|\bP_{\bB}^\perp\bxi^{(10)}\|}$ for $j\in S^\c$ under the multi-class classification model.
The solid black curve shows the $\cN(0,1)$ density.
The solid red curve represents a normal density fitted to the histograms, and the dotted blue line indicates the empirical mean.
}\label{fig:asyN_clsf}
\end{figure}

\begin{figure}[tbp]
  \begin{center}
    \includegraphics[clip,width=0.85\columnwidth]{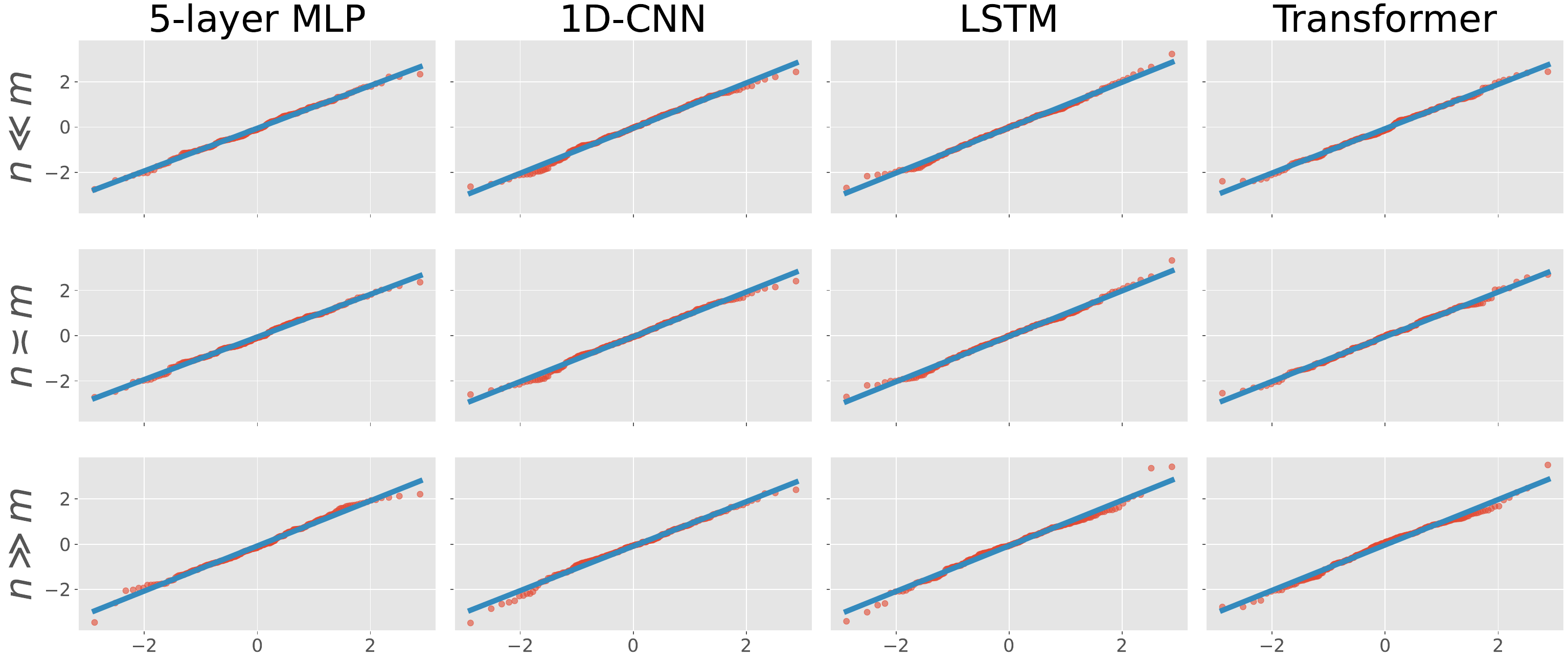}
  \end{center}
  \caption{
  QQ-plots of ${\sqrt{n}\xi_j^{(10)}}/{\|\bP_{\bB}^\perp\bxi^{(10)}\|}$ under the multi-class classification model.}\label{fig:asyN_qq_clsf}
\end{figure}

\begin{figure}[tbp]
  \begin{center}
    \includegraphics[clip,width=0.75\columnwidth]{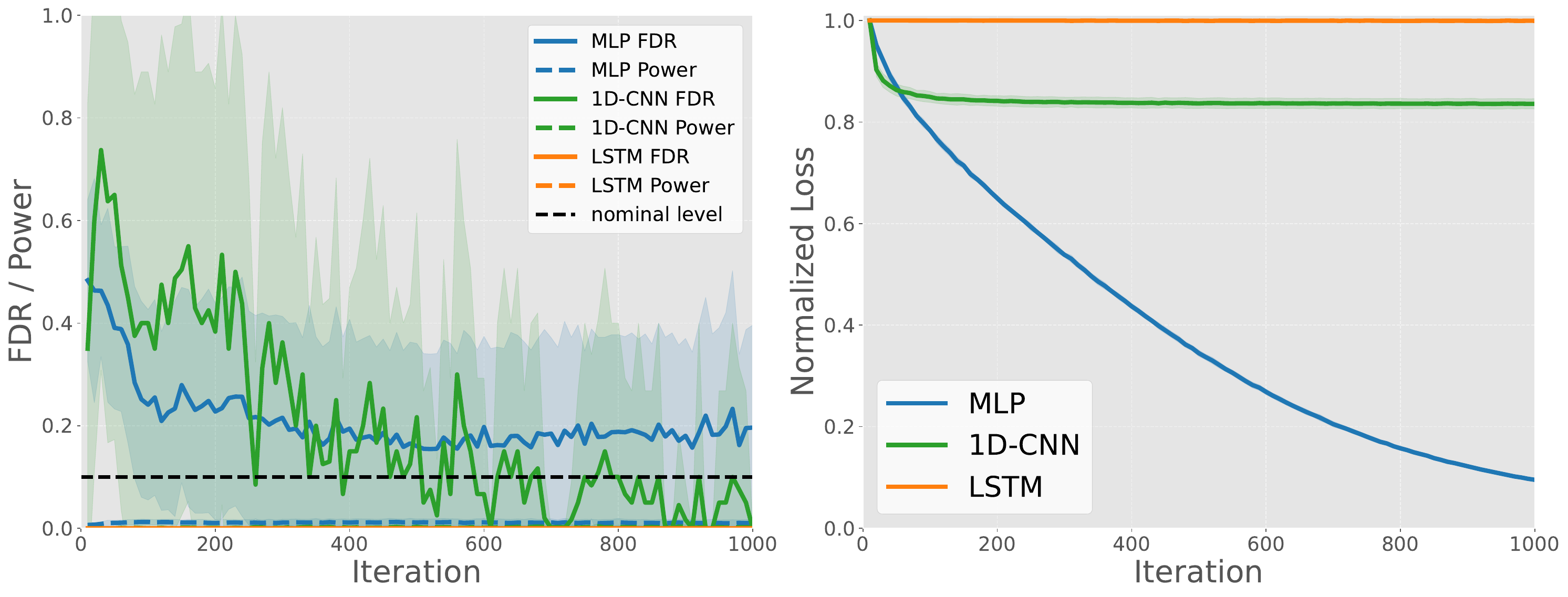}
  \end{center}
  \caption{Results for the FDR/power (left) and the training loss (right) when performing feature selection at each iteration under the multi-class classification model. 
  The solid curves represent averages over 20 independent runs, and the shaded areas indicate one standard deviation around the mean.
}\label{fig:iter_clsf}
\end{figure}

\end{document}